\documentclass{article}

\usepackage{microtype}
\usepackage{graphicx}
\usepackage{subcaption}
\usepackage{booktabs} 

\usepackage{hyperref}

\usepackage[preprint]{icml2026}

\usepackage{amsmath}
\usepackage{amssymb}
\usepackage{mathtools}
\usepackage{amsthm}

\usepackage[capitalize,noabbrev]{cleveref}

\theoremstyle{plain}
\newtheorem{theorem}{Theorem}[section]

\newtheorem{lemma}[theorem]{Lemma}

\theoremstyle{definition}

\newtheorem{assumption}[theorem]{Assumption}
\newtheorem{theoremApp}{Theorem}[section]
\theoremstyle{remark}

\usepackage{colortbl}
\usepackage{xcolor}
\usepackage{multirow}

\usepackage{graphicx}
\usepackage{subcaption}
\usepackage{listings}

\usepackage[textsize=tiny]{todonotes}
\definecolor{mygreen2}{RGB}{0,150,0}  
\definecolor{lightblue}{rgb}{0.8,0.9,1}
\definecolor{lightgray}{gray}{0.95}

\icmltitlerunning{Cross-Domain Offline Policy Adaptation via Selective Transition Correction}

\begin{document}

\twocolumn[
  \icmltitle{Cross-Domain Offline Policy Adaptation via Selective Transition Correction}



  \icmlsetsymbol{equal}{*}

  \begin{icmlauthorlist}
    \icmlauthor{Mengbei Yan}{tsinghua}
    \icmlauthor{Jiafei Lyu}{tencent}
    \icmlauthor{Shengjie Sun}{tsinghua}
    \icmlauthor{Zhongjian Qiao}{sch}
    \icmlauthor{Jingwen Yang}{tencent}
    \icmlauthor{Zichuan Lin}{tencent}
    \icmlauthor{Deheng Ye}{tencent}
    \icmlauthor{Xiu Li}{tsinghua}
  \end{icmlauthorlist}

  \icmlaffiliation{tsinghua}{Tsinghua Shenzhen International Graduate School, Tsinghua University}
  \icmlaffiliation{tencent}{Tencent}
  \icmlaffiliation{sch}{City University of Hong Kong}

  \icmlcorrespondingauthor{Mengbei Yan}{ymb23@mails.tsinghua.edu.cn}
  \icmlcorrespondingauthor{Xiu Li}{ li.xiu@sz.tsinghua.edu.cn}

  \icmlkeywords{Machine Learning, ICML}

  \vskip 0.3in
]



\printAffiliationsAndNotice{}  

\begin{abstract}
It remains a critical challenge to adapt policies across domains with mismatched dynamics in reinforcement learning (RL). In this paper, we study cross-domain offline RL, where an offline dataset from another similar source domain can be accessed to enhance policy learning upon a target domain dataset. Directly merging the two datasets may lead to suboptimal performance due to potential dynamics mismatches. Existing approaches typically mitigate this issue through source domain transition filtering or reward modification, which, however, may lead to insufficient exploitation of the valuable source domain data. Instead, we propose to modify the source domain data into the target domain data. To that end, we leverage an inverse policy model and a reward model to correct the actions and rewards of source transitions, explicitly achieving alignment with the target dynamics. Since limited data may result in inaccurate model training, we further employ a forward dynamics model to retain corrected samples that better match the target dynamics than the original transitions. Consequently, we propose the Selective Transition Correction (STC) algorithm, which enables reliable usage of source domain data for policy adaptation. Experiments on various environments with dynamics shifts demonstrate that STC achieves superior performance against existing baselines.
\end{abstract}

\section{Introduction}
\begin{figure*}[t]
\centering
\includegraphics[width=0.8\textwidth]{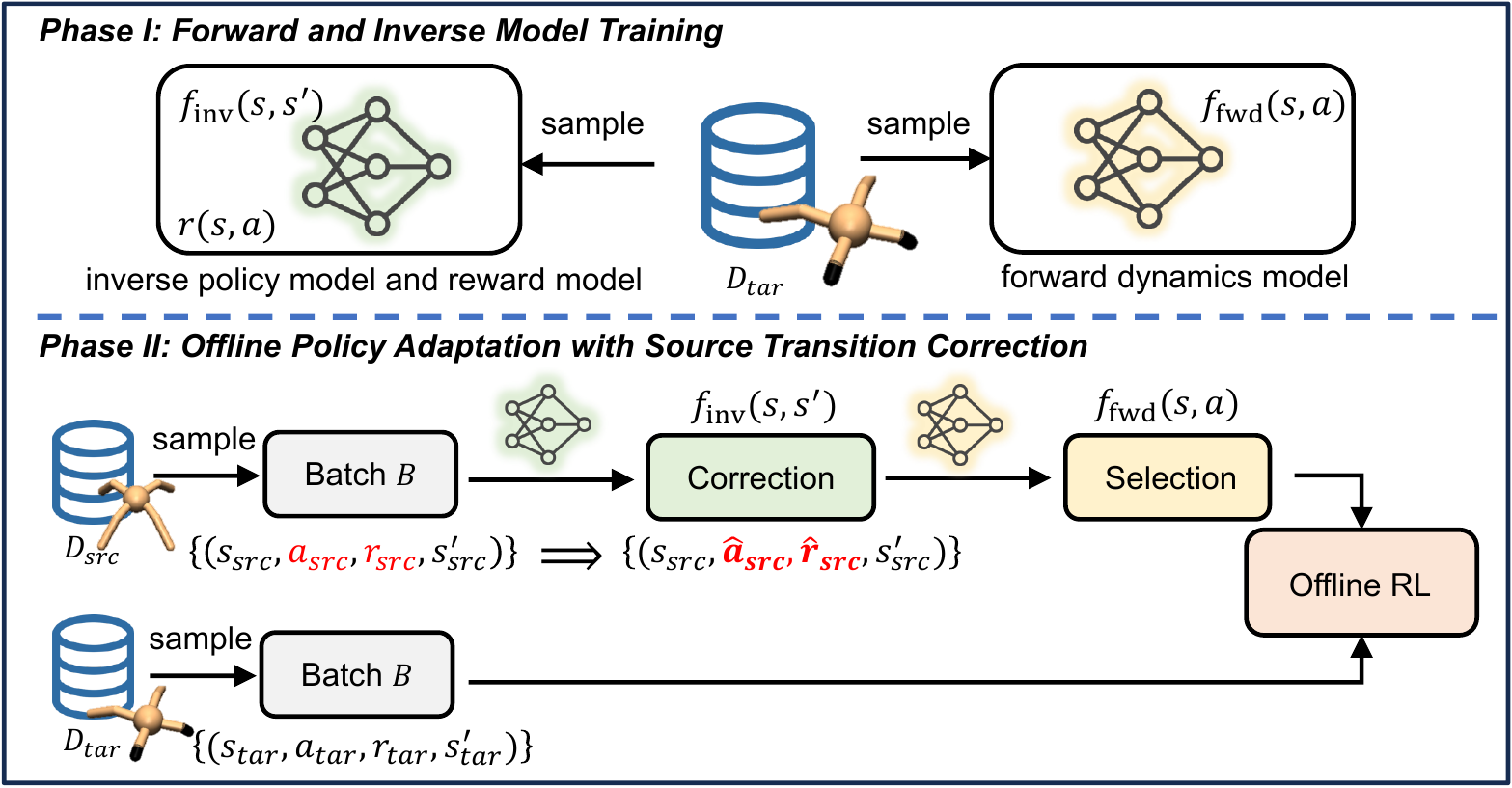} 
\caption{\textbf{Training pipeline of our proposed STC algorithm.} In \textbf{\emph{Phase I}}, we train the forward dynamics model $f_{\rm fwd}(s,a)$, the reward model $r(s,a)$, and the inverse policy model $f_{\rm inv}(s,s^\prime)$. These models are trained to capture the bidirectional dynamics transition information in the target domain dataset. In \textbf{\emph{Phase II}}, we sample data from $D_{\rm src}$ and $D_{\rm tar}$ to train an offline RL agent, where we correct the actions and rewards in the source domain transition tuple by using the inverse policy model. We further use the forward dynamics model to selectively correct source transitions to better align with the target domain.}
\label{fig1}
\end{figure*}

Reinforcement learning (RL) typically requires extensive interactions to train effective policies for a new task, which can be costly or infeasible in real-world applications \cite{Levine2020OfflineRL,eysenbach2021offdynamics,torne2024reconciling}. In contrast, humans can rapidly adapt to new but structurally similar tasks once they have mastered a related skill \cite{lyu2025crossdomain}. This motivates the design of RL agents capable of leveraging experience from a similar domain (e.g., a simulator) to enhance learning efficiency in the target domain, a setting commonly referred to as the policy adaptation problem \cite{Xu2023CrossDomainPA,lyu2024odrl}. A key challenge in this setting is the potential dynamics mismatch between the source domain and the target domain, which can significantly degrade the performance of the policy.

There are many researches focusing on the online policy adaptation setting where either the source or target domain is online. They fulfill policy adaptation by training domain classifiers \cite{eysenbach2021offdynamics}, filtering data via value difference \cite{Xu2023CrossDomainPA}, capturing representation mismatch \cite{lyu2024cross}, etc. However, in many real-world scenarios, online interactions can be costly or even unsafe. This motivates a shift of focus to the \emph{offline policy adaptation} problem, or cross-domain offline RL \cite{wen2024contrastive,lyu2025crossdomain}, where both the source domain and the target domain are offline. Existing cross-domain offline RL methods include filtering source domain data based on mutual information \cite{wen2024contrastive} or optimal transport \cite{lyu2025crossdomain}, augmenting source transition rewards by training domain classifiers \cite{liu2022dara}, etc. These approaches typically mitigate the dynamics mismatch between datasets from two domains by selecting source transitions that resemble target domain data or penalizing dissimilar ones. However, such strategies may keep few transitions and discard potentially useful transitions, thereby limiting the utilization of the source dataset for policy learning. 

To mitigate this issue, we propose to \emph{modify source transitions into target domain data} such that more source transitions can align with the target domain dataset, even those exhibiting substantial dynamics discrepancies. We leverage the inverse policy model trained on the target dataset to correct actions and rewards in the source domain dataset, which predicts the action label given the current state and the next state. We utilize the inverse policy model to replace the original source actions with ones that are more consistent with target dynamics. Based on the revised action, we approximately adjust the reward label using Taylor expansion and a trained reward model. We theoretically analyze the dynamics discrepancy of the corrected data against the target dataset and the value discrepancy on the corrected data and the source dataset. We further show the performance bound on the corrected data and the true target domain, which can be tighter if the source transitions are well corrected.

However, in practice, the target domain dataset is limited, which inevitably introduces approximation errors in the inverse policy model. Blindly correcting all source transitions may lead to performance degradation, as inaccurate predictions can produce poor corrected source transitions. To address this issue, we train a forward dynamics model based on the target domain dataset and introduce a selection mechanism that selectively corrects source dataset samples to achieve better alignment with the target dynamics. Combining the techniques above gives birth to our Selective Transition Correction (STC) algorithm, with its overall framework depicted in Figure \ref{fig1}. Empirical results on datasets with varied dynamics shifts show that STC exhibits competitive or better performance than prior strong baselines.

\section{Related Work}

\textbf{Offline Reinforcement Learning (RL).}
Offline RL \cite{Levine2020OfflineRL,Lange2012BatchRL} addresses the problem of learning policies from fixed datasets without further environment interaction. A key challenge of offline RL lies in the extrapolation error \cite{Fujimoto2019OffPolicyDR,Kumar2019StabilizingOQ}. Offline RL methods generally involve model-free methods \cite{xu2022constraints,Wu2021UncertaintyWA,an2021uncertainty,lyu2022mildly,kostrikov2022offline,yang2024exploration,tarasov2024revisiting,yeom2024exclusively}, and model-based methods \cite{Yu2021COMBOCO,matsushima2021deploymentefficient,Yu2020MOPOMO,Kidambi2020MOReLM,qiao2024sumo,liu2025leveraging}. These approaches typically assume access to large-scale datasets from a single domain that closely matches the target environment. In contrast, we consider a more challenging setting where the target domain data is limited, and we aim to leverage supplementary source domain data to enhance policy performance.

\noindent\textbf{Domain Adaptation in RL.} We focus on the cross-domain policy adaptation problem in RL \cite{eysenbach2021offdynamics,Xu2023CrossDomainPA,lyu2024odrl}, where the source domain and the target domain share the same state and action spaces but differ in their underlying dynamics. Effectively identifying and bridging the dynamics mismatch is a central challenge. Prior works have explored various techniques, including system identification \cite{Clavera2018LearningTA,Du2021AutoTunedST,Xie2022RobustPL}, meta-RL \cite{nagabandi2018learning,Raileanu2020FastAT}, domain randomization \cite{Slaoui2019RobustDR,Mehta2019ActiveDR,Vuong2019HowTP,Jiang2023VarianceRD}, and imitation learning \cite{Kim2019DomainAI,Hejna2020HierarchicallyDI,fickinger2022crossdomain}, etc.
However, the reliance on training environment distributions or expert demonstrations limits their practicality in many scenarios.
Recent methods in online domain adaptation setting address this by learning dynamics models for both domains \cite{desai2020imitation},  value-guided data filtering \cite{Xu2023CrossDomainPA} or dynamics-aware reward modification \cite{lyu2024cross,eysenbach2021offdynamics,van2024policy}. 
In the context of cross-domain offline RL, existing approaches often involve reward penalization \cite{liu2022dara}, dataset constraint \cite{liu2024beyond}, source transition filtering using mutual information \cite{wen2024contrastive} or optimal transport \cite{lyu2025crossdomain}, trajectory editing \cite{niu2024xted}, data augmentation \cite{guo2025mobody}, flow matching \cite{kong2025composite}, 
utilizing skill expansion and composition \cite{liu2025skill},
generating samples with a diffusion model \cite{van2025dmc}.
These methods often focus less on source domain data that is not close to the target domain. In contrast, we propose to selectively correct source transitions into target domain transitions to better align with target domain dynamics.

\section{Preliminaries}
RL problems can be formulated as a Markov Decision Process (MDP), defined by $\mathcal{M} = (\mathcal{S}, \mathcal{A}, P, r, \gamma)$, where $\mathcal{S}$, $\mathcal{A}$ denote the state and action spaces, $P(s'|s,a)$ is the transition dynamics, $r(s,a): \mathcal{S} \times \mathcal{A} \rightarrow \mathbb{R}$ is the scalar reward signal, $\gamma \in [0,1)$ is the discount factor. The objective of an RL agent is to learn a policy $\pi$ to maximize the expected discounted cumulative return $J(\pi)=\mathbb{E}_\pi \left[ \sum_{t=0}^{\infty} \gamma^t r(s_t, a_t) \right]$. Q-value is defined as $Q(s,a):=\mathbb{E}_\pi \left[ \sum_{t=0}^{\infty} \gamma^t r(s_t, a_t)|s,a \right]$.
In cross-domain RL \cite{lyu2024odrl}, we have a source domain $\mathcal{M}_{\text{src}} = (\mathcal{S}, \mathcal{A}, P_{\text{src}}, r, \gamma)$ and a target domain $\mathcal{M}_{\text{tar}} = (\mathcal{S}, \mathcal{A}, P_{\text{tar}}, r, \gamma)$, which share the state space, action space and reward function but differ in transition dynamics. We denote the transition dynamics in a domain $\mathcal{M}$ as $P_\mathcal{M}$, and $P^{\pi}_{\mathcal{M},t}(s)$ is the probability that the policy $\pi$ encounters the state $s$ at timestep $t$ in $\mathcal{M}$. Then we calculate the normalized probability that $\pi$ encounters the state-action pair $(s,a)$ in $\mathcal{M}$ as $\rho^{\pi}_{\mathcal{M}}(s, a) := (1 - \gamma) \sum_{t=0}^{\infty} \gamma^t P^{\pi}_{\mathcal{M}, t}(s) \pi(a|s)$. $P^\pi(\cdot|s) = \sum_a P(\cdot|s,a)\pi(a|s)$ is the transition dynamics induced by the policy $\pi$.
We consider the offline setting where we have access to a static source domain dataset 
 $D_\text{src}=\{(s^i_\text{src},a^i_\text{src},r^i_\text{src},s^{i+1}_\text{src})\}_{i=1}^N$ and a limited target domain dataset $D_{\text{tar}} = \left\{ \left(s^{i}_{\text{tar}}, a^{i}_{\text{tar}}, r^{i}_{\text{tar}}, s^{i+1}_{\text{tar}} \right) \right\}_{i=1}^{N'}$, where $N$ and $N'$ are the dataset sizes. Cross-domain offline RL aims to leverage the mixed dataset $D_\text{src} \bigcup D_\text{tar}$ to acquire good performance in the target domain. We assume that each dataset corresponds to an empirical MDP, where the source domain dataset induces $\widehat{\mathcal{M}}_{\text{src}}$ with dynamics $\widehat{P}_{\rm src}$, and the target domain dataset induces $\widehat{\mathcal{M}}_{\text{tar}}$ with dynamics $\widehat{P}_{\rm tar}$. We denote behavior policies in the source and target domain datasets as $\mu_{\rm src}$ and $\mu_{\rm tar}$, the true transition dynamics in the source and target domain as $P_{{\rm src}}$ and $P_{{\rm tar}}$, 
 and the transition dynamics in the corrected source domain dataset as $\widetilde{P}_{\rm src}$.

\section{Selective Source Transition Correction}


In this section, we describe key components in STC, which mainly contains two parts, (a) training an inverse policy model and a reward model in the target domain for source transition correction; (b) selectively correcting source domain transitions by using a forward dynamics model of the target domain dataset. We theoretically analyze the dynamics discrepancy behavior and the value discrepancy behavior given corrected data. We also provide performance bounds of a policy on the corrected data and the true target domain.

\vspace{-4pt}
\subsection{Source Transition Correction}
\vspace{-4pt}


To align source domain transitions with the target dynamics, we introduce an inverse policy model and a reward model to correct actions and rewards of the source data to make them \emph{target domain data}. The inverse policy model \( f_{\rm {inv}}: (s, s') \rightarrow a \) is optimized to predict the action that most likely incurs the observed next state under the target dynamics:
\begin{equation}
    \label{eq:inv}
    \mathcal{L}_{\rm {inv}} = \mathbb{E}_{(s_{\rm tar},a_{\rm tar},s_{\rm tar}^\prime) \sim {D}_{\rm tar}} \left[\left\|f_{\rm {inv}}(s_{\rm tar}, s^\prime_{\rm tar}) - a_{\rm tar}\right\|^2_2\right]
\end{equation}
As the inverse policy model captures the underlying dynamics of the target domain, we employ it as a surrogate to infer more target-consistent actions for state transitions from the source domain. Given a source transition \( (s_{\rm src}, a_{\rm src}, s_{\rm src}^\prime) \in {D}_{\rm src} \), where \( a_{\rm src} \) is the original source domain action, we apply the trained inverse model to produce a corrected action:
\begin{equation}
    \hat{a}_{\rm src} = f_\text{{inv}}(s_\text{{src}}, s_\text{{src}}').
    \label{eq:a}
\end{equation}
Since the reward of a transition is determined by both the state and the action, modifying the action necessitates a corresponding adjustment to the reward. To support this, we train a parametric reward model \( r(s, a) \) to approximate the true reward function in the target domain using the available offline target dataset, which is optimized via:
\begin{equation}
\label{eq:r_loss}
\mathcal{L}_{\text{rew}} = \mathbb{E}_{(s_{\rm tar}, a_{\rm tar}, r_{\rm tar}) \sim {D}_{\text{tar}}} \left[ \left( r(s_{\rm tar}, a_{\rm tar}) - r_{\rm tar} \right)^2 \right].
\end{equation}
The trained reward model is used to estimate the corrected reward for transitions in the source domain whose actions have been modified. For a transition \( (s_{\text{src}}, a_{\text{src}}, s'_{\text{src}}, r_{\rm src}) \) and its corrected action \( \hat{a}_{\rm src} \), we apply a first-order Taylor expansion around the original action to approximate the corrected reward \( \hat{r}_{\text{src}} \) as:
\begin{equation}
\label{eq:r}
\hat{r}_{\rm src} = r_{\rm src} + \alpha \cdot \nabla_a r(s_{\text{src}}, a)^\top|_{a=a_{\rm src}} (\hat{a}_{\rm src} - a_{\text{src}}),
\end{equation}
where \( \nabla_a r(s_{\text{src}}, a) \) denotes the gradient of the reward model with respect to the action, and \( \alpha \) is a tunable hyperparameter that scales the extent of reward adjustment. To ensure reward adjustment stability, the gradient is \( \ell_2 \)-normalized and clipped within a bounded range. This correction leverages the local smoothness of the reward function in action space and enables efficient reward estimation without directly evaluating out-of-distribution (OOD) actions.

We then construct the candidate corrected source domain transition \( (s_\text{{src}}, \hat{a}_\text{{src}}, s_\text{{src}}',\hat{r}_{\rm src}) \). If the inverse policy model is sufficiently accurate, the corrected transition is expected to better align with the underlying dynamics of the target domain compared to the original transition 
\((s_{\text{src}}, a_{\text{src}}, s_{\text{src}}', r_{\rm src})\). 
This correction process allows for the effective reutilization of source domain data that would otherwise be incompatible, 
by substituting their actions with ones that are more consistent with the target domain dynamics.

\subsection{Theoretical Analysis}
To demonstrate the rationality of source transition correction, we provide a theoretical analysis given corrected source data. We first impose the following assumptions that are required for further theoretical analysis. These assumptions are common and widely used in RL. Due to space constraints, all proofs are deferred to Appendix \ref{app:proofs}.
\begin{assumption}
\label{ass:dynamics}
    There exists $\epsilon>0$ such that $\|P_{\rm src}(\cdot|s,a) - P_{\rm tar}(\cdot|s,a)\|\le \epsilon, \forall\, (s,a)$.
\end{assumption}

\begin{assumption}
\label{ass:policy}
    The estimated inverse policy model $\hat{\pi}_{\rm inv}$ well approximates the true empirical inverse policy model $\pi_{\rm inv}^{\rm tar}$ such that the error between the empirical forward policy models in the corrected data and the target domain dataset is bounded, i.e., $\mathbb{E}\left[ \|\hat{\pi}(s) - \mu_{\rm tar}(s)\|_1 \right]\le \kappa$.
\end{assumption}

\begin{assumption}
\label{ass:reward}
    The reward function is bounded and is $L_r$-smooth, i.e., $\forall\, (s,a), \|\nabla_a r(s,a)\|\le L_r, |r(s,a)|\le r_{\rm max}$.
\end{assumption}
Assumption \ref{ass:dynamics} requires that the dynamics discrepancy between the source domain and the target domain is bounded, and Assumption \ref{ass:policy} assumes that the estimated inverse policy model well-fit the target domain. Theorem \ref{theo:transition} depicts the dynamics discrepancy between the corrected source domain dataset and the target domain dataset.

\begin{theorem}
\label{theo:transition}
    Denote the corrected source domain transition dynamics as $\widetilde{P}_{\rm src}$, then under Assumption \ref{ass:dynamics} and \ref{ass:policy}, the deviation between the corrected dynamics and the empirical target domain dynamics $\widehat{P}_{\rm tar}(\cdot|s,a)$ is bounded:
    \begin{equation*}
        \|\widetilde{P}_{\rm src}(\cdot|s,a) - \widehat{P}_{\rm tar}(\cdot|s,a)\| \le \kappa + \epsilon.
    \end{equation*}
\end{theorem}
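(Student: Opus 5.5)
The natural route is a triangle inequality that splits the deviation into a \emph{policy-correction} term and a \emph{dynamics-shift} term:
\begin{equation*}
\|\widetilde{P}_{\rm src}(\cdot|s,a) - \widehat{P}_{\rm tar}(\cdot|s,a)\| \le \|\widetilde{P}_{\rm src}(\cdot|s,a) - \bar{P}(\cdot|s,a)\| + \|\bar{P}(\cdot|s,a) - \widehat{P}_{\rm tar}(\cdot|s,a)\|.
\end{equation*}
Here $\bar{P}$ is an intermediate kernel: the source next-state distribution reached under the target behavior policy $\mu_{\rm tar}$, rather than under the corrected (inverse-model-induced) policy $\hat{\pi}$. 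The first term should then be controlled by Assumption \ref{ass:policy} and the second by Assumption \ref{ass:dynamics}.

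First I will make precise how the corrected data defines $\widetilde{P}_{\rm src}$. Each corrected tuple $(s_{\rm src},\hat a_{\rm src},s'_{\rm src})$ keeps the source pair $(s,s')$ but relabels the action through $\hat\pi_{\rm inv}$. Hence the joint law of $(a,s')$ given $s$ in the corrected data can be written as $\hat\pi(a|s)\,\widetilde{P}_{\rm src}(s'|s,a)$, where $\hat\pi$ is the empirical forward policy of the corrected data. Likewise, the target dataset gives $\mu_{\rm tar}(a|s)\widehat{P}_{\rm tar}(s'|s,a)$. This is consistent with the paper's notation $P^\pi(\cdot|s)=\sum_a P(\cdot|s,a)\pi(a|s)$.

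For the first term, the corrected kernel and the intermediate kernel differ only through the action distribution. Writing both as mixtures over actions and using $\|\sum_a (\hat\pi(a|s)-\mu_{\rm tar}(a|s))P(\cdot|s,a)\|_1 \le \|\hat\pi(\cdot|s)-\mu_{\rm tar}(\cdot|s)\|_1$, which holds because each $P(\cdot|s,a)$ has unit mass, gives a bound of $\|\hat\pi(s)-\mu_{\rm tar}(s)\|_1$. Taking expectation yields $\kappa$ by Assumption \ref{ass:policy}. For the second term, I identify the source transitions with $P_{\rm src}$ and the target dataset kernel with $P_{\rm tar}$, treating the empirical kernels as their population counterparts. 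The discrepancy is then at most $\epsilon$ pointwise by Assumption \ref{ass:dynamics}, and averaging over actions preserves the bound by convexity of the norm. Summing the two pieces gives $\kappa+\epsilon$.

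The main obstacle is the first step: giving $\widetilde{P}_{\rm src}(\cdot|s,a)$ a meaning that lets the action-level bound $\kappa$ transfer to a kernel conditioned on $(s,a)$. Strictly, conditioning on $a$ removes the policy. So I would first interpret the statement at the level of policy-induced kernels $P^{\hat\pi}$ versus $P^{\mu_{\rm tar}}$, together with an expectation over $s$. I would also absorb the empirical-versus-true gap into $\epsilon$, since Assumption \ref{ass:policy} already refers to empirical quantities. Only then would I pass to the stated $(s,a)$-form, noting that the bound is understood in this averaged sense.
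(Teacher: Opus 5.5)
Your proposal is correct and follows the paper's proof essentially step for step. The paper adds and subtracts exactly your intermediate kernel $\sum_a P_{\rm src}(\cdot|s,a)\mu_{\rm tar}(a|s)$. It bounds the policy term by $\sum_a|\hat\pi(a|s)-\mu_{\rm tar}(a|s)|\le\kappa$ using $P_{\rm src}(s'|s,a)\le 1$, and the dynamics term by $\epsilon$ using $\sum_a\mu_{\rm tar}(a|s)=1$. The paper also silently identifies the empirical kernels with $P_{\rm src}$ and $P_{\rm tar}$, and it applies Assumption \ref{ass:policy} pointwise rather than in expectation. Your explicit caveats about the averaged, policy-induced reading are therefore more careful than the original argument, not a deviation from it.
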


Furthermore, we show the value discrepancy of $Q^\pi(s,a)$ given a policy $\pi$ between the corrected data and raw data.
\begin{theorem}
\label{theo:q_value}
    Given Assumption \ref{ass:reward} and assume that the source domain rewards are corrected via $\hat{r}(s_{\rm src},a_{\rm src}) = r(s_{\rm src},a_{\rm src}) + \nabla_a r(s_{\rm src},a)^\top|_{a=a_{\rm src}}(\hat{a}_{\rm src} - a_{\rm src})$, where $\hat{a}_{\rm src}\sim\mu_{\rm tar}(\cdot|s_{\rm src})$. Then given any $(s,a)$, the deviation of Q-values on the corrected empirical source domain $\widetilde{\mathcal{M}}_{\rm src}$ and the raw empirical source domain $\widehat{\mathcal{M}}_{\rm src}$ is bounded:
    \begin{equation*}
        \left|Q^{\pi}_{\widetilde{\mathcal{M}}_{\rm src}}(s,a) - Q^{\pi}_{\widehat{\mathcal{M}}_{\rm src}}(s,a)\right| \le \dfrac{2L_r}{1-\gamma}D_{\rm TV}(\mu_{\rm src}\| \mu_{\rm tar}).
    \end{equation*}
\end{theorem}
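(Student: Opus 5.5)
The plan is to treat $\widetilde{\mathcal{M}}_{\rm src}$ and $\widehat{\mathcal{M}}_{\rm src}$ as two MDPs that share the empirical source transition kernel $\widehat{P}_{\rm src}$. Correction changes only the action label and the reward attached to each stored $(s_{\rm src},s'_{\rm src})$ pair, and the behaviour-policy shift is absorbed into the reward difference. The Q-values then differ only through rewards, so a standard simulation-lemma style telescoping argument on rewards alone should suffice.

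\textbf{Step 1 (per-step reward gap).} By the correction rule,
\[
\hat r(s,a_{\rm src}) - r(s,a_{\rm src}) = \nabla_a r(s,a)^\top|_{a=a_{\rm src}}(\hat a_{\rm src}-a_{\rm src}).
\]
Taking the expectation over $a_{\rm src}\sim\mu_{\rm src}(\cdot|s)$ and $\hat a_{\rm src}\sim\mu_{\rm tar}(\cdot|s)$, I will bound this reward gap at each state by $2L_r D_{\rm TV}(\mu_{\rm src}(\cdot|s)\|\mu_{\rm tar}(\cdot|s))$.

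The route I would try is not to bound $\|\hat a-a\|$ directly, since that gives a Wasserstein-type quantity rather than a TV distance. Instead, note that the first-order correction, together with the $L_r$-smoothness in Assumption \ref{ass:reward}, lets me read the gap as a difference of expectations of a function of the action under $\mu_{\rm tar}$ versus $\mu_{\rm src}$. I then use the duality bound
\[
\bigl|\mathbb{E}_{\mu_{\rm tar}}g-\mathbb{E}_{\mu_{\rm src}}g\bigr|\le 2\|g\|_\infty D_{\rm TV}.
\]
The function $g$ here is the linearised reward increment, whose sup norm is controlled by $L_r$ on a normalised action set; this uses the paper's $\ell_2$-normalisation and clipping of the gradient step. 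This step is the main obstacle, because the stated bound has no action-diameter factor. I would make explicit the implicit convention that the clipped increment is bounded by $L_r$ in magnitude. That convention yields $\sup_s|\mathbb{E}[\hat r-r]|\le 2L_r D_{\rm TV}(\mu_{\rm src}\|\mu_{\rm tar})$, where $D_{\rm TV}$ is understood as a supremum, or as an expectation under the relevant state distribution.

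\textbf{Step 2 (propagate through the Bellman equation).} Write the two Bellman equations
\[
Q_{\widetilde{\mathcal M}}^\pi=\tilde r+\gamma \widehat P_{\rm src}^\pi Q_{\widetilde{\mathcal M}}^\pi, \qquad Q_{\widehat{\mathcal M}}^\pi=r+\gamma \widehat P_{\rm src}^\pi Q_{\widehat{\mathcal M}}^\pi.
\]
Because the kernels coincide, subtracting gives $\Delta=(\tilde r-r)+\gamma\widehat P^\pi_{\rm src}\Delta$, so
\[
\Delta=\sum_{t\ge0}\gamma^t(\widehat P^\pi_{\rm src})^t(\tilde r-r).
\]
Taking sup norms, $\|\Delta\|_\infty\le \frac{1}{1-\gamma}\|\tilde r-r\|_\infty$. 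Combining this with Step 1 gives the claimed bound $\frac{2L_r}{1-\gamma}D_{\rm TV}(\mu_{\rm src}\|\mu_{\rm tar})$ for every $(s,a)$.

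Finally, I would remark that no dynamics term appears, in contrast with Theorem \ref{theo:transition}. This is because both MDPs are built from the same empirical $(s,s')$ pairs, so the only source of discrepancy is the reward relabelling.
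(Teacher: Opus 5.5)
Your Step 1 does not go through, and the obstruction is more basic than the action-diameter factor you single out.

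The increment $\nabla_a r(s,a_{\rm src})^\top(\hat a_{\rm src}-a_{\rm src})$ has its gradient anchored at $a_{\rm src}$. It is therefore a function of the pair $(a_{\rm src},\hat a_{\rm src})$, and it is not of the form $g(\hat a_{\rm src})-g(a_{\rm src})$ for any single test function $g$. Take independent draws and set $g_{a_0}(b):=\nabla_a r(s,a_0)^\top b$. The mean of the increment then splits into two parts:
\begin{itemize}
\item the term $\mathbb{E}_{a_0\sim\mu_{\rm src}}[\mathbb{E}_{\mu_{\rm tar}}g_{a_0}-\mathbb{E}_{\mu_{\rm src}}g_{a_0}]$, which your duality bound does control;
\item the residual $\mathbb{E}_{\mu_{\rm src}}[\nabla_a r]^\top\mathbb{E}_{\mu_{\rm src}}[a]-\mathbb{E}_{\mu_{\rm src}}[\nabla_a r(s,a)^\top a]$, a covariance term that does not involve $\mu_{\rm tar}$ at all.
\end{itemize}
The deeper problem is that the hypothesis $\hat a_{\rm src}\sim\mu_{\rm tar}(\cdot|s_{\rm src})$ fixes only a marginal. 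The reward gap depends on the joint law of $(a_{\rm src},\hat a_{\rm src})$, and no distance between the marginals can control it.

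Here is a concrete counterexample. Take $\mathcal{A}=[-1,1]$, $r(s,a)=a^2/2$ (so $L_r=1$), and $\mu_{\rm src}=\mu_{\rm tar}$ uniform on $\{-1,1\}$.
\begin{itemize}
\item Independent draws give $\mathbb{E}[\hat r-r]=\mathbb{E}[a_{\rm src}\hat a_{\rm src}]-1=-1$.
\item The choice $\hat a_{\rm src}=-a_{\rm src}$ still has marginal $\mu_{\rm tar}$. It shifts every stored reward by $-2$, and hence the Q-values by $2/(1-\gamma)$.
\end{itemize}
Meanwhile $D_{\rm TV}(\mu_{\rm src}\|\mu_{\rm tar})=0$. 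So with only the marginal of $\hat a_{\rm src}$ specified, the claimed bound is false, and no choice of test function can rescue Step 1.

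The missing ingredient is a coupling hypothesis. For example, assume $(a_{\rm src},\hat a_{\rm src})$ is a maximal coupling, so that $\Pr(\hat a_{\rm src}\neq a_{\rm src}\mid s)=D_{\rm TV}(\mu_{\rm src}(\cdot|s)\|\mu_{\rm tar}(\cdot|s))$. Off this event the increment vanishes. On it, the increment is at most $L_r\,\mathrm{diam}(\mathcal{A})$, or at most $L_r$ under your clipping convention. This gives $\mathbb{E}|\hat r-r|\le L_r\,\mathrm{diam}(\mathcal{A})\,D_{\rm TV}$, which is where a factor $2$ can legitimately come from (e.g.\ $\mathcal{A}=[-1,1]$).

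Even then your Step 2 needs repair. Step 1 only yields an average over $a_{\rm src}\sim\mu_{\rm src}$. Step 2, however, uses $\|\tilde r-r\|_\infty$, a supremum over every $(s,a)$ that $\pi$ can visit. The conditional mismatch probability $\Pr(\hat a_{\rm src}\neq a\mid s,a_{\rm src}=a)$ can be close to $1$ at actions where $\mu_{\rm tar}(a|s)\ll\mu_{\rm src}(a|s)$, even when $D_{\rm TV}$ is tiny. For arbitrary $\pi$ you must therefore do one of two things: replace $D_{\rm TV}$ by $\sup_{s,a}\Pr(\hat a_{\rm src}\neq a\mid s,a_{\rm src}=a)$, or state the bound on average under the data distribution.

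For comparison, the paper's proof follows the same outline as your Step 2: a discounted sum of per-step reward gaps, since only rewards change. It closes Step 1 with $|\hat r-r|\le L_r\|\hat a_{\rm src}-a_{\rm src}\|$ and then simply equates $\sum\|a_{\rm tar}-a_{\rm src}\|$ with $2D_{\rm TV}(\mu_{\rm tar}\|\mu_{\rm src})$. That is exactly the distance-versus-TV conflation you anticipated, so the paper does not supply the missing coupling step either.
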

Theorem \ref{theo:q_value} shows that the deviation of Q-values on $\widehat{\mathcal{M}}_{\rm src}$ and $\widetilde{\mathcal{M}}_{\rm src}$ is bounded by the total variation deviation between the behavior policies in the source domain dataset and target domain dataset. This ensures that the corrected value function well reflects the dynamics discrepancy between the two domains and verifies the necessity of reward correction. To see how transition correction affects the performance of the agent, we derive a concrete performance bound of a policy given the corrected source domain data and the true target domain in Theorem \ref{theo:performance}.


\begin{theorem}[Finite data bound]
\label{theo:performance}
    Denote $\widetilde{\mathcal{M}}_{\rm src}$ as the corrected empirical source domain MDP, $n$ is the size of the target domain dataset, $C_1 = \frac{\gamma r_{\rm max}|\mathcal{S}|} {\sqrt{2}(1-\gamma)^2}$, $C_2 = |\mathcal{S}\times\mathcal{A}\times\mathcal{S}|$. Then under Assumption \ref{ass:dynamics}-\ref{ass:reward}, for any policy $\pi$, the following bound holds with probability at least $1-\delta$:
    \begin{equation*}
        J_{\widetilde{\mathcal{M}}_{\rm src}}(\pi) - J_{\mathcal{M}_{\rm tar}}(\pi) \ge - \dfrac{\gamma r_{\rm max}(\kappa + \epsilon)}{(1-\gamma)^2} - C_1\sqrt{\dfrac{1}{n}\ln\dfrac{2C_2}{\delta}}.
    \end{equation*}
\end{theorem}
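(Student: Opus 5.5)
The plan is to split the gap through the empirical target MDP $\widehat{\mathcal{M}}_{\rm tar}$:
\begin{equation*}
J_{\widetilde{\mathcal{M}}_{\rm src}}(\pi) - J_{\mathcal{M}_{\rm tar}}(\pi) = \bigl[J_{\widetilde{\mathcal{M}}_{\rm src}}(\pi) - J_{\widehat{\mathcal{M}}_{\rm tar}}(\pi)\bigr] + \bigl[J_{\widehat{\mathcal{M}}_{\rm tar}}(\pi) - J_{\mathcal{M}_{\rm tar}}(\pi)\bigr].
\end{equation*}
Each bracket will be bounded from below by a simulation-lemma argument. The first bracket is controlled by Theorem~\ref{theo:transition}, which gives the $(\kappa+\epsilon)$ term. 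The second bracket is controlled by a concentration inequality for the empirical dynamics, which gives the $C_1\sqrt{\cdot}$ term.

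\textbf{Simulation lemma.} For two MDPs $\mathcal{M}_1,\mathcal{M}_2$ with the same reward, I will use the telescoping/Bellman-difference identity
\begin{equation*}
J_{\mathcal{M}_1}(\pi)-J_{\mathcal{M}_2}(\pi)=\frac{\gamma}{1-\gamma}\,\mathbb{E}_{(s,a)\sim\rho^\pi_{\mathcal{M}_1}}\Bigl[\mathbb{E}_{s'\sim P_1}V^\pi_{\mathcal{M}_2}(s')-\mathbb{E}_{s'\sim P_2}V^\pi_{\mathcal{M}_2}(s')\Bigr].
\end{equation*}
Since $|V^\pi|\le r_{\max}/(1-\gamma)$, Hölder gives
\begin{equation*}
|J_{\mathcal{M}_1}-J_{\mathcal{M}_2}|\le \frac{\gamma r_{\max}}{(1-\gamma)^2}\sup_{s,a}\|P_1(\cdot|s,a)-P_2(\cdot|s,a)\|_1 .
\end{equation*}
One could sharpen this by centering $V$, which would give a factor $1/2$, but I will not need that. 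Applying the bound with $P_1=\widetilde{P}_{\rm src}$ and $P_2=\widehat{P}_{\rm tar}$, Theorem~\ref{theo:transition} directly yields the first bracket $\ge -\gamma r_{\max}(\kappa+\epsilon)/(1-\gamma)^2$. The reward mismatch between the corrected data and the target needs care. I will treat the corrected rewards as equal to the shared reward $r$, since the reward function is common across domains and the correction is designed to evaluate $r$ at the corrected action. If a residual reward term appears, Theorem~\ref{theo:q_value} and Assumption~\ref{ass:reward} are where I would control it. The statement as written has no such term, so I will adopt the idealized view.

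\textbf{Concentration step.} This is the main obstacle, mainly in making the constants match $C_1$ and $C_2$. For each $(s,a,s')$, $\widehat{P}_{\rm tar}(s'|s,a)$ is an empirical frequency. By Hoeffding's inequality,
\begin{equation*}
\bigl|\widehat{P}_{\rm tar}(s'|s,a)-P_{\rm tar}(s'|s,a)\bigr|\le\sqrt{\tfrac{1}{2n}\ln\tfrac{2}{\delta'}}
\end{equation*}
with probability $1-\delta'$. A union bound over all $C_2=|\mathcal{S}\times\mathcal{A}\times\mathcal{S}|$ triples with $\delta'=\delta/C_2$ makes this hold simultaneously with probability $1-\delta$. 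Summing over $s'$ then gives
\begin{equation*}
\|\widehat{P}_{\rm tar}(\cdot|s,a)-P_{\rm tar}(\cdot|s,a)\|_1\le |\mathcal{S}|\sqrt{\tfrac{1}{2n}\ln\tfrac{2C_2}{\delta}}.
\end{equation*}
Plugging this into the simulation lemma bound produces exactly
\begin{equation*}
\frac{\gamma r_{\max}|\mathcal{S}|}{\sqrt2(1-\gamma)^2}\sqrt{\tfrac1n\ln\tfrac{2C_2}{\delta}}=C_1\sqrt{\tfrac1n\ln\tfrac{2C_2}{\delta}},
\end{equation*}
which confirms the intended constants. The subtlety I would flag is that $n$ really should be the per-$(s,a)$ count. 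I would follow the paper's convention of treating $n$ as the sample size available at each pair.

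Adding the two lower bounds gives the claimed inequality on the probability-$(1-\delta)$ event.
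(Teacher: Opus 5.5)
Your proposal is correct and follows essentially the same route as the paper's proof: the split through $\widehat{\mathcal{M}}_{\rm tar}$, the telescoping lemma with $|V^\pi|\le r_{\rm max}/(1-\gamma)$, Theorem~\ref{theo:transition} for the first term, and for the second, Hoeffding plus a union bound over the $|\mathcal{S}\times\mathcal{A}\times\mathcal{S}|$ triples together with $\|\cdot\|_1\le|\mathcal{S}|\,\|\cdot\|_\infty$, which gives exactly $C_1$ and $C_2$. The two caveats you flag are genuine, and the paper's proof silently makes the same idealizations: first, the corrected rewards $\hat{r}$ differ from $r$, so the dynamics-only telescoping lemma does not literally apply to $\widetilde{\mathcal{M}}_{\rm src}$; second, $n$ really has to be the per-$(s,a)$ sample count.
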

The above bound indicates that the performance difference of a policy $\pi$ in two domains is decided by the policy estimation error $\kappa$ (other components are constants). The lower bound becomes tight (i.e, the policy can closely match its performance in the true target domain by using corrected source domain data) if the inverse policy is well-trained (i.e., $\kappa$ is small) and large vice versa. It also highlights the necessity of training a good inverse policy model.

\subsection{Selective Correction Mechanism}

Since \( {D}_{\rm tar} \) contains limited data, the learned inverse policy model may be less reliable in OOD regions. This hypothesis is supported by preliminary experiments, where we attempt to apply action correction uniformly across all source domain transitions. While this approach yields performance improvements in some environments, it leads to significant degradation in others. This suggests that, in certain cases, corrections produced by the inverse policy and reward model may not provide consistent benefits and could potentially introduce errors for policy learning.

To address this challenge, we introduce a selective correction mechanism that operates in a conservative manner, correcting source-domain samples only when the model is (comparatively) confident about its prediction. To fulfill that, we quantify the dynamic discrepancy between a transition and the target domain and use it as a metric to decide whether the inverse policy model can be reliable. It motivates us to additionally train a forward dynamics model upon the target domain dataset, which predicts the next state difference given the current state-action pair, by minimizing the following loss:
{
\begin{equation}
\label{eq:fwd}
\mathcal{L}_{\text{fwd}} =\\ \mathbb{E}_{(s_{\rm tar}, a_{\rm tar}, s'_{\rm tar}) \sim {D}_\text{tar}} \left[ \left\| f_{\text{fwd}}(s_{\rm tar}, a_{\rm tar}) - (s'_{\rm tar} - s_{\rm tar}) \right\|^2_2 \right]
\end{equation}
}
For the original source transition $(s_\text{{src}}, {a}_\text{{src}}, s_\text{{src}}',{r}_{\rm src})$ and its corrected counterpart $(s_\text{{src}}, \hat{a}_\text{{src}}, s_\text{{src}}',\hat{r}_{\rm src})$, we compute their respective \textit{dynamics discrepancies} with respect to the target domain, denoted by \( \varepsilon_{\text{orig}} \) and \( \varepsilon_{\text{corr}} \). These discrepancies are defined as the prediction errors of a forward dynamics model trained on the target dataset:
\begin{equation}
\begin{aligned}
\varepsilon_{\text{orig}} = \left\| f_{\text{fwd}}(s_\text{src}, a_\text{src}) - (s_\text{src}' - s_\text{src}) \right\|^2, \\
\varepsilon_{\text{corr}} = \left\| f_{\text{fwd}}(s_\text{src}, \hat{a}_{\rm src}) - (s_\text{src}' - s_\text{src}) \right\|^2.
\end{aligned}
\label{eq:error}
\end{equation}
The corrected transition is adopted only if its dynamics discrepancy is substantially smaller than that of the original transition, formally defined as:
\begin{equation}
\label{eq:select}
\tilde{\tau}_\text{{src}} = 
\begin{cases}
(s_\text{{src}}, \hat{a}_\text{{src}}, s_\text{{src}}',\hat{r}_{\rm src}), & \text{if } \varepsilon_{\text{corr}} < \lambda \cdot \varepsilon_{\text{orig}}, \\
(s_\text{{src}}, {a}_\text{{src}}, s_\text{{src}}',{r}_{\rm src}), & \text{otherwise},
\end{cases}
\end{equation}
where \( \lambda \) is a tunable threshold hyperparameter. Then we construct the corrected source domain dataset \( \widetilde{{D}}_\text{src} \) using such selectively corrected transitions:
\begin{equation}
    \widetilde{{D}}_\text{src} = \{ \tilde{\tau}_\text{{src}} \mid (s_\text{src}, a_{\text{src}},r_\text{src}, s'_\text{src}) \in {D}_{src}\}.
\label{eq:select1}
\end{equation}
Finally, we use \( \widetilde{{D}}_{\rm src} \) together with the original target data \( {D}_{\rm tar} \) to train the final policy, ${D}_{\text{train}} = {D}_\text{tar} \bigcup \widetilde{{D}}_\text{src}$. Note that when the correction is rejected, the original source transition is retained for training since we believe there are still some underlying shared behavior embedded in those data that can be beneficial for policy learning. By mutually constraining each other, the inverse and forward models help prevent performance degradation caused by errors from a single model. This selective correction mechanism enables more stable and conservative corrections, producing more reliable data for offline learning.

\subsection{Actor-Critic Learning}
After constructing the training dataset $ {D}_{\text{train}}$, we train the policy under an offline actor-critic framework. We optimize the Q-function \( Q_\theta \) via temporal difference (TD) learning:
\begin{equation}
\label{eq:update_q}
\mathcal{L}_Q(\theta) = \mathbb{E}_{(s, a, r, s') \sim {D}_{\text{train}}} \left[ \left( Q_{\theta}(s, a) - y  \right)^2 \right],
\end{equation}
where $y = r + \gamma \min_{j=1,2}Q_{\theta^-_j}(s', \pi(s'))$ is the target value, \( \theta^- \) is the target $Q$-network parameters. To mitigate the distribution shift issue and prevent the agent from exploiting OOD actions, we regularize the policy learning with the Q-value-weighted behavior cloning: 
\begin{align}
\label{eq:update_p}
\mathcal{L}_\pi(\phi) &= - \mathbb{E}_{(s,a) \sim {D}_{\text{train}}} \Big[ 
  \eta Q_\theta(s, \pi_\phi(s)) \nonumber \\
& - \beta \cdot \exp\left( \eta Q_\theta(s, a) 
\right) \| \pi_\phi(s) - a \|^2_2 \Big],
\end{align}
where $\eta =
\frac{1}{\frac{1}{n} \sum^n_{i=1} \left| Q_\theta(s_i, a_i)\right|}$ is a batch-wise scaling factor and $n$ denotes the batch size, $\beta$ is a hyperparameter used to balance the behavior regularization error and Q-loss. We use Q-values as weights for behavior cloning loss to inform the agent the importance of each transition, akin to IQL \cite{kostrikov2022offline}. We summarize the pseudocode for STC in Algorithm \ref{alg:algorithm}, which can be found in Appendix \ref{appsec:pesudocode}.


\section{Experiments}
\label{sec:Experiments}
In this section, we evaluate the effectiveness of our method for offline policy adaptation through experiments in varied environments with dynamics shifts and dataset qualities. We additionally conduct a visualization study to validate the reliability of STC in correcting source transitions. Moreover, ablation studies on key hyperparameters are performed to further understand the hyperparameter sensitivity of STC. Considering the page limit, please check more experimental results in Appendix \ref{appsec:additionalresults}. 

\subsection{Main Results}
\paragraph{Tasks and datasets.} We consider three kinds of dynamics shifts, including gravity shift, friction shift, and morphology shift, for four tasks (\textit{ant, hopper, halfcheetah, walker2d}) from ODRL \cite{lyu2025crossdomain} to comprehensively evaluate the cross-domain offline policy adaptation ability. The gravity shift modifies the strength of the gravity while keeping its direction unchanged. The friction shift is introduced by adjusting the static, dynamic, and rolling friction components. The morphology shift modifies the size of specific limbs or torsos of the simulated robot in the target domain. As we focus on cross-domain offline RL setting with limited target domain data, we use the original environments as source domain and use those modified environments as target domain. We adopt the MuJoCo “-v2” datasets from D4RL \cite{Fu2020D4RLDF} for source domain datasets and use ODRL datasets in modified environments as target domain datasets (only \textbf{5000} transitions). We adopt ODRL medium and expert datasets and construct medium-expert datasets by selecting 2 trajectories from medium datasets and 3 trajectories from expert datasets. For source domain datasets, we adopt medium, medium-replay and medium-expert datasets. We conduct experiments across various combinations of data qualities and dynamics shifts. All algorithms are trained for 1M gradient steps across 5 random seeds.

\begin{table*}[!htb]

  \caption{\textbf{Performance comparison under distinct dynamics shift}. half = halfcheetah, hopp = hopper, walk = walker2d, med = medium, r = replay, e = expert. \textbf{Source} means the source domain dataset, and \textbf{Target} indicates the target domain dataset quality. The normalized average scores in the target domain across 5 seeds are reported and $\pm$ captures the standard deviation. We highlight the best cell.}
\label{tab:main_results}
  \centering
  \footnotesize
  \begin{tabular}{lll|llllllllll}
    \toprule
    Type & Source& Target  & IQL & DARA & BOSA & SRPO & IGDF & OTDF & STC (ours) \\
    \midrule
    \multirow{12}{*}{Gravity} & half-med &med & 39.6$\pm$3.3 & 41.2$\pm$3.9 & 38.9$\pm$4.0 & 36.9$\pm$4.5 & 36.6$\pm$5.5 & 40.7$\pm$7.7 &\cellcolor{lightblue}\textbf{42.4}$\pm$5.3 \\
    & half-med-r&med & 20.1$\pm$5.0 & 17.6$\pm$6.2 & 20.0$\pm$4.9 & 17.5$\pm$5.2 & 14.4$\pm$2.2 &  21.5$\pm$6.5 & \cellcolor{lightblue}\textbf{26.7}$\pm$2.2\\
    & half-med-e &med& 38.6$\pm$6.0 & 37.8$\pm$3.3 & 41.8$\pm$5.1 & \cellcolor{lightblue}\textbf{42.5}$\pm$2.3 & 37.7$\pm$7.3 & 39.5$\pm$3.5 & 39.2$\pm$4.2 \\
    & hopp-med &med &  11.2$\pm$1.1 & 17.3$\pm$3.8 & 15.2$\pm$3.3 & 12.4$\pm$1.0 & 15.3$\pm$3.5 & 32.4$\pm$8.0 &\cellcolor{lightblue}\textbf{43.4} $\pm$6.1 \\
    & hopp-med-r&med &  13.9$\pm$2.9 & 10.7$\pm$4.3 & 3.3$\pm$1.9 & 14.0$\pm$2.6 & 15.3$\pm$4.4 & 31.1$\pm$13.4 &\cellcolor{lightblue}\textbf{36.8}$\pm$17.8 \\
    & hopp-med-e&med & 19.1$\pm$6.6 & 18.5$\pm$12.3 & 15.9$\pm$5.9 & 19.7$\pm$8.5 & 22.3$\pm$5.4 & {26.4}$\pm$10.1 & \cellcolor{lightblue}\textbf{45.3}$\pm$7.5 \\
    & walk-med &med&  28.1$\pm$12.9 & 28.4$\pm$13.7 & 38.0$\pm$11.2 & 21.4$\pm$7.0 & 22.1$\pm$8.4 & 36.6$\pm$2.3 &\cellcolor{lightblue}\textbf{41.6}$\pm$4.0 \\
    & walk-med-r  &med& 14.6$\pm$2.5 & 14.1$\pm$6.1 & 7.6$\pm$5.8 & 17.9$\pm$3.8 & 11.6$\pm$4.6 & \cellcolor{lightblue}\textbf{32.7}$\pm$7.0 & 29.0 $\pm$1.9\\
    & walk-med-e &med& 39.9$\pm$13.1 & 41.6$\pm$13.0 & 32.3$\pm$7.2 & \cellcolor{lightblue}\textbf{46.4}$\pm$3.5 & 33.8$\pm$3.1 & 30.2$\pm$9.8 & 34.9 $\pm$9.1\\
    & ant-med &med & 10.2$\pm$1.8 & 9.4$\pm$0.9 & 12.4$\pm$2.0 & 11.7$\pm$1.0 & 11.3$\pm$1.3 & \cellcolor{lightblue}\textbf{45.1}$\pm$12.4 & 42.6$\pm$8.4 \\
    & ant-med-r &med& 18.9$\pm$2.6 & 21.7$\pm$2.1 & 13.9$\pm$1.5 & 18.7$\pm$1.7 & 19.6$\pm$1.0 & {29.6}$\pm$10.7 & \cellcolor{lightblue}\textbf{40.9}$\pm$5.6 \\
    & ant-med-e &med& 9.8$\pm$2.4 & 8.1$\pm$1.8 & 8.1$\pm$3.0 & 8.4$\pm$2.1 & 8.9$\pm$1.5 & 18.6$\pm$11.9 & \cellcolor{lightblue}\textbf{39.2}$\pm$9.2 \\
    \midrule
    \multirow{12}{*}{Morph}  & half-med &med        & \cellcolor{lightblue}\textbf{24.5}$\pm$2.4 & 21.0$\pm$3.9 & 24.2$\pm$5.6 & 18.1$\pm$1.8 & 23.7$\pm$3.4 & 21.1$\pm$7.6 & 19.5$\pm$2.2 \\
    & half-med-r &med & 11.0$\pm$1.2 & 9.5$\pm$2.3  & 4.7$\pm$2.9  & 8.9$\pm$1.2 & 9.2$\pm$0.6  & 6.5$\pm$1.4 & \cellcolor{lightblue}\textbf{13.0}$\pm$5.3 \\
    & half-med-e &med& 21.1$\pm$2.8 & 19.2$\pm$2.2 & \cellcolor{lightblue}\textbf{23.2}$\pm$3.9 & 21.1$\pm$1.9 & 18.6$\pm$1.3 & 20.8$\pm$2.5 & 16.8$\pm$8.8 \\
    & hopp-med &med& 15.9$\pm$6.8 & 17.8$\pm$10.1 & 12.8$\pm$0.1 & 21.7$\pm$7.7 & 25.3$\pm$9.7 & 16.4$\pm$7.1 & \cellcolor{lightblue}\textbf{43.1}$\pm$23.9 \\
    & hopp-med-r &med & 12.9$\pm$0.3 & 12.8$\pm$0.1 & 2.0$\pm$1.2  & 12.4$\pm$0.7 & 12.5$\pm$1.7 & 13.3$\pm$0.1 & \cellcolor{lightblue}\textbf{22.9}$\pm$6.1 \\
    & hopp-med-e  &med      & 14.9$\pm$3.1 & 11.1$\pm$5.6 & 14.4$\pm$1.8 & 16.6$\pm$1.9 & 18.3$\pm$7.5 & 25.4 $\pm$9.4 & \cellcolor{lightblue}\textbf{53.4}$\pm$20.8 \\
    & walk-med  &med        & 31.5$\pm$8.6 & 35.0$\pm$10.8 & 26.7$\pm$6.6  & 38.6$\pm$5.1 & 38.5$\pm$8.4 & 42.5$\pm$3.1 & \cellcolor{lightblue}\textbf{56.7}$\pm$8.1 \\
    & walk-med-r   &med  & 41.5$\pm$3.0 & 38.5$\pm$7.9 & 15.3$\pm$7.6 & 36.0$\pm$4.4  & 24.2$\pm$8.6 & 17.9$\pm$13.4 & \cellcolor{lightblue}\textbf{63.1}$\pm$8.0 \\
    & walk-med-e &med  & 32.8$\pm$4.3 & 41.4$\pm$10.9 & 45.1$\pm$13.7 & 39.8$\pm$14.3 & 37.9$\pm$4.2 & 55.3$\pm$2.2 & \cellcolor{lightblue}\textbf{62.1}$\pm$8.1 \\
    & ant-med  &med              & 71.4$\pm$2.4 & 71.5$\pm$6.8 & 54.8$\pm$13.2 & 72.8$\pm$2.2 & 71.8$\pm$2.7 & 75.1$\pm$2.3 & \cellcolor{lightblue}\textbf{77.2}$\pm$2.8 \\
    & ant-med-r &med          & 65.9$\pm$5.5 & 62.3$\pm$8.2 & 15.2$\pm$2.3  & 59.3$\pm$4.0 & 65.0$\pm$5.3 & 63.1$\pm$5.9 & \cellcolor{lightblue}\textbf{76.2}$\pm$2.6 \\
    & ant-med-e  &med     & 70.2$\pm$7.3 & 64.3$\pm$5.8 & 64.0$\pm$6.0  & 68.5$\pm$4.4 & 66.8$\pm$11.0 & 76.4$\pm$1.9 & \cellcolor{lightblue}\textbf{79.3}$\pm$0.2 \\
    \midrule
    \multicolumn{3}{c|}{Total Score} & 677.6 & 670.9 & 549.8 & 681.1 & 660.8 & 818.1 & \textbf{1045.2} \\
    \bottomrule
  \end{tabular}

\end{table*}

\noindent\textbf{Baselines.} We consider the following typical baselines: \textbf{IQL} \cite{kostrikov2022offline} that is trained on the combined source and target dataset; \textbf{DARA} \cite{liu2022dara} that trains domain classifiers to impose penalties on source domain rewards; \textbf{BOSA} \cite{liu2024beyond} that addresses the OOD issue through support-constrained policy and value optimization; \textbf{SRPO} \cite{xue2024state} that modifies rewards based on the stationary state distribution; \textbf{IGDF} \cite{wen2024contrastive} that introduces a contrastive score function to selectively share source transitions; \textbf{OTDF} \cite{lyu2025crossdomain} that filters source data via optimal transport. 

\noindent\textbf{Metrics.} To ensure that the results are interpretable across different tasks, we follow ODRL \cite{lyu2024odrl} and adopt the normalized score (NS) in the target domain as the evaluation metric, defined as $\text{NS} = \frac{J - J_r}{J_e - J_r} \times 100$, where $J$, $J_e$, and $J_r$ denote the returns of the learned, expert, and random policies in the target domain, respectively.

\noindent\textbf{Results.} We summarize the performance of all methods under gravity shift and morphology shift in Table \ref{tab:main_results}. Due to space limitations, results for friction shift setting are deferred to Appendix \ref{appsec:friction_results}. For each task, we vary the quality of the source domain data to evaluate the robustness of different methods under varied dynamics shifts and data quality combinations. Our results show that STC consistently outperforms all baselines in most tasks and achieves a total normalized score \textbf{1045.2}. In particular, compared to IQL, which directly learns from the mixed dataset without any adaptation, STC achieves a notable improvement of 54\%, highlighting the effectiveness of selectively correcting source domain transitions. STC achieves the highest normalized score on \textbf{18 out of all 24 tasks}. 
OTDF is the best-performing baseline excluding STC, and STC beats OTDF in overall average performance by 27\%. While STC slightly underperforms other top-performing methods on several tasks, the gap is marginal and does not indicate a significant weakness. 

We find that some policy adaptation methods offer limited improvement over IQL (as also observed in \cite{lyu2025crossdomain}), likely due to the limited quantity (5000) of available target transitions, which increases the difficulty of effective adaptation. Methods like DARA and SRPO may fail to learn reliable domain classifiers under such conditions, resulting in inaccurate reward modification. OTDF achieves clear improvements over IQL by using optimal transport, but it still falls short compared to STC. It indicates that selective source transition correction can be more powerful than source data filtering, which allows for more effective reuse of potentially valuable transitions. Meanwhile, the bidirectional dynamics-based selection mechanism helps mitigate the negative effects of potential model training bias. Moreover, the training efficiency of STC remains competitive with that of other baselines (see Appendix \ref{app:Computational Overhead} for details). Considering the performance gains achieved, we deem the extra training effort of the auxiliary models acceptable.


\begin{table*}[!htb]

  \caption{\textbf{Performance comparison under distinct target dataset qualities}. half = halfcheetah, hopp = hopper, walk = walker2d, med = medium, e = expert. \textbf{Source} means the source domain dataset, and \textbf{Target} indicates the target domain dataset quality. The normalized average scores in the target domain across 5 seeds are reported and $\pm$ captures the standard deviation. We highlight the best cell.}

    \label{tab:changetar}

  \centering
  \footnotesize
  \begin{tabular}{lll|llllllllll}
    \toprule
    Type & Source & Target & IQL & DARA & BOSA & SRPO & IGDF & OTDF & STC (ours) \\
    \midrule
    \multirow{12}{*}{Gravity} & half-med & med &  39.6$\pm$3.3 & 41.2$\pm$3.9 & 38.9$\pm$4.0 & 36.9$\pm$4.5 & 36.6$\pm$5.5 & 40.7$\pm$7.7 &\cellcolor{lightblue}\textbf{42.4}$\pm$5.3 \\
    & half-med & med-e & 39.6$\pm$3.7 & \cellcolor{lightblue}\textbf{40.7}$\pm$2.8 & 40.4$\pm$3.0 & \cellcolor{lightblue}\textbf{40.7}$\pm$2.3 & 38.7$\pm$6.2 & 28.6$\pm$3.2 & 37.3$\pm$1.5 \\
    & half-med & expert & 42.4$\pm$3.8 & 39.8$\pm$4.4 & 40.5$\pm$3.9 & 39.4$\pm$1.6 & 39.6$\pm$4.6 & 36.1$\pm$5.3 & \cellcolor{lightblue}\textbf{43.1}$\pm$5.7\\
    & hopp-med & med &  11.2$\pm$1.1 & 17.3$\pm$3.8 & 15.2$\pm$3.3 & 12.4$\pm$1.0 & 15.3$\pm$3.5 & 32.4$\pm$8.0 &\cellcolor{lightblue}\textbf{43.4} $\pm$6.1 \\
    & hopp-med & med-e &14.7$\pm$3.6 & 15.4$\pm$2.5 & 21.1$\pm$9.3 & 14.2$\pm$1.8 & 15.1$\pm$3.6 & \cellcolor{lightblue}\textbf{24.2}$\pm$3.6 & 23.4$\pm$6.4 \\
    & hopp-med & expert & 12.5$\pm$1.6 & 19.3$\pm$10.5 & 12.7$\pm$1.7 & 11.8$\pm$0.9 & 14.8$\pm$4.0 & 33.7$\pm$7.8 & \cellcolor{lightblue}\textbf{37.1}$\pm$14.1 \\
    & walk-med & med &  28.1$\pm$12.9 & 28.4$\pm$13.7 & 38.0$\pm$11.2 & 21.4$\pm$7.0 & 22.1$\pm$8.4 & 36.6$\pm$2.3 &\cellcolor{lightblue}\textbf{41.6}$\pm$4.0 \\
    & walk-med & med-e & 35.7$\pm$4.7 & 30.7$\pm$9.7 & 40.9$\pm$7.2 & 34.0$\pm$9.9 & 35.4$\pm$9.1 & \cellcolor{lightblue}\textbf{44.8}$\pm$7.5 &36.9$\pm$6.5 \\
    & walk-med & expert & 37.3$\pm$8.0 & 36.0$\pm$7.0 & 41.3$\pm$8.6 & 39.5$\pm$3.8 & 36.2$\pm$13.6 & 44.0$\pm$4.0 & \cellcolor{lightblue}\textbf{49.9}$\pm$10.1\\
    & ant-med & med & 10.2$\pm$1.8 & 9.4$\pm$0.9 & 12.4$\pm$2.0 & 11.7$\pm$1.0 & 11.3$\pm$1.3 & \cellcolor{lightblue}\textbf{45.1}$\pm$12.4 & 42.6$\pm$8.4 \\
    & ant-med & med-e &9.4$\pm$1.2&10.0$\pm$0.9 & 11.6$\pm$1.3& 10.2$\pm$1.2 & 9.4$\pm$1.4 & \cellcolor{lightblue}\textbf{33.9}$\pm$5.4 & 23.2$\pm$6.1 \\
    & ant-med & expert & 10.2$\pm$0.3 & 9.8$\pm$0.6 & 11.8$\pm$0.4 & 9.5$\pm$0.6 & 9.7$\pm$1.6 & 33.2$\pm$9.0 & \cellcolor{lightblue}\textbf{35.6}$\pm$5.6\\
    \midrule
    \multirow{8}{*}{Morph} & half-med         & med & \cellcolor{lightblue}\textbf{24.5}$\pm$2.4 & 21.0$\pm$3.9 & 24.2$\pm$5.6 & 18.1$\pm$1.8 & 23.7$\pm$3.4 & 21.1$\pm$7.6 & 19.5$\pm$2.2 \\

    & half-med & expert & 11.1$\pm$1.5 & \cellcolor{lightblue}\textbf{11.3}$\pm$0.6 & 9.1$\pm$1.7 & 10.1$\pm$1.6 & 10.1$\pm$0.8 &7.4$\pm$2.3 & 7.7$\pm$1.2 \\

    & hopp-med              & med & 15.9$\pm$6.8 & 17.8$\pm$10.1 & 12.8$\pm$0.1 & 21.7$\pm$7.7 & 25.3$\pm$9.7 & 16.4$\pm$7.1 & \cellcolor{lightblue}\textbf{43.1}$\pm$23.9 \\
    
    & hopp-med & expert & 15.5$\pm$4.1 & 13.0$\pm$1.4 & 4.5$\pm$4.0 & 10.4$\pm$3.1 & 13.4$\pm$2.0 & 14.0 $\pm$4.0 & \cellcolor{lightblue}\textbf{53.2}$\pm$50.3 \\

    & walk-med            & med & 31.5$\pm$8.6 & 35.0$\pm$10.8 & 26.7$\pm$6.6  & 38.6$\pm$5.1 & 38.5$\pm$8.4 & 42.5$\pm$3.1 & \cellcolor{lightblue}\textbf{56.7}$\pm$8.1 \\

    & walk-med & expert &37.0$\pm$6.0 & 43.7$\pm$5.5 & 31.3$\pm$8.6 & 43.8$\pm$8.4 & 38.5$\pm$4.3 & \cellcolor{lightblue}\textbf{49.9}$\pm$5.4& 32.9$\pm$7.3 \\

    & ant-med                 & med & 71.4$\pm$2.4 & 71.5$\pm$6.8 & 54.8$\pm$13.2 & 72.8$\pm$2.2 & 71.8$\pm$2.7 & 75.1$\pm$2.3 & \cellcolor{lightblue}\textbf{77.2}$\pm$2.8 \\

    & ant-med & expert & 63.9$\pm$8.1 & 68.9$\pm$7.6 & 47.5$\pm$11.9 & 59.0$\pm$4.7 & 66.1$\pm$4.0 & 63.8$\pm$6.4& \cellcolor{lightblue}\textbf{74.1}$\pm$21.2 \\

    \midrule
    \multicolumn{3}{c|}{Total Score} & 561.6 & 580.2 & 535.8 & 556.1 & 571.7 & 723.3 & \textbf{820.8}   \\
    \bottomrule
  \end{tabular}

\end{table*}

\vspace{-4pt}
\subsection{Influence of Target Domain Dataset Quality}
\vspace{-4pt}
To evaluate the robustness of our method under varying target data quality, we conduct experiments using target domain datasets of different qualities. As demonstrated in Table \ref{tab:changetar}, STC exhibits superior performance across varying data quality settings, sometimes outperforming baseline methods by a large margin. In general, STC achieves the highest score on 12 out of the 20 tasks and a total score of \textbf{820.8}, which beats all baseline methods and surpasses the second best baseline OTDF by \textbf{13.5\%}. These empirical results clearly show the strong adaptability and effectiveness of STC under diverse target domain dataset qualities.

\vspace{-4pt}
\subsection{Influence of Target Domain Dataset Size}
\vspace{-4pt}
Our method has demonstrated strong effectiveness, even when only a limited amount of target domain data is available. To further validate the general applicability of STC, we systematically vary the size of the target domain dataset and evaluate its impact on performance. Specifically, we train all methods using different amounts of target transitions (e.g., 5k, 100k) across several environments and report the normalized scores in Table \ref{tab:change_size}. We observe that the performance of all methods generally improves as the dataset size increases. However, STC consistently outperforms the baselines across most tasks, regardless of whether the dataset size is limited (e.g., 5k) or relatively large (e.g., 100k). This demonstrates that our method remains effective under limited data and scales efficiently with larger datasets, highlighting STC’s robustness and data efficiency in cross-domain adaptation.

\begin{figure}[!ht]
  \centering
  \begin{subfigure}{0.85\linewidth}
    \centering
    \includegraphics[width=0.48\linewidth]{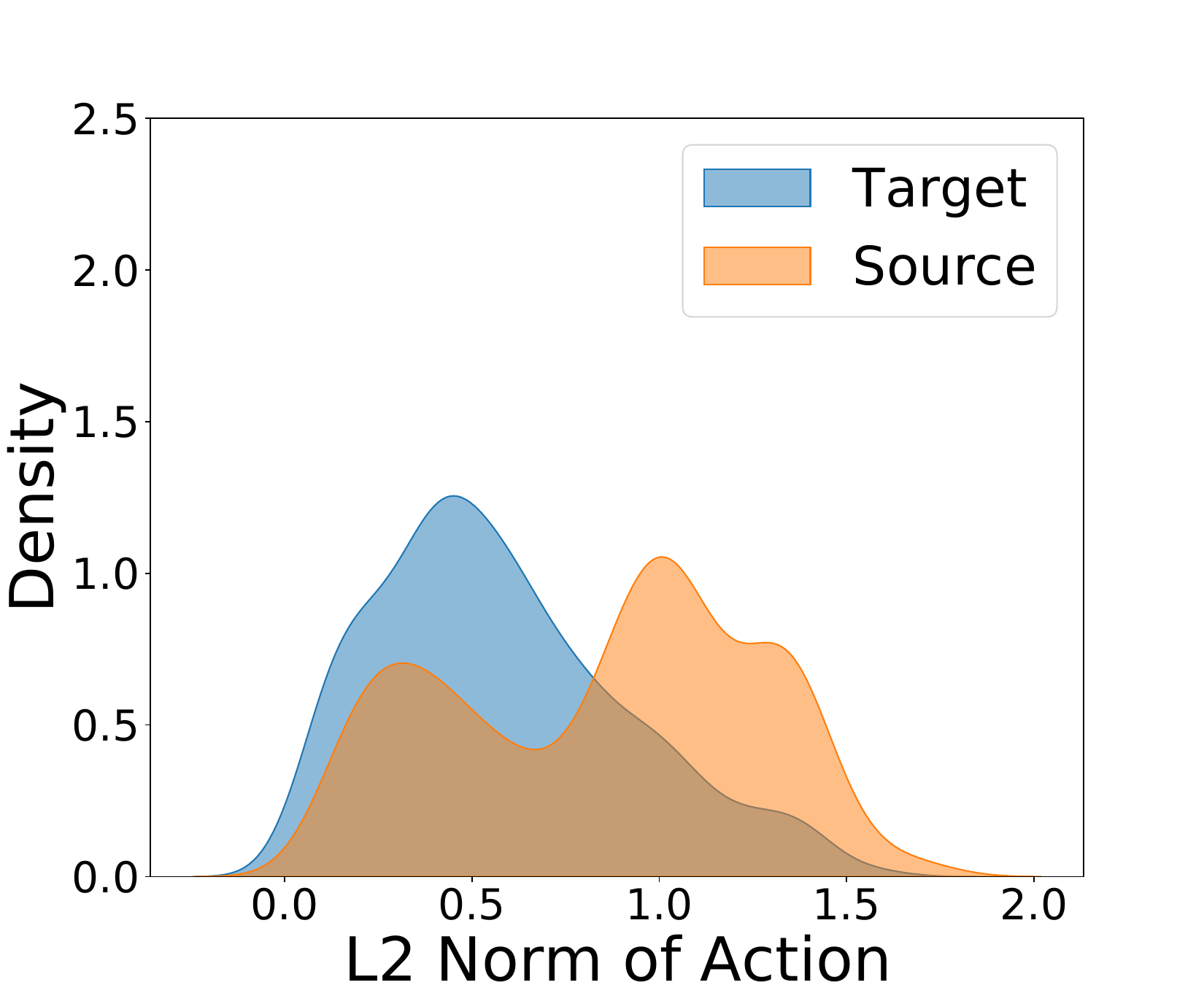}
    \includegraphics[width=0.48\linewidth]{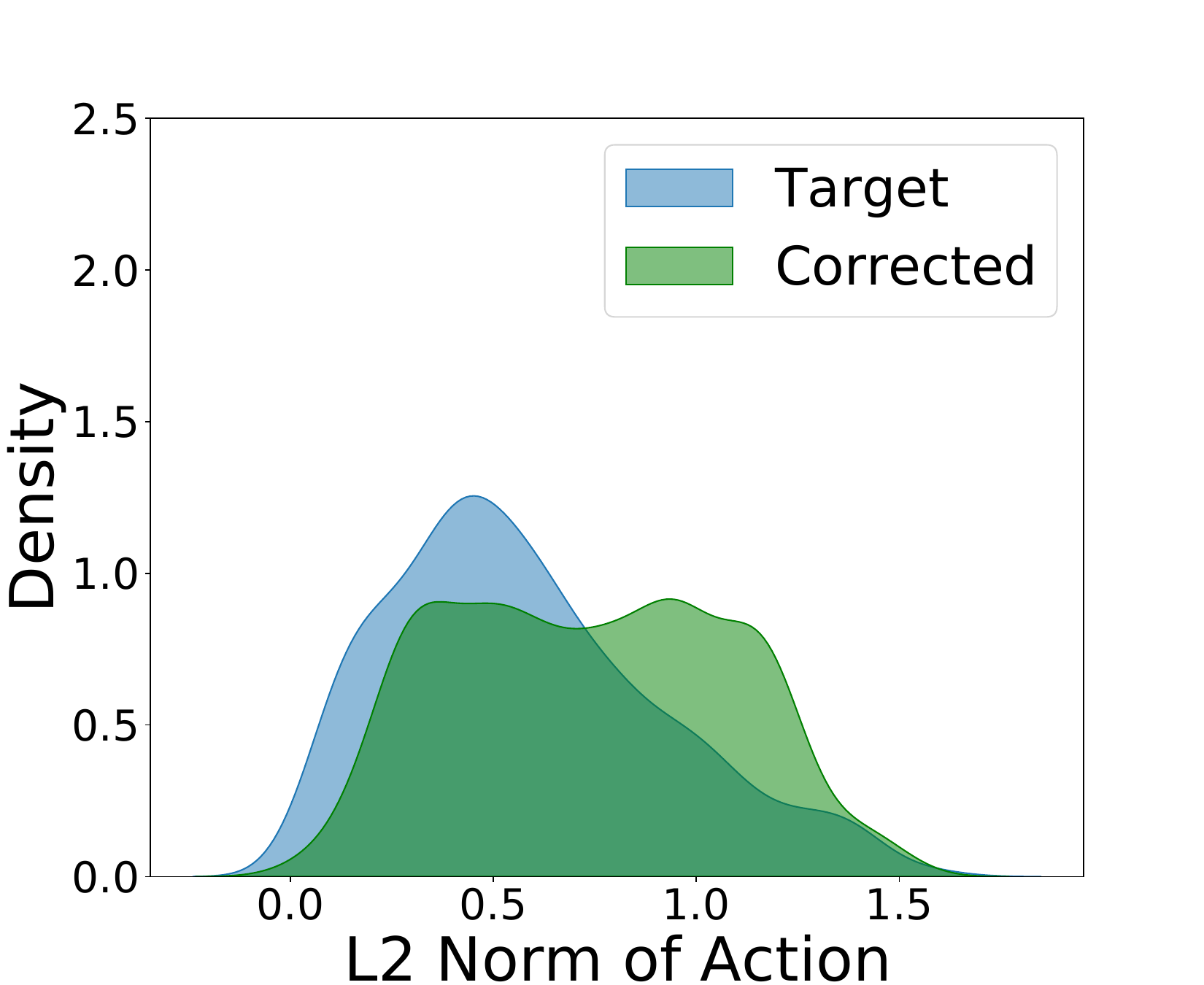}
  \caption{Comparison in \textit{hopper-gravity} environment.}
 \label{fig:kde_comparison_a}
  \end{subfigure}
  \begin{subfigure}{0.85\linewidth}
    \centering
    \includegraphics[width=0.48\linewidth]{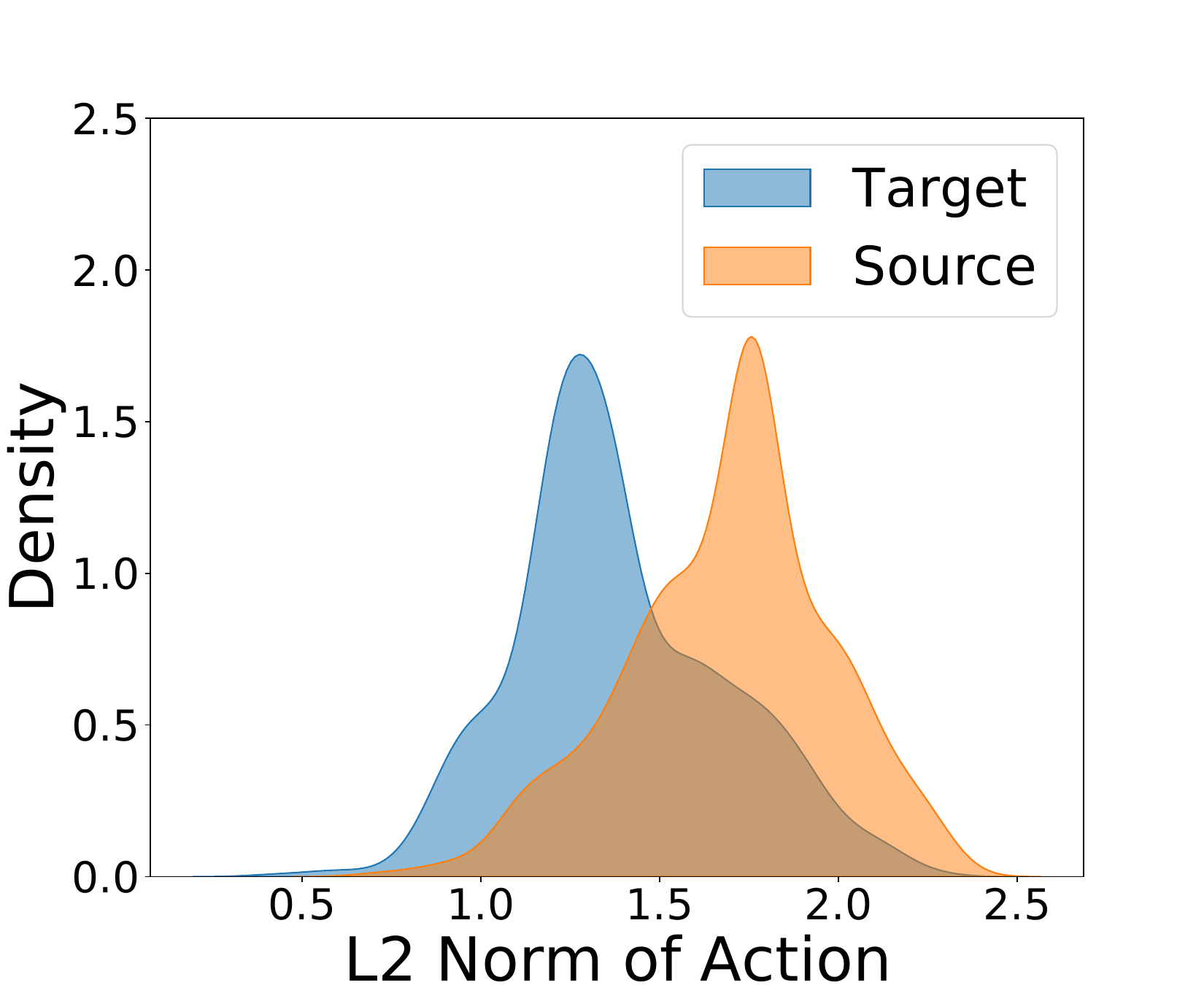}
    \includegraphics[width=0.48\linewidth]{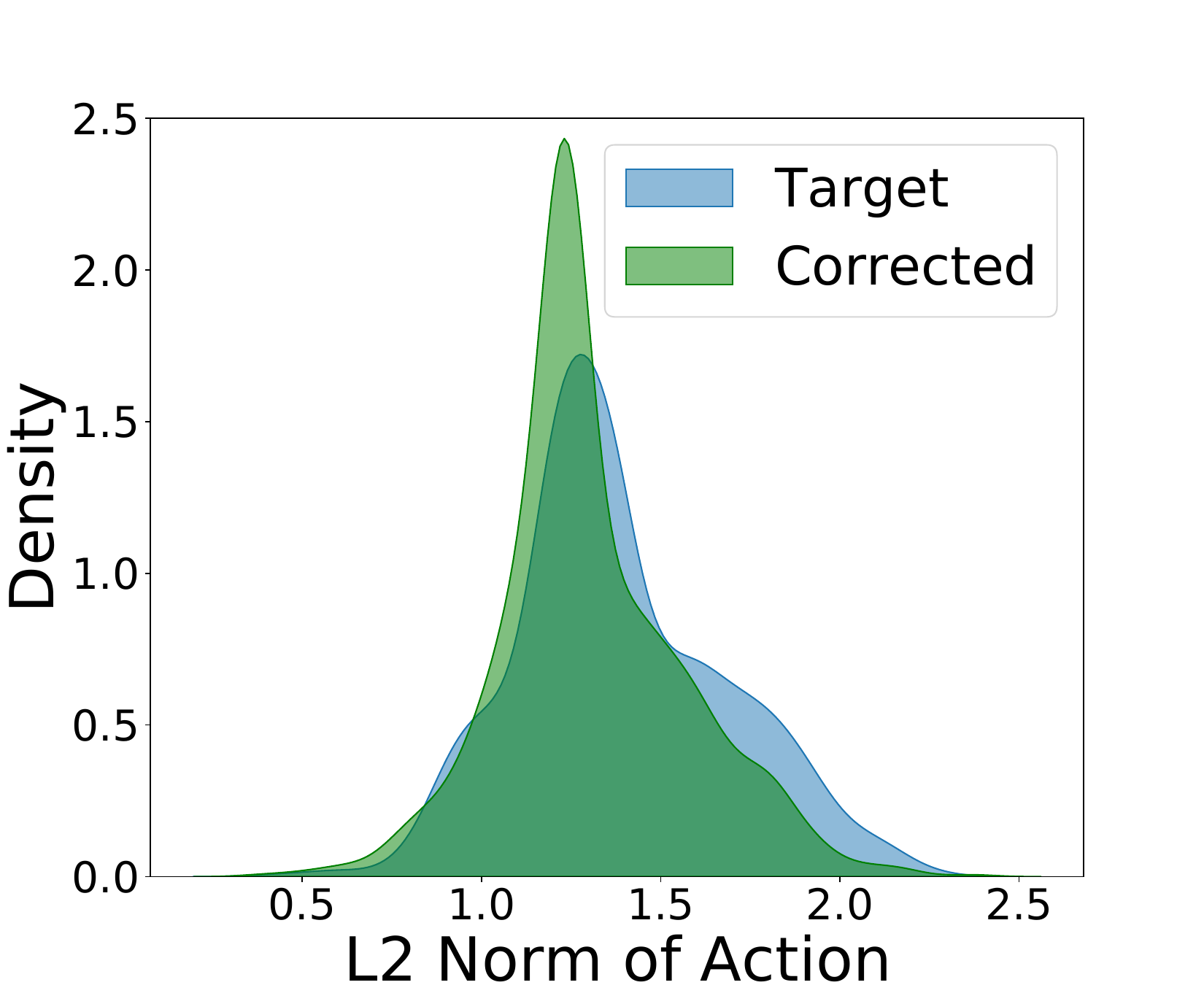}
  \caption{Comparison in \textit{walker2d-morph} environment.}
  \label{fig:kde_comparison_b}
  \end{subfigure}
  \caption{\textbf{Action distribution comparison in (a) the hopper (gravity shift) and (b) the walker2d (morphology shift) environments.}  In each subplot, the left panel shows KDE curves comparing original source domain actions and target domain actions, while the right panel shows KDE curves comparing STC-corrected source actions with target actions.}
  \label{fig:kde_visual}
\end{figure}

\begin{table*}[!htb]
   \caption{\textbf{Performance comparison under distinct target domain dataset size}. med = medium. The \textbf{Source} column means the source domain dataset, and the \textbf{Size} column indicates the size of target domain dataset. The normalized average scores in the target domain across 5 seeds are reported and $\pm$ captures the standard deviation. We highlight the best cell.}
  \label{tab:change_size}
  \vspace{-1mm}
  \centering
  \footnotesize
  \begin{tabular}{lll|llllllllll}
    \toprule
    Type & Source & Size & IQL & DARA & BOSA & SRPO & IGDF & OTDF & STC (ours) \\
    \midrule
    \multirow{4}{*}{Gravity} 
    & hopper-med & 5k &  11.2$\pm$1.1 & 17.3$\pm$3.8 & 15.2$\pm$3.3 & 12.4$\pm$1.0 & 15.3$\pm$3.5 & 32.4$\pm$8.0 &\cellcolor{lightblue}\textbf{43.4} $\pm$6.1 \\
    & hopper-med & 100k &
    18.0$\pm$4.4 & 19.4$\pm$14.6 & 15.8$\pm$6.6 & 18.1$\pm$6.6 & 17.0$\pm$6.2 & 47.1$\pm$11.4 & \cellcolor{lightblue}\textbf{66.1}$\pm$4.8 \\

    & walker2d-med & 5k &  28.1$\pm$12.9 & 28.4$\pm$13.7 & 38.0$\pm$11.2 & 21.4$\pm$7.0 & 22.1$\pm$8.4 & 36.6$\pm$2.3  & \cellcolor{lightblue}\textbf{41.6}$\pm$4.0 \\
    & walker2d-med & 100k &33.0$\pm$7.9 & 28.4$\pm$5.0 & 40.3$\pm$10.9 & 33.9$\pm$8.1 & 41.9$\pm$5.8 & 42.8$\pm$5.2 & \cellcolor{lightblue}\textbf{45.2}$\pm$3.3 \\

    \midrule
    \multirow{4}{*}{Morph} 
    & hopper-med & 5k & 15.9$\pm$6.8 & 17.8$\pm$10.1 & 12.8$\pm$0.1 & 21.7$\pm$7.7 & 25.3$\pm$9.7 & 16.4$\pm$7.1 & \cellcolor{lightblue}\textbf{43.1}$\pm$23.9 \\
    & hopper-med & 100k & 21.5$\pm$10.7 & 18.6$\pm$8.5 & 12.8$\pm$0.1 & 26.6$\pm$9.9 & 31.3$\pm$13.9 & 30.5$\pm$18.3 & \cellcolor{lightblue}\textbf{57.8}$\pm$22.5 \\

    & walker2d-med & 5k & 31.5$\pm$8.6 & 35.0$\pm$10.8 & 26.7$\pm$6.6  & 38.6$\pm$5.1 & 38.5$\pm$8.4 & 42.5$\pm$3.1 & \cellcolor{lightblue}\textbf{56.7}$\pm$8.1 \\
    & walker2d-med & 100k & 75.6$\pm$6.6 & \cellcolor{lightblue}\textbf{79.1}$\pm$3.8 & 44.8$\pm$11.8 & 69.6$\pm$5.1 & 75.6$\pm$5.8 & 64.2$\pm$4.5 & 67.4$\pm$6.1 \\
    \midrule
    \multicolumn{3}{c|}{Total Score} & 234.9 & 243.9 & 206.5 & 242.3 & 267.0 & 312.4 & \textbf{421.3} \\
    \bottomrule
  \end{tabular}

\end{table*}

\vspace{-5pt}
\subsection{STC Can Correct Source Samples Reliably}

To evaluate the reliability of STC in correcting source transitions, we conduct a visualization study to assess whether the corrected source transition distribution better aligns with the target domain. Specifically, we take the \textit{hopper} environment with the gravity shift and the \textit{walker2d} environment with the morphology shift as illustrative examples, and additional visualizations on other environments are provided in Appendix~\ref{appsec:visualization}. We first apply STC to original source transitions to obtain the corrected source transitions. For each transition in the target dataset, we identify its nearest neighbor using a KD-Tree in the original source dataset based on the state transition pair $(s, s')$, and extract the corresponding original action $a_{\rm src}$. We then locate the transition with the same $(s, s')$ in the corrected source dataset and extract the corrected action $\hat{a}_\text{src}$. Finally, we plot the kernel density estimation (KDE) curves of both $a_{\rm src}$ and $\hat{a}_\text{src}$ and compare them with the action distribution of the target domain dataset. As shown in both Figure \ref{fig:kde_comparison_a} and \ref{fig:kde_comparison_b}, the corrected source action distribution (the green curve) aligns more closely with the target domain distribution (the blue curve) compared to the original source action distribution (the orange curve), indicating that STC effectively aligns corrected source transitions with the target dynamics.



\begin{figure}[t]
  \centering
  
  \begin{subfigure}{\linewidth}
    \centering
    \includegraphics[width=0.49\linewidth]{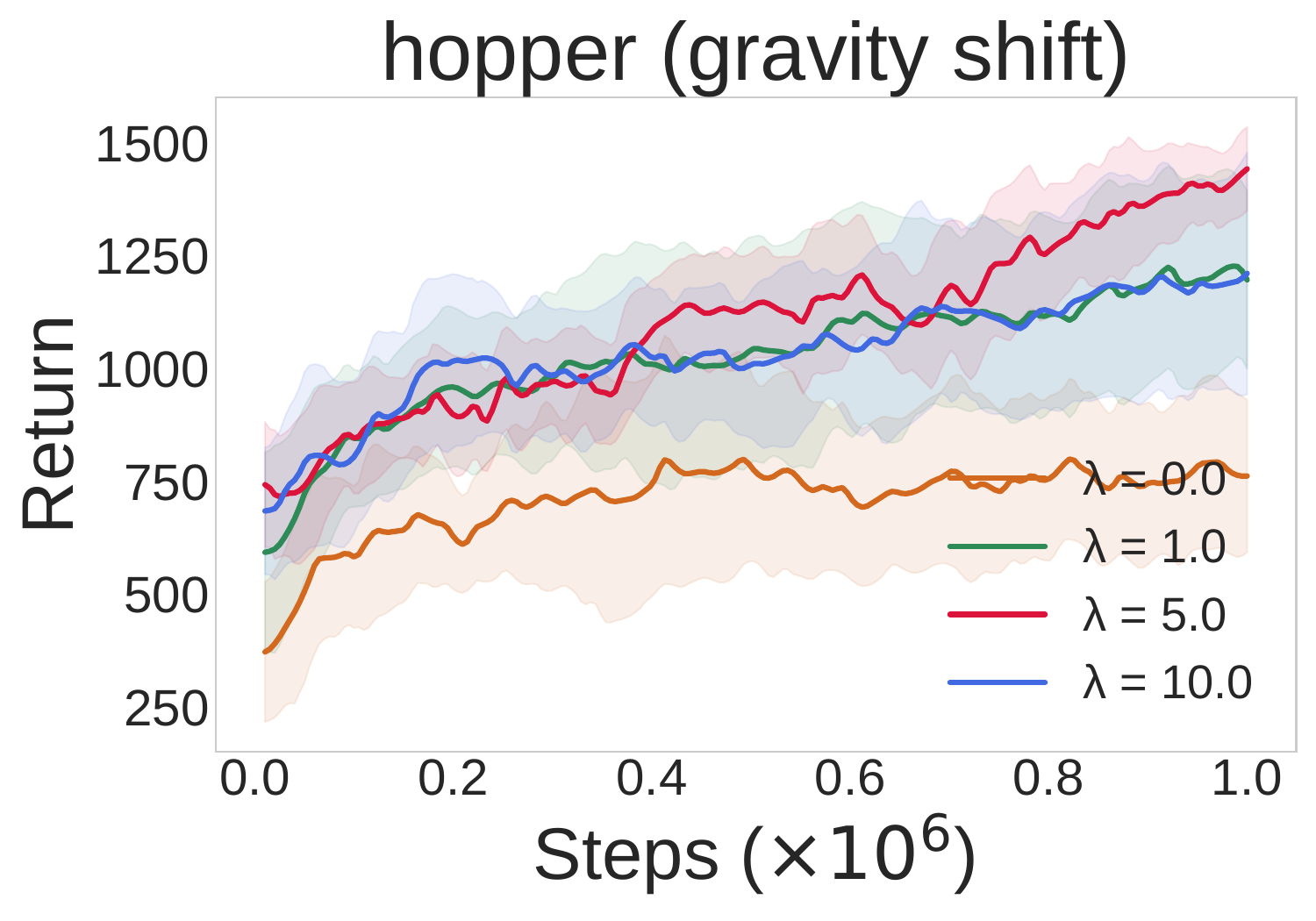}
\includegraphics[width=0.49\linewidth]{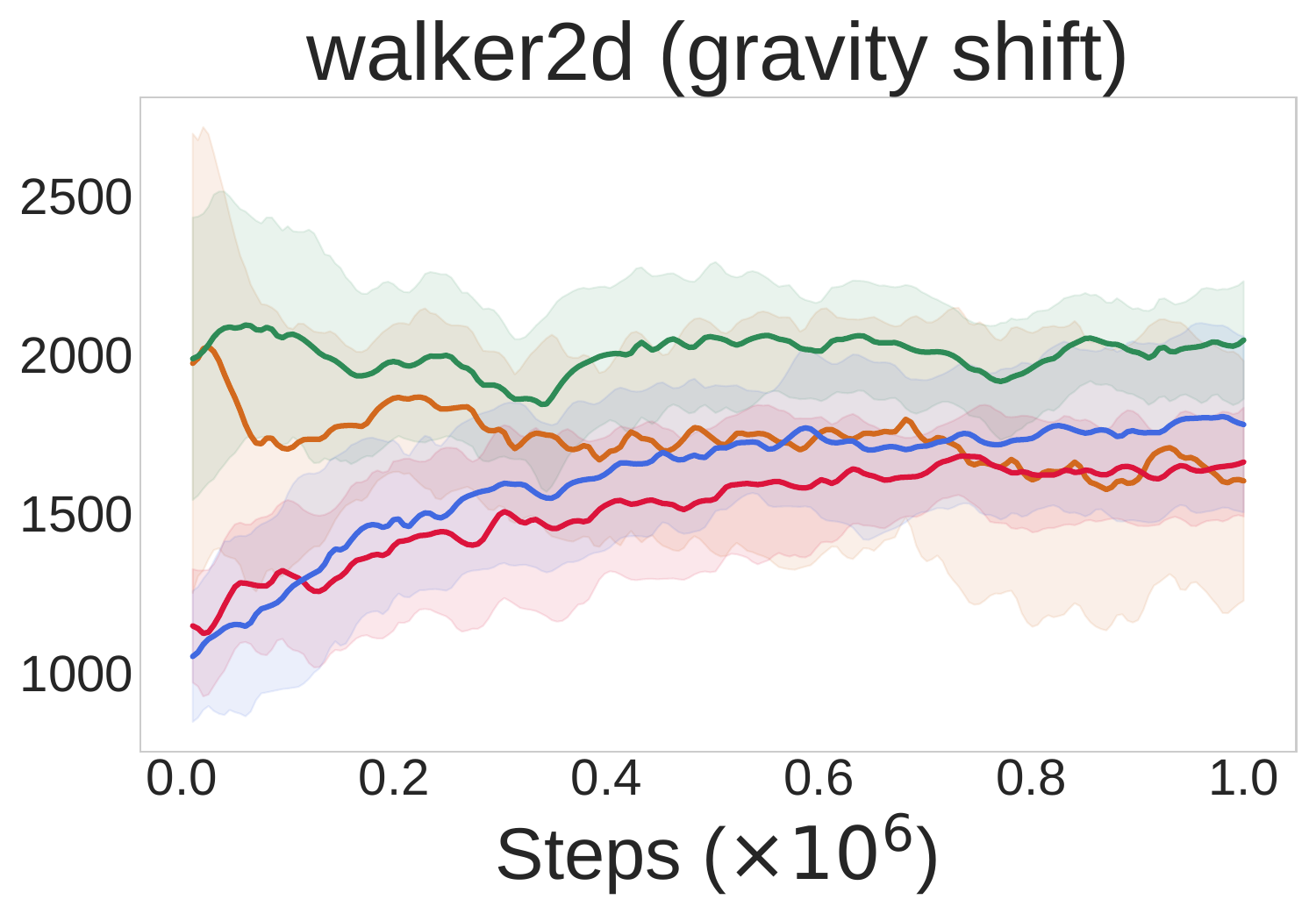}
    \vspace{-1mm}
  \caption{Correction threshold $\lambda$.}
 \label{fig:lambda}
  \end{subfigure}
  \begin{subfigure}{\linewidth}
    \centering
    \includegraphics[width=0.49\linewidth]{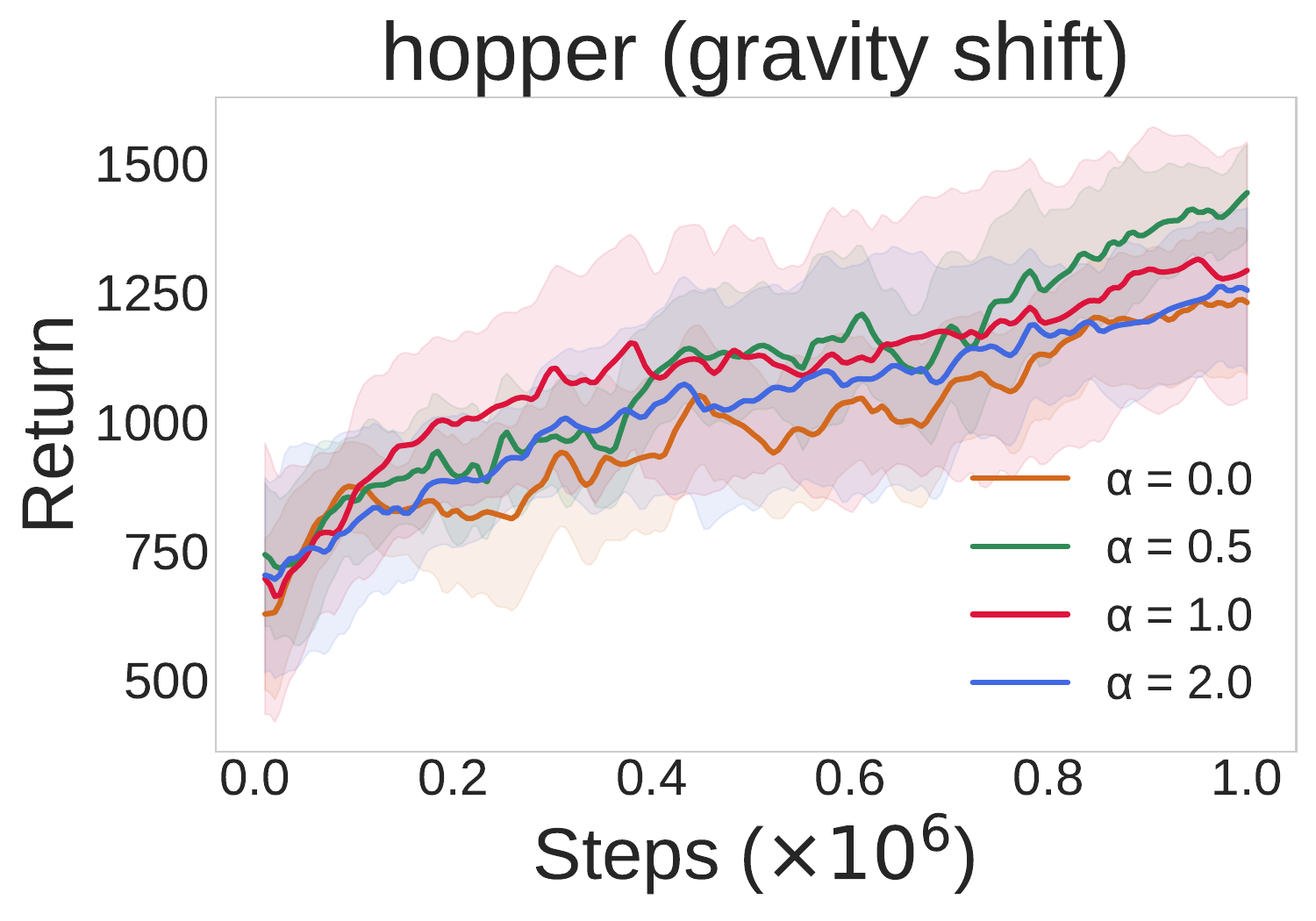}\includegraphics[width=0.49\linewidth]{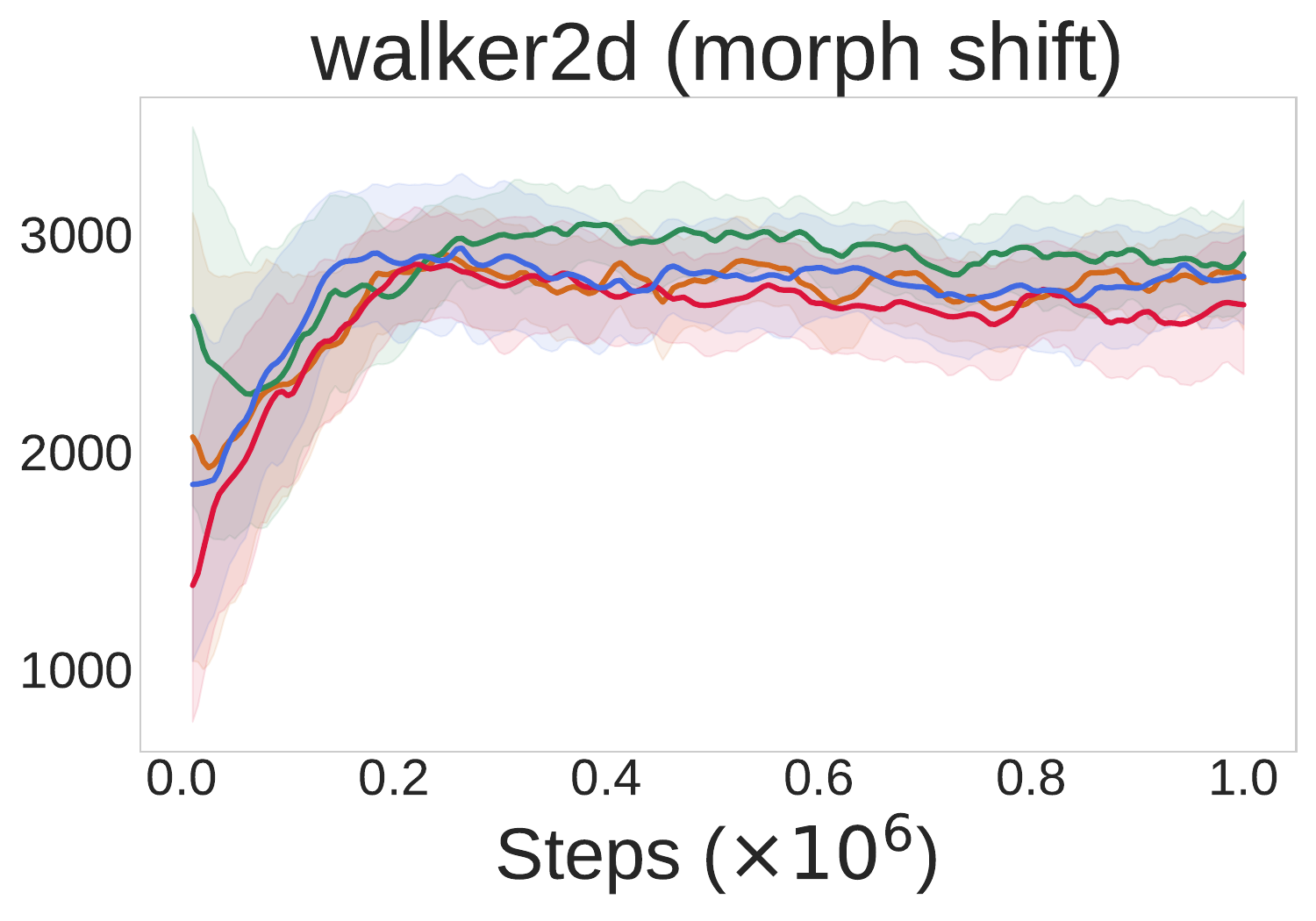}
    \vspace{-1mm}
  \caption{Reward gradient coefficient $\alpha$.}
  \label{fig:alpha}
  \end{subfigure}
  \vspace{-4mm}
  \caption{\textbf{Parameter study}. We report target domain returns in two shift tasks. The shaded region captures the standard deviation.}
  \label{fig:parameter}
  \vspace{-1mm}
\end{figure}

\subsection{Parameter Study}
We conduct an ablation study to investigate the sensitivity of STC to key hyperparameters: the correction threshold $\lambda$ and the reward gradient coefficient $\alpha$.

\noindent\textbf{Correction threshold $\lambda$.}
The coefficient $\lambda$ decides whether the source domain transition should be corrected, i.e., a corrected transition is adopted only if $\varepsilon_{\text{corr}} < \lambda \cdot \varepsilon_{\text{orig}}$. Smaller $\lambda$ enforces stricter alignment with target dynamics but may limit the number of accepted corrected transitions. In contrast, larger $\lambda$ allows more corrections but can introduce misaligned samples. Therefore, selecting an appropriate $\lambda$ is crucial for balancing correction quality and data coverage. We run STC with $\lambda \in \{0,1.0,5.0,10.0\}$, and the results in Figure~\ref{fig:lambda} show that no correction ($\lambda=0$) leads to unsatisfying performance, while different tasks achieve the best results with different $\lambda$. Across all tasks, we adopt $\lambda = 1.0$ or $5.0$ to achieve favorable performance.

\noindent\textbf{Reward gradient coefficient $\alpha$.}
The coefficient $\alpha$ controls the extent of reward adjustment. A small $\alpha$ may lead to insufficient reward adjustment, failing to capture the influence of the action change; a large $\alpha$ may amplify model noise or cause over-adjustment. We conduct experiments with $\alpha \in \{0.0, 0.5, 1.0, 2.0\}$ and present results in Figure~\ref{fig:alpha}. We observe that STC is less sensitive to $\alpha$ compared to $\lambda$. As setting $\alpha=0.5$ yields the best performance across most tasks, we fix this value throughout all experiments.


\section{Conclusion}
In this paper, we address the challenge of offline policy adaptation across domains with dynamics mismatch.
Unlike existing methods that typically mitigate the mismatch through data filtering or reward penalties, we directly correct source domain transitions to better align with the target dynamics. Specifically, we propose Selective Transition Correction (STC), a framework that modifies source transitions by leveraging the inverse policy model and the forward dynamics model trained on the target domain. The inverse model generates corrected actions and rewards, while the forward model is used to select transitions that are more consistent with the target dynamics. Extensive experiments on benchmarks with varying data qualities and types of dynamics shift demonstrate that STC consistently outperforms existing baselines, often with substantial performance gains. Our results highlight the effectiveness of directly aligning dynamics during offline cross-domain policy adaptation. 

\noindent\textbf{Limitations.} 
STC needs to train both the forward dynamics model and the inverse policy model. Also, we make several assumptions for the benefit of theoretical analysis. We honestly admit that some of the assumptions may be strong.

\clearpage
\section*{Impact Statement}
This paper presents work whose goal is to advance the field of Machine
Learning. There are many potential societal consequences of our work, none which we feel must be specifically highlighted here.

\bibliography{example_paper}
\bibliographystyle{icml2026}

\newpage
\appendix
\onecolumn
\section{Missing Proofs}
In this section, we provides detailed proofs for the theoretical results stated in the main text. To enhance readability, we restate each theorem prior to its corresponding proof.
\label{app:proofs}
\subsection{Proofs of Theorem \ref{theo:transition}}

\begin{theoremApp}
    Denote the corrected source domain transition dynamics as $\widetilde{P}_{\rm src}$, then under Assumption \ref{ass:dynamics} and \ref{ass:policy}, the deviation between the corrected dynamics and the empirical target domain dynamics $\widehat{P}_{\rm tar}(\cdot|s,a)$ is bounded:
    \begin{equation}
        \|\widetilde{P}_{\rm src}(\cdot|s,a) - \widehat{P}_{\rm tar}(\cdot|s,a)\| \le \kappa + \epsilon.
    \end{equation}
\end{theoremApp}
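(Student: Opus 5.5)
The plan is a triangle inequality through an intermediate kernel, with the two pieces controlled by Assumption \ref{ass:dynamics} and Assumption \ref{ass:policy} respectively. First I will make precise what $\widetilde{P}_{\rm src}$ means. The corrected dataset keeps every source pair $(s_{\rm src}, s'_{\rm src})$ but relabels the action through $\hat{\pi}_{\rm inv}$. So the corrected data is generated by a source state transition paired with an action drawn from the corrected forward policy $\hat{\pi}(\cdot|s)$, obtained by composing $\hat{\pi}_{\rm inv}$ with the source transitions. In the same way, $\widehat{P}_{\rm tar}$ is generated by target transitions paired with the behaviour policy $\mu_{\rm tar}$, which is the forward counterpart of $\pi_{\rm inv}^{\rm tar}$.

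Next I will introduce the intermediate kernel $\bar P(\cdot|s,a)$. This is the dynamics the corrected data would have if its action labels were produced by the exact target inverse model instead of $\hat{\pi}_{\rm inv}$. Equivalently, it is the source state-to-state kernel relabelled with target-consistent actions. The triangle inequality then gives
\begin{equation*}
\|\widetilde{P}_{\rm src}-\widehat{P}_{\rm tar}\| \le \|\widetilde{P}_{\rm src}-\bar P\| + \|\bar P-\widehat{P}_{\rm tar}\|.
\end{equation*}
I will bound the first term by $\kappa$ and the second by $\epsilon$.

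For the first term, the two kernels differ only in the action-labelling policy, $\hat{\pi}$ versus $\mu_{\rm tar}$. Writing each as a mixture over the labelling policy and using the fact that a stochastic kernel is a contraction in $\ell_1$, the difference is at most $\mathbb{E}\|\hat{\pi}(s)-\mu_{\rm tar}(s)\|_1 \le \kappa$ by Assumption \ref{ass:policy}. For the second term, once the labelling agrees, the remaining discrepancy comes only from the underlying state transitions, $P_{\rm src}$ versus the target. Assumption \ref{ass:dynamics} bounds this pointwise by $\epsilon$, and averaging over actions preserves the bound.

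I expect the main obstacle to be making the first step rigorous. Assumption \ref{ass:policy} is stated in expectation over states and in terms of the forward policies, while the theorem is pointwise in $(s,a)$. I will also treat the empirical $\widehat{P}_{\rm tar}$ as a proxy for $P_{\rm tar}$ in the second term, without a separate sampling-error term; this matches the stated bound, which has none. To reconcile the expectation with the pointwise claim, I will read the norm, as the paper's setup implicitly does, as a policy-induced quantity. Concretely, I will bound $\|\widetilde{P}^{\hat\pi}-\widehat{P}^{\mu_{\rm tar}}\|$ by adding and subtracting $\widehat{P}^{\hat\pi}$, and absorb the expectation by taking the norm in the same averaged sense as Assumption \ref{ass:policy}.
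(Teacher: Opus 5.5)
Your proposal is correct and follows the paper's proof. The paper inserts exactly your intermediate kernel $\bar P(\cdot|s)=\sum_a P_{\rm src}(\cdot|s,a)\mu_{\rm tar}(a|s)$, bounds the policy term $\bigl\|\sum_a P_{\rm src}(\cdot|s,a)(\hat{\pi}(a|s)-\mu_{\rm tar}(a|s))\bigr\|$ by $\kappa$ via the $\ell_1$-contraction of the kernel, and bounds the dynamics term by $\epsilon$ by averaging Assumption~\ref{ass:dynamics} over $\mu_{\rm tar}$; as you anticipate, it also silently identifies $\widehat{P}_{\rm tar}$ with $P_{\rm tar}$ and applies Assumption~\ref{ass:policy} pointwise. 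The only wrinkle is that your final paragraph adds and subtracts $\widehat{P}^{\hat{\pi}}$ (target dynamics, corrected policy) rather than the $\bar P$ you defined earlier; both splits work equally well, but you should commit to one.
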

\begin{proof}
    Note that $\widetilde{P}_{\rm src} = \widetilde{P}_{\rm src}^{\hat{\pi}}, \widehat{P}_{\rm tar} = \widehat{P}_{\rm tar}^{\mu_{\rm tar}}$, where $\hat{\pi}$ is the estimated behavior policy in the corrected data, and $\mu_{\rm tar}$ is the behavior policy in the target domain dataset. It is easy to find that 
    \begin{align*}
        \|\widetilde{P}_{\rm src}^{\hat{\pi}}(\cdot|s,a) - P_{\rm tar}^{\mu_{\rm tar}}(\cdot|s,a)\|
        &=  \left\|\sum_{a} P_{\rm src}(s^\prime|s,a) \hat{\pi}(a|s)  - \sum_{a}P_{\rm tar}(s^\prime|s,a)\mu_{\rm tar}(a|s)\right\| \\
        &= \left\|\sum_{a} P_{\rm src}(s^\prime|s,a) (\hat{\pi}(a|s) - \mu_{\rm tar}(a|s)) - \sum_{a}(P_{\rm tar}(s^\prime|s,a) - P_{\rm src}(s^\prime|s,a))\mu_{\rm tar}(a|s)\right\| \\
        &\le \left\|\sum_{a} P_{\rm src}(s^\prime|s,a) (\hat{\pi}(a|s) - \mu_{\rm tar}(a|s)) \right \|  + \left\| \sum_{a}(P_{\rm tar}(s^\prime|s,a) - P_{\rm src}(s^\prime|s,a))\mu_{\rm tar}(a|s)\right\| \\
        &\le \underbrace{\sum_{a} \left\| \hat{\pi}(a|s) - \mu_{\rm tar}(a|s) \right\|}_{\le \kappa}  + \underbrace{\|P_{\rm src}(\cdot|s,a) - P_{\rm tar}(\cdot|s,a)\|}_{\le \epsilon} \\
        &\le \epsilon + \kappa.
    \end{align*}
    The above inequalities hold due to Assumption \ref{ass:dynamics}, \ref{ass:policy}, and the fact that $|P_{\rm src}(s^\prime|s,a)|\le 1$ and $\sum_a \mu_{\rm tar}(a|s) = 1$.
\end{proof}

\subsection{Proofs of Theorem \ref{theo:q_value}}
\begin{theoremApp}
    Given Assumption \ref{ass:reward} and assume that the source domain rewards are corrected via $\hat{r}(s_{\rm src},a_{\rm src}) = r(s_{\rm src},a_{\rm src}) + \nabla_a r(s_{\rm src},a)^\top|_{a=a_{\rm src}}(\hat{a}_{\rm src} - a_{\rm src})$, where $\hat{a}_{\rm src}\sim\mu_{\rm tar}(\cdot|s_{\rm src})$. Then given any $(s,a)$, the deviation of Q-values on the corrected empirical source domain $\widetilde{\mathcal{M}}_{\rm src}$ and the raw empirical source domain $\widehat{\mathcal{M}}_{\rm src}$ is bounded:
    \begin{equation*}
        \left|Q^{\pi}_{\widetilde{\mathcal{M}}_{\rm src}}(s,a) - Q^{\pi}_{\widehat{\mathcal{M}}_{\rm src}}(s,a)\right| \le \dfrac{2L_r}{1-\gamma}D_{\rm TV}(\mu_{\rm src}\| \mu_{\rm tar}).
    \end{equation*}
\end{theoremApp}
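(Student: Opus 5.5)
The plan is to treat $\widetilde{\mathcal{M}}_{\rm src}$ and $\widehat{\mathcal{M}}_{\rm src}$ as two MDPs that share the same empirical transition kernel $\widehat{P}_{\rm src}$ and differ only in their reward functions. Correction changes only the action label and the reward of a stored transition $(s_{\rm src},s'_{\rm src})$, so from the point of view of policy evaluation the only difference is the reward. I would therefore write both Q-functions through their Bellman equations under the common kernel,
\begin{equation*}
Q^{\pi}_{\mathcal{M}}(s,a) = r_{\mathcal{M}}(s,a) + \gamma\,\mathbb{E}_{s'\sim \widehat{P}_{\rm src}(\cdot|s,a),\,a'\sim\pi(\cdot|s')}\big[Q^{\pi}_{\mathcal{M}}(s',a')\big],
\end{equation*}
and subtract them. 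Taking the supremum over $(s,a)$ gives the standard simulation-lemma contraction
\begin{equation*}
\|Q^{\pi}_{\widetilde{\mathcal{M}}_{\rm src}} - Q^{\pi}_{\widehat{\mathcal{M}}_{\rm src}}\|_\infty \le \frac{1}{1-\gamma}\,\sup_{s,a}\big|\hat r(s,a) - r(s,a)\big|.
\end{equation*}
This reduces the theorem to a per-state bound on the reward perturbation.

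The second step is to control that perturbation. By the correction rule, $|\hat r - r| = |\nabla_a r(s,a)^\top|_{a=a_{\rm src}}(\hat a_{\rm src}-a_{\rm src})|$. Since the corrected quantity is random through $\hat a_{\rm src}\sim\mu_{\rm tar}(\cdot|s)$ and the original through $a_{\rm src}\sim\mu_{\rm src}(\cdot|s)$, I would interpret the reward in each empirical MDP in expectation over its behavior policy. The state-level difference then becomes
\begin{equation*}
\Big|\sum_a r(s,a)\big(\mu_{\rm tar}(a|s)-\mu_{\rm src}(a|s)\big)\Big|,
\end{equation*}
with the linearization exact to first order. The cleanest route is to bound the first-order term by $L_r$ times the discrepancy of the action distributions. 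Following the paper's conventions (Assumption \ref{ass:reward}), I would bound $\sum_a |\mu_{\rm tar}(a|s)-\mu_{\rm src}(a|s)|$ by $2D_{\rm TV}(\mu_{\rm src}\|\mu_{\rm tar})$ and use $L_r$ as the uniform bound on the relevant reward variation per unit of action-distribution mass. That yields $|\hat r - r| \le 2L_r D_{\rm TV}(\mu_{\rm src}\|\mu_{\rm tar})$, and combining with the first step gives the claimed $\frac{2L_r}{1-\gamma}D_{\rm TV}$.

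The main obstacle is this middle step. Turning a gradient bound times an action displacement $\hat a - a$ into a total-variation term is not automatic, because the displacement's magnitude is not directly controlled by TV. I would handle it by a coupling argument. Choose the maximal coupling of $\mu_{\rm src}(\cdot|s)$ and $\mu_{\rm tar}(\cdot|s)$, so that $\hat a = a$ except on an event of probability $D_{\rm TV}$. On the complementary event, the reward difference is bounded using the $L_r$-smoothness, together with the normalization and clipping of the gradient and of the action displacement described after (\ref{eq:r}). This implicitly treats the action space as having bounded diameter, absorbed into $L_r$. The factor 2 then arises from the two-sided TV identity or from the triangle inequality, and I would state the bounded-displacement convention explicitly as the point where the argument relies on the practical clipping.
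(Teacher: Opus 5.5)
Your outline matches the paper's proof. Both arguments treat $\widetilde{\mathcal{M}}_{\rm src}$ and $\widehat{\mathcal{M}}_{\rm src}$ as generating identical trajectories, bound the $Q$-gap by $\frac{1}{1-\gamma}$ times a per-step reward gap, and bound that gap by $L_r\|\hat a_{\rm src}-a_{\rm src}\|$. The paper then simply writes $\sum\|a_{\rm tar}-a_{\rm src}\|=2D_{\rm TV}(\mu_{\rm tar}\|\mu_{\rm src})$. You are right that this conversion is not automatic, but your repair does not close it.

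The first problem is that the coupling is not yours to choose. $\hat a_{\rm src}=f_{\rm inv}(s_{\rm src},s'_{\rm src})$ is determined by the same stored transition that carries $a_{\rm src}$, and the hypothesis fixes only the marginal $\hat a_{\rm src}\sim\mu_{\rm tar}(\cdot|s_{\rm src})$. For other couplings the inequality fails. Take one-dimensional actions with $\mu_{\rm src}(\cdot|s)=\mu_{\rm tar}(\cdot|s)$ uniform on $\{-1,1\}$, so $D_{\rm TV}=0$. Let $\hat a$ be independent of $a$, and let $r(s,a)=\tfrac{L_r}{2}a^2$ on $[-1,1]$, which satisfies Assumption~\ref{ass:reward}. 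Then $\hat r-r=L_r a(\hat a-a)\in\{0,-2L_r\}$, with mean $-L_r$ at every state. So for $\pi$ supported on $\{-1,1\}$ the $Q$-values differ by $L_r/(1-\gamma)$ on average, while the right-hand side is $0$.

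The same example shows that your ``expected reward'' rewriting $\sum_a r(s,a)(\mu_{\rm tar}(a|s)-\mu_{\rm src}(a|s))$ is not what the first-order correction produces. Its mean depends on the joint law of $(a_{\rm src},\hat a_{\rm src})$, not only on the marginals. So a (near-)maximal coupling has to be assumed, not chosen.

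Even granting the maximal coupling, two further steps fail.

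\begin{itemize}
\item \emph{Average versus supremum.} The coupling controls only $\Pr(\hat a\neq a)$ averaged over $a\sim\mu_{\rm src}(\cdot|s)$. Your first step uses $\sup_{s,a}|\hat r-r|$, and the theorem is claimed for every $(s,a)$ and every $\pi$. Conditionally on $a_{\rm src}=a$ with $\mu_{\rm tar}(a|s)=0$, the action is moved with probability one, however small $D_{\rm TV}$ is. The fixed initial action, or $\pi$, can sit exactly on such an $a$. So the sup-norm simulation bound and an averaged TV bound do not connect. What you can actually prove is a behavior-weighted statement, e.g.\ a bound on $V$ for $\pi=\mu_{\rm src}$, rather than on $Q$ at an arbitrary action.
\item \emph{The constant.} On the event $\hat a\neq a$ you only get $L_r\,\mathrm{diam}(\mathcal{A})$, so the bound is $L_r\,\mathrm{diam}(\mathcal{A})\,D_{\rm TV}$. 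Under maximal coupling $\Pr(\hat a\neq a)=D_{\rm TV}$ exactly, so the factor $2$ cannot come from the TV identity or a triangle inequality. It must come from assuming $\mathrm{diam}(\mathcal{A})\le 2$ in the norm dual to the one bounding $\nabla_a r$; for $[-1,1]^d$ with Euclidean norms the diameter is $2\sqrt{d}$. Gradient clipping, which belongs to the implemented rule rather than the theorem's, does not supply this. ``Absorbing the diameter into $L_r$'' is a change to Assumption~\ref{ass:reward} and should be stated as one.
\end{itemize}

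Your premise that relabeling leaves the kernel unchanged is likewise a convention you share with the paper. Replacing $a_{\rm src}$ by $\hat a_{\rm src}$ changes the empirical kernel as a function of the action, which is exactly what Theorem~\ref{theo:transition} relies on.

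With these assumptions made explicit, you obtain a behavior-averaged version of the bound. As stated, neither your argument nor the paper's establishes it.
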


\begin{proof}
    Based on the definition of the Q-value, we have
    \begin{equation*}
        Q(s,a) = \mathbb{E}\left[\sum_{t=0}^\infty \gamma^t r(s_t,a_t){\bigg |} s_0,a_0,\pi\right].
    \end{equation*}
    Since we only modify the reward signals in the empirical MDP (i.e., $\widehat{M}_{\rm src}$ only differs from $\widetilde{M}_{\rm src}$ in terms of the reward signals), given the same policy, the induced trajectories are identical. By using Assumption \ref{ass:reward}, we have
    \begin{align*}
        |Q^{\pi}_{\widetilde{\mathcal{M}}_{\rm src}}(s,a) - Q^{\pi}_{\widehat{\mathcal{M}}_{\rm src}}(s,a)|
        &= \left|\mathbb{E}_\pi \left[\sum_{t=0}^\infty \gamma^t (\hat{r}(s_t,a_t) - r(s_t,a_t)){\bigg |} s_0=s,a_0 = a \right]\right| \\
        &\le \mathbb{E}_\pi \left[\sum_{t=0}^\infty \gamma^t |(\hat{r}(s_t,a_t) - r(s_t,a_t))|{\bigg |} s_0=s,a_0 = a \right] \\
        &= \mathbb{E}_\pi \left[\sum_{t=0}^\infty \gamma^t |\nabla_a r (\hat{a}_{\rm src} - a_{\rm src})| \right].
    \end{align*}
    Due to the fact that $\hat{a}_{\rm src}\sim\mu_{\rm tar}(\cdot|s_{\rm src})$, we denote $a_{\rm tar} = \hat{a}_{\rm src}$. Then, we have
    \begin{align*}
        |Q^{\pi}_{\widetilde{\mathcal{M}}_{\rm src}}(s,a) - Q^{\pi}_{\widehat{\mathcal{M}}_{\rm src}}(s,a)|
        &\le \mathbb{E}_\pi \left[\sum_{t=0}^\infty \gamma^t |\nabla_a r | \| a_{\rm tar} - a_{\rm src}\| \right] \le \mathbb{E}_\pi \left[\sum_{t=0}^\infty \gamma^t L_r \| a_{\rm tar} - a_{\rm src}\| \right] \\
        &\le \sum_{t=0}^\infty \gamma^t L_r \sum\| a_{\rm tar} - a_{\rm src}\| \\
        &= \dfrac{L_r}{1-\gamma} \sum\| a_{\rm tar} - a_{\rm src}\| = \dfrac{2L_r}{1-\gamma} D_{\rm TV}(\mu_{\rm tar}\| \mu_{\rm src}).
    \end{align*}
\end{proof}

\subsection{Proofs of Theorem \ref{theo:performance}}
\begin{theoremApp}[Finite data bound]
    Denote $\widetilde{\mathcal{M}}_{\rm src}$ as the corrected empirical source domain MDP, $n$ is the size of the target domain dataset, $C_1 = \frac{\gamma r_{\rm max}|\mathcal{S}|} {\sqrt{2}(1-\gamma)^2}$, $C_2 = |\mathcal{S}\times\mathcal{A}\times\mathcal{S}|$. Then under Assumption \ref{ass:dynamics}-\ref{ass:reward}, for any policy $\pi$, the following bound holds with probability at least $1-\delta$:
    \begin{equation*}
        J_{\widetilde{\mathcal{M}}_{\rm src}}(\pi) - J_{\mathcal{M}_{\rm tar}}(\pi) \ge - \dfrac{\gamma r_{\rm max}(\kappa + \epsilon)}{(1-\gamma)^2} - C_1\sqrt{\dfrac{1}{n}\ln\dfrac{2C_2}{\delta}}.
    \end{equation*}
\end{theoremApp}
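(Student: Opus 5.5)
The plan is to split the gap through the empirical target MDP:
\begin{equation*}
J_{\widetilde{\mathcal{M}}_{\rm src}}(\pi) - J_{\mathcal{M}_{\rm tar}}(\pi) = \big[J_{\widetilde{\mathcal{M}}_{\rm src}}(\pi) - J_{\widehat{\mathcal{M}}_{\rm tar}}(\pi)\big] + \big[J_{\widehat{\mathcal{M}}_{\rm tar}}(\pi) - J_{\mathcal{M}_{\rm tar}}(\pi)\big].
\end{equation*}
Each bracket is then lower bounded separately. Both use the standard simulation lemma. For two MDPs $\mathcal{M}_1,\mathcal{M}_2$ sharing reward and initial distribution,
\begin{equation*}
J_{\mathcal{M}_1}(\pi)-J_{\mathcal{M}_2}(\pi)=\frac{\gamma}{1-\gamma}\,\mathbb{E}_{(s,a)\sim\rho^\pi_{\mathcal{M}_1}}\Big[\sum_{s'}\big(P_1(s'|s,a)-P_2(s'|s,a)\big)V^\pi_{\mathcal{M}_2}(s')\Big].
\end{equation*}
Since $|V^\pi_{\mathcal{M}_2}|\le r_{\max}/(1-\gamma)$, this gives $|J_{\mathcal{M}_1}(\pi)-J_{\mathcal{M}_2}(\pi)|\le \frac{\gamma r_{\max}}{(1-\gamma)^2}\sup_{s,a}\|P_1(\cdot|s,a)-P_2(\cdot|s,a)\|_1$. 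If the norm is read as TV rather than $\ell_1$, a factor of 2 or $1/2$ must be tracked; I will adopt the convention that matches the constants in the statement.

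\textbf{First bracket.} Apply the simulation lemma with $\mathcal{M}_1=\widetilde{\mathcal{M}}_{\rm src}$ and $\mathcal{M}_2=\widehat{\mathcal{M}}_{\rm tar}$, and use Theorem \ref{theo:transition}, which gives $\|\widetilde{P}_{\rm src}(\cdot|s,a)-\widehat{P}_{\rm tar}(\cdot|s,a)\|\le\kappa+\epsilon$ uniformly. This yields exactly $-\frac{\gamma r_{\max}(\kappa+\epsilon)}{(1-\gamma)^2}$.

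The reward in $\widetilde{\mathcal{M}}_{\rm src}$ is the corrected one, so strictly there is an extra reward term of order $\frac{2L_r}{1-\gamma}D_{\rm TV}$, as in Theorem \ref{theo:q_value}. I will argue it away by treating the corrected reward $\hat r$ as the target reward model $r$ evaluated at the corrected action. This is what the Taylor correction is meant to approximate, so both MDPs share $r$ and the lemma applies directly. I will flag this as an idealization.

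\textbf{Second bracket (main obstacle).} This is a concentration step. For each $(s,a)$ and each $s'$, $\widehat{P}_{\rm tar}(s'|s,a)$ is an empirical frequency. Hoeffding gives $|\widehat{P}_{\rm tar}(s'|s,a)-P_{\rm tar}(s'|s,a)|\le\sqrt{\frac{1}{2n}\ln\frac{2}{\delta'}}$ with probability at least $1-\delta'$. A union bound over all $C_2=|\mathcal{S}\times\mathcal{A}\times\mathcal{S}|$ triples, with $\delta'=\delta/C_2$, makes this hold simultaneously with probability $1-\delta$.

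Summing over $s'$ then gives $\|\widehat{P}_{\rm tar}(\cdot|s,a)-P_{\rm tar}(\cdot|s,a)\|_1\le |\mathcal{S}|\sqrt{\frac{1}{2n}\ln\frac{2C_2}{\delta}}$. Plugging this into the simulation lemma produces $\frac{\gamma r_{\max}|\mathcal{S}|}{\sqrt2(1-\gamma)^2}\sqrt{\frac1n\ln\frac{2C_2}{\delta}}=C_1\sqrt{\frac1n\ln\frac{2C_2}{\delta}}$, as required.

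The delicate point is that $n$ is the total dataset size, not the per-$(s,a)$ count $n(s,a)$. I will handle it by assuming each $(s,a)$ is covered with effectively $n$ samples, which is the convention that matches the stated constant. A more honest version would replace $n$ by $\min_{s,a} n(s,a)$.

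Combining the two one-sided bounds gives the claim.
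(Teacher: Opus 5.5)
Your proposal is correct and follows the paper's proof essentially step for step. It uses the same split through $\widehat{\mathcal{M}}_{\rm tar}$, the same telescoping (simulation) lemma with $\|V^\pi\|_\infty\le r_{\max}/(1-\gamma)$, Theorem~\ref{theo:transition} for the first term, and Hoeffding plus a union bound over the $C_2$ triples with $\ell_1\le|\mathcal{S}|\,\ell_\infty$ for the second. The paper's proof also makes, without comment, the two idealizations you flag: it treats $\widetilde{\mathcal{M}}_{\rm src}$ as sharing the reward $r$ so that the lemma applies, and it uses the total size $n$ in place of per-$(s,a)$ counts.
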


\begin{proof}
    We write the return of a policy in the MDP $\mathcal{M}$ with the following form:
    \begin{equation}
        J_{\mathcal{M}}(\pi) = \mathbb{E}_{s,a\sim\rho_\mathcal{M}^\pi}[r(s,a)].
    \end{equation}
    Note that $\mathcal{M}_{\rm tar}$ denotes the true target domain MDP rather than the empirical target domain MDP $\widehat{\mathcal{M}}_{\rm tar}$. We then decompose the return difference into the following form:
    \begin{equation}
        J_{\widetilde{\mathcal{M}}_{\rm src}}(\pi) - J_{\mathcal{M}_{\rm tar}}(\pi) = \underbrace{J_{\widetilde{\mathcal{M}}_{\rm src}}(\pi) - J_{\widehat{\mathcal{M}}_{\rm tar}}(\pi)}_{\rm := (i)}  + \underbrace{J_{\widehat{\mathcal{M}}_{\rm tar}}(\pi) - J_{\mathcal{M}_{\rm tar}}(\pi)}_{\rm := (ii)}.
    \end{equation}
    To show the desired conclusion, we need the following lemma.
    \begin{lemma}[Telescoping lemma]
    \label{lemma:telescopr}
    Denote $\mathcal{M}_1 = ( \mathcal{S},\mathcal{A}, P_1, r,\gamma )$ and $\mathcal{M}_2 = (\mathcal{S},\mathcal{A},P_2,r,\gamma)$ as two MDPs that only differ in their transition dynamics. Then for any policy $\pi$, we have
    \begin{equation}
        J_{\mathcal{M}_1}(\pi) - J_{\mathcal{M}_2}(\pi) = \dfrac{\gamma}{1-\gamma} \mathbb{E}_{\rho_{\mathcal{M}_1}^\pi(s,a)}\left[ \mathbb{E}_{s^\prime\sim P_1}[V_{\mathcal{M}_2}^\pi(s^\prime)] - \right . 
        \left . \mathbb{E}_{s^\prime\sim P_2}[V_{\mathcal{M}_2}^\pi(s^\prime)] \right].
    \end{equation}
\end{lemma}
The proof of the above lemma can be found in \cite{luo2018algorithmic}. 

    For term (i), we use the above lemma and have
    {\small
    \begin{align*}
        \left| J_{\widetilde{\mathcal{M}}_{\rm src}}(\pi) - J_{\widehat{\mathcal{M}}_{\rm tar}}(\pi) \right|
        &= \left|\dfrac{\gamma}{1-\gamma} \mathbb{E}_{\rho_{\widetilde{\mathcal{M}}_{\rm src}}^\pi}\left[ \mathbb{E}_{s^\prime\sim \widetilde{P}_{\rm src}}[V_{\widehat{\mathcal{M}}_{\rm tar}}^\pi(s^\prime)] - \mathbb{E}_{s^\prime\sim \widehat{P}_{\rm tar}}[V_{\widehat{\mathcal{M}}_{\rm tar}}^\pi(s^\prime)] \right] \right| \\
        &\le \dfrac{\gamma}{1-\gamma} \mathbb{E}_{\rho_{\widetilde{\mathcal{M}}_{\rm src}}^\pi} \left| \mathbb{E}_{s^\prime\sim \widetilde{P}_{\rm src}}[V_{\widehat{\mathcal{M}}_{\rm tar}}^\pi(s^\prime)] - \mathbb{E}_{s^\prime\sim \widehat{P}_{\rm tar}}[V_{\widehat{\mathcal{M}}_{\rm tar}}^\pi(s^\prime)] \right| \\
        &\le \dfrac{\gamma}{1-\gamma} \mathbb{E}_{\rho_{\widetilde{\mathcal{M}}_{\rm src}}^\pi} \left| \sum_{s^\prime}\left[ \widetilde{P}_{\rm src}(s^\prime|s,a) - \widehat{P}_{\rm tar}(s^\prime|s,a) \right] V_{\widehat{\mathcal{M}}_{\rm tar}}^\pi(s^\prime) \right| \\
        &\le \dfrac{\gamma}{1-\gamma} \dfrac{r_{\rm max}}{1-\gamma}\mathbb{E}_{\rho_{\widetilde{\mathcal{M}}_{\rm src}}^\pi} \left| \sum_{s^\prime}\left[ \widetilde{P}_{\rm src}(s^\prime|s,a) - \widehat{P}_{\rm tar}(s^\prime|s,a) \right] \right| \\
        &\le \dfrac{\gamma r_{\rm max}}{(1-\gamma)^2} \mathbb{E}_{\rho_{\widetilde{\mathcal{M}}_{\rm src}}^\pi} \sum_{s^\prime}\left| \widetilde{P}_{\rm src}(s^\prime|s,a) - \widehat{P}_{\rm tar}(s^\prime|s,a) \right|\\
        &= \dfrac{\gamma r_{\rm max}}{(1-\gamma)^2} \mathbb{E}_{\rho_{\widetilde{\mathcal{M}}_{\rm src}}^\pi} \left\| \widetilde{P}_{\rm src}(s^\prime|s,a) - \widehat{P}_{\rm tar}(s^\prime|s,a) \right\|_1 \\
        &\le \dfrac{\gamma r_{\rm max}}{(1-\gamma)^2} (\kappa + \epsilon),
    \end{align*}
    }
    where the last inequality holds due to Theorem \ref{theo:transition}. We also use the fact that $V_{\mathcal{M}}^\pi(s)\le \dfrac{r_{\rm max}}{1-\gamma}$.
    
    Then we have,
    \begin{align*}
        J_{\widetilde{\mathcal{M}}_{\rm src}}(\pi) - J_{\widehat{\mathcal{M}}_{\rm tar}}(\pi) \ge - \dfrac{\gamma r_{\rm max}}{(1-\gamma)^2} (\kappa + \epsilon),
    \end{align*}

    For term (ii), we also use the above lemma and have
    {\small
    \begin{align*}
        J_{\widehat{\mathcal{M}}_{\rm tar}}(\pi) - J_{\mathcal{M}_{\rm tar}}(\pi)
        &= \dfrac{\gamma}{1-\gamma} \mathbb{E}_{\rho_{\widehat{\mathcal{M}}_{\rm tar}}^\pi}\left[ \mathbb{E}_{s^\prime\sim\widehat{P}_{\rm tar}}[V_{\mathcal{M}_{\rm tar}}^\pi(s^\prime)] -  \mathbb{E}_{s^\prime\sim P_{\rm tar}}[V_{\mathcal{M}_{\rm tar}}^\pi(s^\prime)] \right] \\
        &= \dfrac{\gamma}{1-\gamma} \mathbb{E}_{\rho_{\widehat{\mathcal{M}}_{\rm tar}}^\pi}\left[ \sum_{s^\prime}\left[ \widehat{P}_{\rm tar}(s^\prime|s,a) - P_{\rm tar}(s^\prime|s,a) \right] V_{\mathcal{M}_{\rm tar}}^\pi(s^\prime) \right] \\
        &\ge - \dfrac{\gamma}{1-\gamma} \mathbb{E}_{\rho_{\widehat{\mathcal{M}}_{\rm tar}}^\pi}\left[ \sum_{s^\prime}\left| \widehat{P}_{\rm tar}(s^\prime|s,a) - P_{\rm tar}(s^\prime|s,a) \right| V_{\mathcal{M}_{\rm tar}}^\pi(s^\prime) \right] \\
        &\ge - \dfrac{\gamma}{1-\gamma} \dfrac{r_{\rm max}}{1-\gamma}\mathbb{E}_{\rho_{\widehat{\mathcal{M}}_{\rm tar}}^\pi}\left[ \sum_{s^\prime}\left| \widehat{P}_{\rm tar}(s^\prime|s,a) - P_{\rm tar}(s^\prime|s,a) \right|  \right] \\
        &= -\dfrac{\gamma r_{\rm max}}{(1-\gamma)^2} \mathbb{E}_{\rho_{\widehat{\mathcal{M}}_{\rm tar}}^\pi} \left\| \widehat{P}_{\rm tar}(s^\prime|s,a) - P_{\rm tar}(s^\prime|s,a) \right\|_1.
    \end{align*}
    }
    Note that $P_{\rm tar}(s^\prime|s,a)$ and $\widehat{P}_{\rm tar}(s^\prime|s,a)$ returns the probability of the next state under a given state-action pair $(s,a)$. Then under a fixed $(s,a)$, we have
    {\small
    \begin{align*}
        \left\| \widehat{P}_{\rm tar}(s^\prime|s,a) - P_{\rm tar}(s^\prime|s,a) \right\|_1 \le \left|\mathcal{S}\right| \left\| \widehat{P}_{\rm tar}(s^\prime|s,a) - P_{\rm tar}(s^\prime|s,a) \right\|_\infty.
    \end{align*}
}
    For $\left\| \widehat{P}_{\rm tar}(s^\prime|s,a) - P_{\rm tar}(s^\prime|s,a) \right\|_\infty$, we bound it with the Hoeffding's inequality and union bound,
    {\small
    \begin{align*}
        \mathbb{P}\left( \left| \widehat{P}_{\rm tar}(s^\prime|s,a) - P_{\rm tar}(s^\prime|s,a) \right|>\epsilon \right) \le 2 |\mathcal{S}\times\mathcal{A}\times\mathcal{S}|\exp(-2n\epsilon^2),
    \end{align*}
    }
    where $n$ is the size of the target domain offline dataset, $\mathbb{P}(\cdot)$ is the probability measure. To make the above probability less than $\delta$, we have
    \begin{align*}
        &\qquad 2|\mathcal{S}\times\mathcal{A}\times\mathcal{S}|\exp(-2n\epsilon^2) < \delta \\
        &\Rightarrow \epsilon > \sqrt{\dfrac{1}{2n}\ln\dfrac{2|\mathcal{S}\times\mathcal{A}\times\mathcal{S}|}{\delta}}.
    \end{align*}
    That said,
    {\small
    \begin{align*}
        \mathbb{P}\left( \left| \widehat{P}_{\rm tar}(s^\prime|s,a) - P_{\rm tar}(s^\prime|s,a) \right|> \sqrt{\dfrac{1}{2n}\ln\dfrac{2|\mathcal{S}\times\mathcal{A}\times\mathcal{S}|}{\delta}} \right) < \delta,
    \end{align*}
    }
    Therefore, with probability at least $1-\delta$, we have
    \begin{align*}
        \left\| \widehat{P}_{\rm tar}(s^\prime|s,a) - P_{\rm tar}(s^\prime|s,a) \right\|_\infty\le \sqrt{\dfrac{1}{2n}\ln\dfrac{2|\mathcal{S}\times\mathcal{A}\times\mathcal{S}|}{\delta}}.
    \end{align*}
    We then bound term (ii) below,
    \begin{align*}
        J_{\widehat{\mathcal{M}}_{\rm tar}}(\pi) - J_{\mathcal{M}_{\rm tar}}(\pi)
        &= -\dfrac{\gamma r_{\rm max}}{(1-\gamma)^2} \mathbb{E}_{\rho_{\widehat{\mathcal{M}}_{\rm tar}}^\pi} \left\| \widehat{P}_{\rm tar}(s^\prime|s,a) - P_{\rm tar}(s^\prime|s,a) \right\|_1 \\
        &\ge -\dfrac{\gamma r_{\rm max}}{(1-\gamma)^2} |\mathcal{S}|\sqrt{\dfrac{1}{2n}\ln\dfrac{2|\mathcal{S}\times\mathcal{A}\times\mathcal{S}|}{\delta}} \\
        &= -\dfrac{\gamma r_{\rm max}|\mathcal{S}|}{\sqrt{2}(1-\gamma)^2}\sqrt{\dfrac{1}{n}\ln\dfrac{2|\mathcal{S}\times\mathcal{A}\times\mathcal{S}|}{\delta}}.
    \end{align*}
    By denoting $C_1 = \dfrac{\gamma r_{\rm max}|\mathcal{S}|}{\sqrt{2}(1-\gamma)^2}, C_2 = |\mathcal{S}\times\mathcal{A}\times\mathcal{S}|$ and combining the bounds of term (i) and term (ii), the conclusion follows as is.
\end{proof}

\section{Environment Setting}
\label{appsec:Environment Setting}
In this section, we introduce the experimental setup used to evaluate STC. We first describe the datasets, followed by details of the three domain shift settings that we adopt.

\begin{figure*}[t]

  \centering
    \centering
\includegraphics[width=\linewidth]{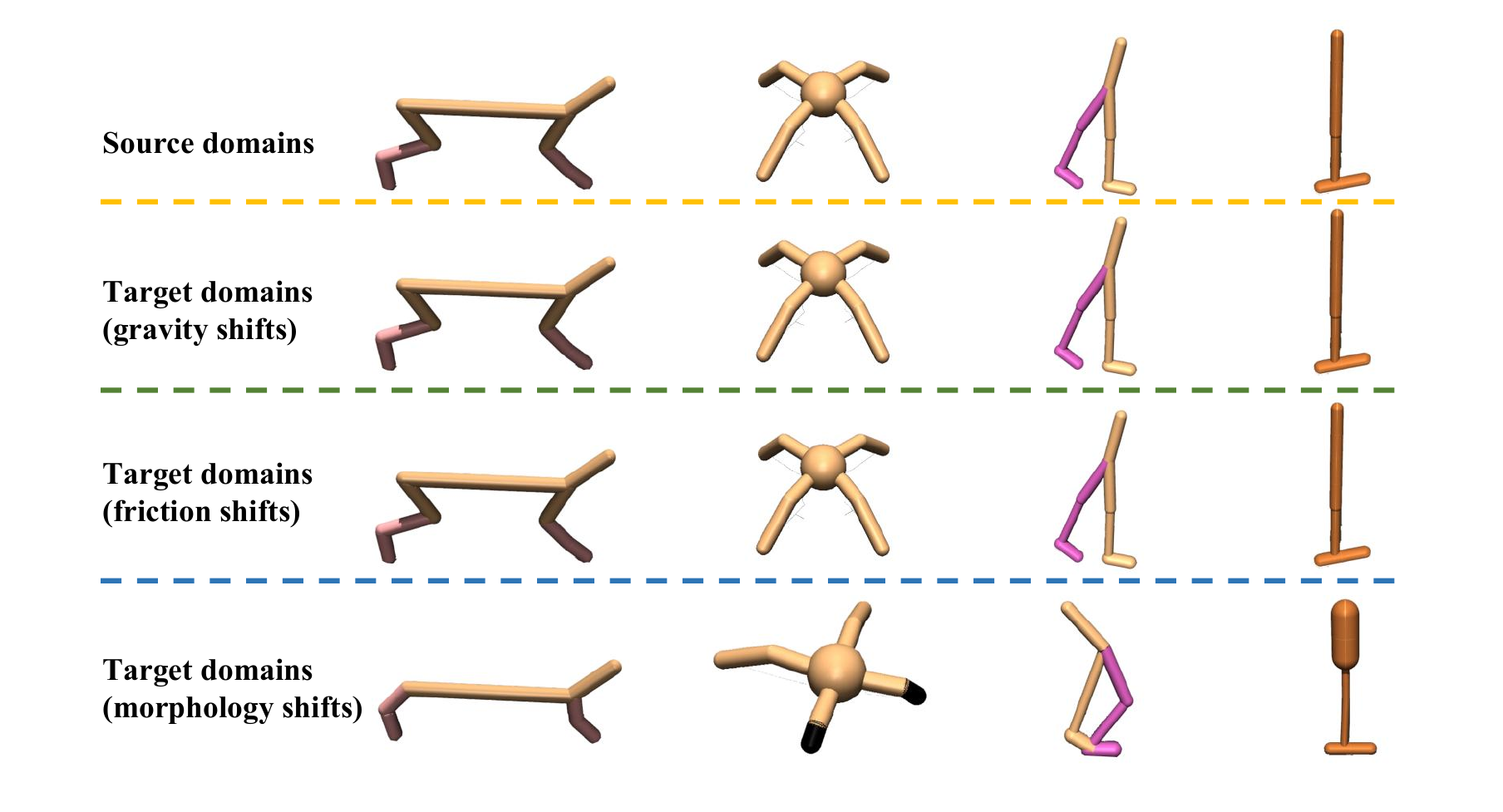}
    \caption{\textbf{Illustration of the adopted environments.} Target domain robots differ from source domain robots (top) by gravity shifts (second row), friction shifts (third row), or morphology shifts (bottom).}
    \label{fig:app_environment}

\end{figure*}

\subsection{Datasets}
We directly adopt the MuJoCo datasets from D4RL \cite{Fu2020D4RLDF} as our source domain datasets. These datasets are collected through interactions with continuous control environments in Gym \cite{Brockman2016OpenAIG}, simulated using MuJoCo \cite{Todorov2012MuJoCoAP}. We select four representative tasks: \textit{HalfCheetah, Hopper, Walker2d}, and \textit{Ant}, and utilize datasets of three different quality levels: \textit{medium, medium-replay}, and \textit{medium-expert}. 

For the target domain datasets, we consider three types of dynamics shifts: gravity shift, friction shift, and morphology shift, across four MuJoCo tasks (Ant, Hopper, HalfCheetah, Walker2d) from the ODRL benchmark \cite{lyu2024odrl}. Figure \ref{fig:app_environment} presents visual comparisons between the source and target domain agents. Detailed configurations for each task are provided in later sections. We use datasets of three quality levels: \textit{medium}, \textit{medium-expert}, and \textit{expert}. The expert dataset is collected using a SAC policy trained for 1 million steps. The medium dataset is generated using checkpoints with performance around one-half or one-third of the expert policy. The medium-expert dataset is composed of 2 trajectories from the medium set and 3 from the expert set. Due to our focus on settings with restricted access to target domain data in main experiments, the target domain dataset contains approximately 5,000 transitions.
\begin{table}[htbp]
    \caption{\textbf{The Reference min scores $J_r$ and max scores $J_e$ for tasks under dynamics shifts.} The scores are used to compute normalized scores in the target domain.}
  \centering
  \begin{tabular}{lccc}
    \toprule
    Task Name &  Dynamics shift type & $J_r$ & $J_e$ \\
    \midrule
    halfcheetah & gravity & -280.18 & 9509.15 \\
    halfcheetah& morphology & -280.18 & 12135.00 \\
    halfcheetah & friction & -280.18 & 7357.07 \\
    hopper & gravity & -26.34 & 3234.30 \\
    hopper& morphology & -26.34 & 3234.30 \\
    hopper & friction & -26.34 & 3234.30 \\
    walker2d & gravity &10.08 & 5194.71 \\
    walker2d& morphology & 10.08 & 4592.30 \\
    walker2d & friction &10.08 & 4229.35 \\
    ant & gravity & -325.60 & 4317.07  \\
    ant & morphology & -325.60 & 5139.83 \\
    ant & friction & -325.60 & 8301.34 \\
    \bottomrule
  \end{tabular}

      \label{tab:reference_score}
\end{table}
\subsection{Metrics}
To ensure that the results are interpretable across different tasks, we follow ODRL \cite{lyu2024odrl} and adopt the normalized score (NS) in the target domain as the evaluation metric:
\begin{equation}
    \text{NS} = \frac{J - J_r}{J_e - J_r} \times 100,
\end{equation}
 where $J$, $J_e$, and $J_r$ denote the returns of the learned policy, the expert policy and the random policy in the target domain, respectively. We list the reference scores of $J_r$ and $J_e$ under different dynamics shift scenarios in Table \ref{tab:reference_score}.

\subsection{Gravity Shift Tasks}
Gravity shifts are introduced by editing the environment XML files, where the gravitational acceleration in the target domain is set to 50\% of that in the source domain, with the force direction preserved.

\paragraph{\textit{halfcheetah / hopper / walker2d / ant-gravity:}}The modifications of the XML file gives:
\vspace{0.5em}
\begin{lstlisting}[language=Python, label={lst:example1}]
# gravity
<option gravity="0 0 -4.905" timestep="0.01"/>
\end{lstlisting}

\subsection{Morphology Shift Tasks}

The morphology shift modifies the size of specific limbs or torsos of the simulated robot in the target domain. We modify the XML files of each environment to introduce task-specific changes, as detailed below.
\paragraph{\textit{halfcheetah-morph}:} The sizes of the back thigh and the forward thigh of the Cheetah
robot are revised as below:

\vspace{0.5em}
\begin{lstlisting}[language=Python, label={lst:example2}]
<geom fromto="0 0 0 0.08 0 -0.08" name="bthigh" size="0.046" type="capsule"/>
<body name="bshin" pos="0.08 0 -0.08">
<geom fromto="0 0 0 -.13 0 -.15" name="bshin" rgba="0.9 0.6 0.6 1" size="0.046" type="capsule"/>
<body name="bfoot" pos="-.13 0 -.15">
<geom fromto="0 0 0 -0.07 0 -0.08" name="fthigh" size="0.046" type="capsule"/>
<body name="fshin" pos="-0.07 0 -0.08">
<geom fromto="0 0 0 .11 0 -.13" name="fshin" rgba="0.9 0.6 0.6 1" size="0.046" type="capsule"/>
<body name="ffoot" pos=".11 0 -.13">
\end{lstlisting}

\paragraph{\textit{hopper-morph}:} The foot size is revised to be 0.6 times of that in the source domain:

\vspace{0.5em}
\begin{lstlisting}[language=Python, label={lst:example3}]
<geom friction="2.0" fromto="-0.078 0 0.1 0.156 0 0.1" name="foot_geom" size="0.036" type="capsule"/>
\end{lstlisting}

\paragraph{\textit{walker2d-morph}:} The leg size of the robot is revised to be 0.5 times of that in
the source domain:

\vspace{0.5em}
\begin{lstlisting}[language=Python, label={lst:example4}]
<geom friction="0.9" fromto="0 0 1.05 0 0 0.35" name="thigh_geom" size="0.05" type="capsule"/>
<joint axis="0 -1 0" name="leg_joint" pos="0 0 0.35" range="-150 0" type="hinge"/>
<geom friction="0.9" fromto="0 0 0.35 0 0 0.1" name="leg_geom" size="0.04" type="capsule"/>
<geom friction="0.9" fromto="0 0 1.05 0 0 0.35" name="thigh_left_geom" rgba=".7 .3 .6 1" size="0.05" type="capsule"/>
<joint axis="0 -1 0" name="leg_left_joint" pos="0 0 0.35" range="-150 0" type="hinge"/>
<geom friction="0.9" fromto="0 0 0.35 0 0 0.1" name="leg_left_geom" rgba=".7 .3 .6 1" size="0.04" type="capsule"/>
\end{lstlisting}

\paragraph{\textit{ant-morph}:} The sizes of the front two legs are revised to be 0.5 times of those in the source domain:

\vspace{0.5em}
\begin{lstlisting}[language=Python, label={lst:example5}]
<geom fromto="0.0 0.0 0.0 0.2 0.2 0.0" name="left_ankle_geom" size="0.08" type="capsule"/>
<geom fromto="0.0 0.0 0.0 -0.2 0.2 0.0" name="right_ankle_geom" size="0.08" type="capsule"/>
\end{lstlisting}

\subsection{Friction Shift Tasks}

The friction shift is introduced by modifying the friction attributes in each environment, setting them to 0.5 times the values used in the source domain.  
\paragraph{\textit{halfcheetah / hopper / walker2d / ant-friction:}} The corresponding XML files are modified accordingly, as detailed below:
\vspace{0.5em}
\begin{lstlisting}[language=Python, label={lst:example6}]
<geom conaffinity="0" condim="3" density="5.0" friction="0.5 0.25 0.25" margin="0.01" rgba="0.8 0.6 0.4 1"/>
\end{lstlisting}

\section{Algorithmic Implementation}
\label{appsec:Algorithmic Implementation}
In this section, we present the implementation details of our proposed method, STC, as well as all baseline approaches considered in this paper. In addition, we report the corresponding hyperparameter configurations for each method.

\subsection{Implementation Details}
\paragraph{IQL:} Implicit Q-Learning (IQL) \cite{kostrikov2022offline} is a popular offline RL algorithm that learns policies solely from in-sample data without querying out-of-distribution samples. However, we observe that training IQL only on the target domain dataset yields suboptimal policies. To address this, we modify IQL to jointly leverage both source and target domain data. The state value function in IQL is trained via expectile regression:  
\begin{equation}
    \label{eq:iqlv}
    \mathcal{L}_V = \mathbb{E}_{(s,a)\sim D_{\rm src}\cup D_{\rm tar}}[L_2^\tau(Q_{\theta^\prime}(s,a) - V_\psi(s))],
\end{equation}
where \(L_2^\tau(u) = |\tau - \mathbf{1}(u<0)| u^2\) and \(\theta^\prime\) denotes target network parameters. The Q-function update minimizes:  
\begin{equation}
    \label{eq:iqlq}
    \mathcal{L}_Q = \mathbb{E}_{(s,a,r,s^\prime)\sim D_{\rm src}\cup D_{\rm tar}}[(r(s,a) + \gamma V_\psi(s^\prime) - Q_{\theta}(s,a))^2].
\end{equation}

Then the policy is updated by:
\begin{equation}
    \label{eq:iqlpolicy}
    \mathcal{L}_{\rm actor} = \mathbb{E}_{(s,a)\sim D_{\rm src}\cup D_{\rm tar}} \left[ \exp(\beta_{\rm IQL} A(s,a)) \log \pi_\phi(a|s) \right],
\end{equation}
where \(A(s,a) = Q(s,a) - V(s)\) is the advantage function, and \(\beta_{\rm IQL}\) is the inverse temperature coefficient. We implement IQL based on the official codebase\footnote{https://github.com/ikostrikov/implicit\_q\_learning} and adopt symmetric sampling when sampling data from the source dataset and target dataset.

\paragraph{DARA:} DARA \cite{liu2022dara} is the offline version of DARC \cite{eysenbach2021offdynamics}. It also trains two domain classifiers, \(q_{\theta_{\rm SAS}}(\text{target}|s_t,a_t,s_{t+1})\) and \(q_{\theta_{\rm SA}}(\text{target}|s_t,a_t)\), with objectives:
\begin{align*}
    \mathcal{L}(\theta_{\rm SAS}) &= 
    \mathbb{E}_{D_{\rm tar}} \left[
        \log q_{\theta_{\rm SAS}}(\text{target}|s_t,a_t,s_{t+1}) 
    \right]  + 
    \mathbb{E}_{D_{\rm src}} \left[
        \log (1 - q_{\theta_{\rm SAS}}(\text{target}|s_t,a_t,s_{t+1})) 
    \right], \\
    \mathcal{L}(\theta_{\rm SA}) &= 
    \mathbb{E}_{D_{\rm tar}} \left[
        \log q_{\theta_{\rm SA}}(\text{target}|s_t,a_t) 
    \right]  + 
    \mathbb{E}_{D_{\rm src}} \left[
        \log (1 - q_{\theta_{\rm SA}}(\text{target}|s_t,a_t)) 
    \right].
\end{align*}
The classifiers are employed to estimate the dynamics gap \(\log \frac{P_{\mathcal{M}_{\rm tar}}(s_{t+1}|s_t,a_t)}{P_{\mathcal{M}_{\rm src}}(s_{t+1}|s_t,a_t)}\) between the source and target domains, which is used to adjust the source domain rewards:
{\small
\begin{equation}
\label{eq:darareward}
\begin{aligned}
    \hat{r}_{\rm DARA} &= r - \lambda \times \delta_r, \\
    \delta_r(s_t,a_t) &=
    -\log \left(
    \frac{
        q_{\theta_{\rm SAS}}(\text{target}|s_t,a_t,s_{t+1}) 
        \cdot q_{\theta_{\rm SA}}(\text{source}|s_t,a_t)
    }{
        q_{\theta_{\rm SAS}}(\text{source}|s_t,a_t,s_{t+1}) 
        \cdot q_{\theta_{\rm SA}}(\text{target}|s_t,a_t)
    }
    \right),
\end{aligned}
\end{equation}
}
where \(\lambda\) controls the penalty strength. We empirically find that setting \(\lambda=1\) or higher often degrades performance, so we use \(\lambda=0.1\) by default. Our implementation follows the attached code on its OpenReview page\footnote{https://openreview.net/forum?id=9SDQB3b68K}, and use IQL as the base algorithm for DARA to maintain consistency with other methods. To ensure training stability, we clip the penalty term within \([-10, 10]\).

\paragraph{BOSA:} To tackle cross-domain offline RL, BOSA \cite{liu2024beyond} introduces two support constraints: one for policy learning to alleviate the out-of-distribution (OOD) state-action problem, and another for value learning to handle the OOD dynamics issue. Specifically, the critics of BOSA are trained using:
\begin{equation}
    \label{eq:bosacritic}
    \begin{aligned}
    \mathcal{L}_{\rm critic} =& \mathbb{E}_{(s,a)\sim D_{\rm src}}[Q_{\theta_i}(s,a)]  + \mathbb{E}_{\substack{(s,a,r,s^\prime)\sim D_{\rm src}\cup D_{\rm tar}, \\
    a^\prime\sim\pi_\phi(\cdot|s)}}\Big[\mathbf{1}(\hat{P}_{\rm tar}(s^\prime|s,a) > \epsilon)(Q_{\theta_i}(s,a) - y)^2 \Big],
    \end{aligned}
\end{equation}
where $\mathbf{1}(\cdot)$ denotes an indicator function, $\hat{P}_{\rm tar}(s^\prime|s,a) = \arg\max\mathbb{E}_{(s,a,s^\prime)\sim D_{\rm tar}}[\log \hat{P}_{\rm tar}(s^\prime|s,a)]$ is the estimated transition model for the target domain, and $\epsilon$ is a filtering threshold. The index $i\in\{1,2\}$ indicates two critics.
The actor is trained via a supported policy optimization objective:
\begin{equation}
    \label{bosapolicy}
\begin{aligned}
    \mathcal{L}_{\rm actor}& = \mathbb{E}_{s\sim D_{\rm src}\cup D_{\rm tar},a\sim\pi_\phi(s)}[Q_{\theta_i}(s,a)], \quad
     \text{s.t.} \;\; \mathbb{E}_{s\sim D_{\rm src}\cup D_{\rm tar}}[\hat{\pi}_{\phi_{\rm mix}}(\pi_\phi(s)|s)] > \epsilon^\prime,
\end{aligned}
\end{equation}
where $\hat{\pi}_{\phi_{\rm mix}}$ is a learned behavior model over the combined dataset, and $\epsilon^\prime$ is a predefined threshold.
BOSA models both the transition dynamics in the target domain and the behavior policy of the mixed dataset using CVAE. As there is no official implementation, we use the BOSA's implementation by ODRL \cite{lyu2024odrl}, which adopts SPOT \cite{wu2022supported} as its backbone. In our experiments, BOSA is trained with 1M gradient steps using samples drawn from both source and target domain datasets.

\paragraph{SRPO:} SRPO~\cite{xue2024state} formulates policy learning as a constrained optimization problem:
{\small
\begin{equation}
    \label{eq:srpomotivation}
    \max_\pi \mathbb{E}_{s_t,a_t\sim\tau_\pi} \left[\sum_{t=0}^\infty \gamma^t r(s_t,a_t)\right] \quad \text{s.t.} \quad D_{\rm KL}(d_\pi(\cdot)\| \zeta(\cdot)) < \epsilon,
\end{equation}
}
where $\tau_\pi$ denotes the trajectory under policy $\pi$, $d_\pi(\cdot)$ is the corresponding stationary state distribution, and $\zeta(\cdot)$ represents the optimal state distribution under alternate dynamics. The problem then can be transformed into the unconstrained optimization problem via Lagrange multipliers, where the logarithm of probability density ratio $\lambda \log\frac{\zeta(s_t)}{d_\pi(s_t)}$ is added to the vanilla reward term. In practice, SRPO samples a batch of $N$ transitions from the combined dataset $D_{\rm src} \cup D_{\rm tar}$, ranks them by estimated state value, and labels the top $\rho N$ transitions as \textit{real}, with the remaining marked as \textit{fake}. A discriminator $D_\delta(s)$ is trained to classify them, and the reward is modified as:
\begin{equation}
    \label{eq:srporeward}
    \hat{r}_{\rm SRPO} = r + \lambda \times \frac{D_\delta(s)}{1 - D_\delta(s)},
\end{equation}
where $\lambda$ is a scaling coefficient. Following the original setup, we set $\rho=0.5$ in all experiments. As no official implementation is available, we reproduce SRPO based on the descriptions in the paper.


\paragraph{IGDF:} IGDF~\cite{wen2024contrastive} leverages contrastive learning to capture the dynamics discrepancy between source and target domains. A score function $h(\cdot)$ is trained using positive samples $(s,a,s^\prime_{\rm tar}) \sim D_{\rm tar}$ and negative samples constructed by pairing $(s,a)\sim D_{\rm tar}$ with $s^\prime_{\rm src} \sim D_{\rm src}$, forming $(s,a,s^\prime_{\rm src})$. The training objective is:
{\small
\begin{equation}
    \label{eq:contrastive}
    \mathcal{L}_{\rm contrastive} = -\mathbb{E}_{(s,a,s^\prime_{\rm tar})} \mathbb{E}_{S^{\prime -}} \left[ \log \frac{h(s,a,s^\prime_{\rm tar})}{\sum_{s^\prime \sim S^{\prime -} \cup \{s^\prime_{\rm tar}\}} h(s,a,s^\prime)} \right],
\end{equation}
}
where $S^{\prime -}$ denotes a set of negative next states.
To parameterize $h$, IGDF employs two neural networks $\phi(s,a)$ and $\psi(s^\prime)$ to encode state-action and state representations, respectively, and defines the score function as:
\begin{equation}
    \label{eq:igdfscorefunction}
    h(s,a,s^\prime) = \exp\left( \phi(s,a)^{T} \psi(s^\prime) \right).
\end{equation}
Using the learned score function, IGDF selectively incorporates source domain samples into critic training by filtering transitions with low dynamics consistency:
\begin{equation}
    \label{eq:igdfvaluefunction}
\begin{aligned}
    \mathcal{L}_{\rm critic} &= \frac{1}{2} \mathbb{E}_{D_{\rm tar}}\left[ (Q_\theta - \mathcal{T}Q_\theta)^2 \right] \\&\quad\quad\quad+  \frac{1}{2} \alpha \cdot h(s,a,s^\prime) \mathbb{E}_{(s,a,s^\prime) \sim D_{\rm src}} \left[ \mathbf{1}(h(s,a,s^\prime) > h_{\xi\%}) (Q_\theta - \mathcal{T}Q_\theta)^2 \right],
\end{aligned}
\end{equation}
where $\alpha$ controls the influence of the source domain loss, and $\xi$ denotes the percentile threshold for filtering source domain samples. We adopt the official implementation\footnote{https://github.com/BattleWen/IGDF} to run IGDF and use IQL as the backbone throughout all experiments.

\paragraph{OTDF}
OTDF \cite{lyu2025crossdomain} aims to mitigate domain shift by selectively utilizing source domain data aligned with the target domain via optimal transport (OT). It first solves an OT problem to align source and target datasets, computing per-sample deviations $\{d_t\}_{t=1}^{|D_{\rm src}|}$ that quantify alignment quality with:
\begin{equation}
\begin{aligned}
d(u_t) = - \sum_{t'=1}^{|D_{{\text{tar}}}|} C(u_t, u'_{t'}) \mu^*_{t,t'}, 
\quad u_t = (s^{t}_{\text{src}}, a^{t}_{\text{src}}, (s'_{\text{src}})^{t}) \sim D_{\text{src}}.
\end{aligned}
\end{equation}
These deviations are appended to source transitions, forming an augmented dataset $\hat{D}_{\rm src}=\{(s_t,a_t,s'_t,r_t,d_t)\}_{t=1}^{|D_{\rm src}|}$. Then a CVAE policy is trained on $D_{\rm tar}$ to model its behavior policy, which is later used for policy regularization. At each iteration, mini-batches are sampled from both $D_{\rm tar}$ and $\hat{D}_{\rm src}$. The top $\xi\%$ of source samples—those best aligned with the target—are retained, and their critic losses are weighted by the normalized deviations $\hat{d}_i=\frac{d_i - \max d_i}{\max d_i - \min d_i},i\in\{1,2,\ldots,N\}$. The state-action value function $Q_\theta$ is optimized via:
\begin{equation}
\begin{aligned}
\mathcal{L}_Q =& \mathbb{E}_{D_{\rm tar}}[(Q_\theta - y)^2] + \mathbb{E}_{(s,a,s',r,d)\sim{\hat{D}_{\rm src}}}[\exp(\hat{d}) \cdot \mathbf{1}(d > d_{\xi\%}) (Q_\theta - y)^2],
\end{aligned}
\end{equation}
where $y = r + \gamma V_\psi(s')$. The policy is optimized using advantage-weighted regression (AWR) and a regularization term based on CVAE-decoded actions:
\begin{equation}
\begin{aligned}
\mathcal{L}_\pi = &\mathbb{E}_{(s,a)\sim{\hat{D}_{\rm src}\cup D_{\rm tar} }}[\exp(\beta_{\rm IQL} \cdot A) \log \pi_\phi(a|s)] - \beta \cdot \mathbb{E}_{s\sim{\hat{D}_{\rm src}\cup D_{\rm tar} }}\left[\log \sum_{i=1}^M \hat{\pi}^{i}_{\rm tar}(\pi(\cdot|s)|s)\right],
\end{aligned} 
\end{equation}
where $A$ is the advantage. We run OTDF by following its official codebase\footnote{https://github.com/dmksjfl/OTDF}, with IQL as the backbone.

\paragraph{STC}
Different from the aforementioned methods, STC mitigates the dynamics gap by selectively correcting source domain transitions. As this work focuses on cross-domain offline RL, we use the target domain offline dataset to pretrain the inverse policy model, forward dynamics model, and reward model for 50{,}000 steps via Equation \ref{eq:inv}, \ref{eq:fwd} and \ref{eq:r_loss}. After the pretraining phase, each training iteration begins by sampling mini-batches from both the source and target domains. For source domain transitions, we first perform action correction using the inverse policy model as defined in Equation~\ref{eq:a}, and estimate the corresponding corrected rewards via a first-order Taylor approximation (Equation~\ref{eq:r}). To enhance the reliability of the correction process, we compute the dynamics discrepancy between the corrected and target transitions using Equation~\ref{eq:error}, and selectively retain those transitions that better conform to the target dynamics based on a thresholding criterion (Equation~\ref{eq:select}), where the correction threshold $\lambda$ serves as a hyperparameter. Subsequently, the value function is updated by minimizing the temporal-difference (TD) error. We adopt a Q-value-weighted behavior cloning term for the policy optimization objective (Equation~\ref{eq:update_p}), which encourages the policy to maximize the estimated Q-values while remaining close to the behavior policy. We implement STC based on the IQL framework, and provide its detailed pseudocode in Appendix~\ref{appsec:pesudocode}.

\begin{table}[htbp]
  \centering
    \caption{\textbf{Hyperparameter setup for STC and baselines.}}
  \label{tab:app_hyperparams}
  \renewcommand{\arraystretch}{0.93} 
  \begin{tabular}{lr}
    \toprule
    \textbf{Hyperparameter} & \textbf{Value} \\
    \midrule
    \textbf{Shared} & \\
    \quad Actor network & (256, 256) \\
    \quad Critic network & (256, 256) \\
    \quad Learning rate & $3 \times 10^{-4}$ \\
    \quad Optimizer & Adam  \\
    \quad Discount factor & 0.99 \\
    \quad Nonlinearity & ReLU \\
    \quad Target update rate & $5 \times 10^{-3}$ \\
    \quad Source domain Batch size & 128 \\
    \quad Target domain Batch size & 128 \\
    \midrule
    \textbf{IQL} & \\
    \quad Temperature coefficient & 0.2 \\
    \quad Maximum log std & 2 \\
    \quad Minimum log std & $-20$ \\
    \quad Inverse temperature parameter $\beta_{\text{IQL}}$ & 3.0 \\
    \quad Expectile parameter $\tau$ & 0.7 \\
    \midrule
    \textbf{DARA} & \\
    \quad Temperature coefficient & 0.2 \\
    \quad Classifier network & (256, 256) \\
    \quad Reward penalty coefficient $\lambda$ & 0.1 \\
    \midrule
    \textbf{BOSA} & \\
    \quad Temperature coefficient & 0.2 \\
    \quad Maximum log std & 2 \\
    \quad Minimum log std & $-20$ \\
    \quad Policy regularization coefficient $\lambda_{\text{policy}}$ & 0.1 \\
    \quad Transition coefficient $\lambda_{\text{transition}}$ & 0.1 \\
    \quad Threshold parameter $\epsilon, \epsilon'$ & $\log(0.01)$ \\
    \quad Value weight $\omega$ & 0.1 \\
    \quad CVAE ensemble size of the dynamics model & 5 \\
    \midrule
    \textbf{SRPO} & \\
    \quad Discriminator network & (256, 256) \\
    \quad Data selection ratio & 0.5 \\
    \quad Reward coefficient $\lambda$ & $\{0.1, 0.3\}$ \\
    \midrule
    \textbf{IGDF} & \\
    \quad Representation dimension & $\{16, 64\}$ \\
    \quad Contrastive encoder network & (256, 256) \\
    \quad Encoder pretraining steps & 7000 \\
    \quad Importance coefficient & 1.0 \\
    \quad Data selection ratio $\xi\%$ & 75\% \\
    \midrule
    \textbf{OTDF} & \\
    \quad CVAE training steps & $10000$ \\
    \quad CVAE learning rate  & 0.001 \\
    \quad Number of sampled latent variables $M$ & 10 \\
    \quad Standard deviation of Gaussian distribution & $\sqrt{0.1}$ \\
    \quad Cost function & cosine \\
    \quad Data selection ratio $\xi\%$ & 80\% \\
    \quad Policy coefficient $\beta$ & $\{0.1,0.5\}$ \\
    \midrule
    \textbf{STC} & \\
    \quad Correction threshold $\lambda$ & $\{1.0, 5.0\}$ \\
    \quad Reward gradient coefficient $\alpha$& 0.5 \\
    \quad Q-weighted loss coefficient $\beta$ &$\{0.5, 5.0\}$   \\

    \bottomrule
  \end{tabular}
\end{table}
\subsection{Hyperparameter Setup}
We summarize the specific hyperparameter configurations for each baseline method and STC in Table~\ref{tab:app_hyperparams}. For IQL, DARA, and BOSA, we employ a unified and fixed set of hyperparameters across all tasks.
For SRPO, we report the best performance by sweeping the reward coefficient $\lambda$ over the range \{0.1, 0.3\}. For IGDF, we set the data selection ratio $\xi$\% to 75\% and additionally tune the representation dimension over \{16, 64\}, reporting the best-performing configuration.
For OTDF, we adopt the hyperparameter settings provided in the official implementation, using a fixed $\xi$\% = 80\% and setting the policy coefficient $\beta$ to either 0.1 or 0.5 depending on the specific task.
For STC, we fix the reward gradient coefficient $\alpha$ at 0.5, sweep the correction threshold $\lambda$ over \{1.0, 5.0\}, and tune the Q-weighted loss coefficient $\beta$ in the range \{0.5, 5.0\}, reporting the best result for each environment.

\begin{algorithm}[!ht]
\caption{Selective Transition Correction (STC)}
\label{alg:algorithm}
\textbf{Input:} Source domain dataset ${D}_{\text{src}}$, target domain dataset ${D}_{\text{tar}}$, batch size $N$ \\
\textbf{Initialize:} policy $\pi_\phi$, value function $Q_\theta$, inverse policy model $f^{\rm inv}_\zeta$, forward dynamics model $f^{\rm fwd}_\xi$, reward model $r_{\nu}$, coefficients $\lambda, \alpha, \beta$
\begin{algorithmic}[1]
\STATE Train the inverse policy model $f^{\rm inv}_\zeta$ with $D_{\text{tar}}$ \textbf{ via Equation \eqref{eq:inv}}
\STATE Train the forward dynamics model $f^{\rm fwd}_\xi$ with $D_{\text{tar}}$ \textbf{ via Equation \eqref{eq:fwd}}
\STATE Train the reward model $r_{\nu}$ with $D_{\text{tar}}$ \textbf{ via Equation \eqref{eq:r_loss}}

\FOR{$i = 1, 2, \dots$}
    \STATE Sample a mini-batch $b_\text{src}:=\{(s_{\rm src},a_{\rm src},s'_{\rm src},r_{\rm src})\}$ with size $N/2$ from ${{D}}_\text{src}$
    \STATE Sample a mini-batch $b_\text{tar}=\{(s_{\rm tar},a_{\rm tar},s'_{\rm tar},r_{\rm tar})\}$ with size $N/2$ from ${{D}}_\text{tar}$
    \STATE Modify both the actions and rewards of source transitions to form $\widetilde{b}_{\rm src}=\{(s_{\rm src},\hat{a}_{\rm src},s'_{\rm src},\hat{r}_{\rm src})\}$ via:
    \begin{align}
        \nonumber
        \hat{a}_{\rm src} = f_\text{{inv}}(s_\text{{src}}, s_\text{{src}}'),\quad \hat{r}_{\rm src} = r_{\rm src} + \alpha \cdot \nabla_a r(s_{\text{src}}, a)^\top|_{a=a_{\rm src}} (\hat{a}_{\rm src} - a_{\text{src}})
    \end{align}
\STATE Compute \textit{dynamics discrepancies} \textbf{with Equation \eqref{eq:error}}
\STATE Select corrected source transitions \textbf{with Equation \eqref{eq:select}}
    \STATE Optimize the value function $Q_\theta$ \textbf{with Equation \eqref{eq:update_q}}
    \STATE Optimize the policy $\pi_\phi$ \textbf{with Equation \eqref{eq:update_p}}
\ENDFOR
\end{algorithmic}
\end{algorithm}

\section{Pseudocode and Details of STC}
\label{appsec:pesudocode}
In this section, we provide the detailed pseudocode of STC, as shown in Algorithm~\ref{alg:algorithm}.

\section{Additional Experimental Results}
\label{appsec:additionalresults}

\begin{table*}[!ht]
  \caption{\textbf{Performance comparison under friction shift}. med = medium, e = expert. The \textbf{Source} column means the source domain dataset, and the \textbf{Target} column indicates the target domain dataset quality. The normalized average scores in the target domain across 5 seeds are reported and $\pm$ captures the standard deviation. We highlight the best cell.}

  \label{tab:firction_results}
  \centering
  \footnotesize
  \begin{tabular}{ll|llllllllll}
    \toprule
         Source & Target & IQL & DARA & BOSA & SRPO & IGDF & OTDF & STC (ours) \\
    \midrule
      halfcheetah-med & med & \cellcolor{lightblue}\textbf{69.7}$\pm$1.3 & 66.2$\pm$4.1 & 68.5$\pm$1.2 & 66.7$\pm$2.1 & 66.1$\pm$1.9 & 66.8$\pm$0.6 & 67.4$\pm$1.0 \\
    
     halfcheetah-med & med-e & 66.8$\pm$0.6 & 60.4$\pm$11.9 & 69.2$\pm$0.6 & 66.8$\pm$3.9 & 68.6$\pm$0.6 & 61.3$\pm$5.4 & \cellcolor{lightblue}\textbf{69.4}$\pm$1.0 \\

     halfcheetah-med & expert  & 68.2$\pm$0.4 & 67.9$\pm$0.5 & \cellcolor{lightblue}\textbf{69.7}$\pm$1.3 & 67.1$\pm$2.3 & 68.2$\pm$2.5 & 68.3$\pm$0.6 & 67.3$\pm$0.6 \\

     hopper-med & med  & 24.5$\pm$1.7 & 24.2$\pm$2.2 & 25.4$\pm$1.9 & 26.5$\pm$2.0 & 27.2$\pm$4.3 & \cellcolor{lightblue}\textbf{29.3}$\pm$4.6 & 27.4$\pm$2.0 \\

     hopper-med & med-e & \cellcolor{lightblue}\textbf{26.4}$\pm$2.5 & 21.7$\pm$5.6 & 23.0$\pm$1.7 & 24.3$\pm$3.7 & 25.7$\pm$2.3 & 26.4$\pm$2.4 & 23.4$\pm$5.1 \\
   
     hopper-med & expert  & 21.5$\pm$1.1 & 24.6$\pm$2.5 & 25.7$\pm$0.8 & 21.1$\pm$2.7 & 23.6$\pm$3.8 & 25.9$\pm$1.1 & \cellcolor{lightblue}\textbf{41.6}$\pm$26.1 \\

     walker2d-med & med  & 72.6$\pm$6.2 & \cellcolor{lightblue}\textbf{73.9}$\pm$11.7 & 72.1$\pm$5.2 & 67.5$\pm$6.0 & 70.3$\pm$3.6 & 70.4$\pm$6.6 & 73.6$\pm$8.6 \\

    walker2d-med & med-e & 59.0$\pm$1.3 & 61.5$\pm$20.2 & 42.8$\pm$17.5 & 57.4$\pm$7.9 & 59.1$\pm$5.4 & 59.9$\pm$10.2 & \cellcolor{lightblue}\textbf{71.2}$\pm$5.8 \\

     walker2d-med & expert & 52.5$\pm$3.6 & 60.0$\pm$11.7 & 51.5$\pm$12.9 & 52.3$\pm$12.7 & 54.9$\pm$6.1 & 49.6$\pm$14.6 & \cellcolor{lightblue}\textbf{67.9}$\pm$6.6 \\

     ant-med & med & 58.2$\pm$2.3 & 58.7$\pm$2.0 & 57.6$\pm$4.0 & 56.9$\pm$2.5 & 55.2$\pm$3.2 & 58.3$\pm$0.2 & \cellcolor{lightblue}\textbf{60.2}$\pm$3.1 \\

     ant-med & med-e &59.3$\pm$2.5 & 58.3$\pm$1.0 & \cellcolor{lightblue}\textbf{60.5}$\pm$0.3 & 58.4$\pm$0.4 & 57.8$\pm$0.5 & 58.1$\pm$0.4 & 45.7$\pm$10.5 \\

     ant-med & expert & 58.2$\pm$0.3 & 58.3$\pm$0.5 & 59.3$\pm$1.8 & 57.2$\pm$2.0 & 58.3$\pm$0.2 & 56.5$\pm$2.3 & \cellcolor{lightblue}\textbf{60.6}$\pm$1.8 \\
    \midrule
    \multicolumn{2}{c|}{Total Score}&636.8 & 635.7 & 625.2 & 622.2 & 635.1 & 630.8 & \textbf{675.6} \\

    \bottomrule
  \end{tabular}

\end{table*}
\subsection{Results Under Friction Shift}

\label{appsec:friction_results}

We summarize the normalized score comparison of STC against other baselines under the friction shift tasks in Table \ref{tab:firction_results}. STC achieves the best overall performance across 12 tasks. We observe that in the friction shift task, the performance gap between different algorithms is relatively small, possibly due to the minor discrepancy between the source and target domains under this type of shift. Nevertheless, our method still outperforms all baselines in terms of the total score.

\subsection{Additional Visualization Results for STC Correction}
\label{appsec:visualization}

\begin{figure}
    \centering
    \includegraphics[width=0.35\linewidth]{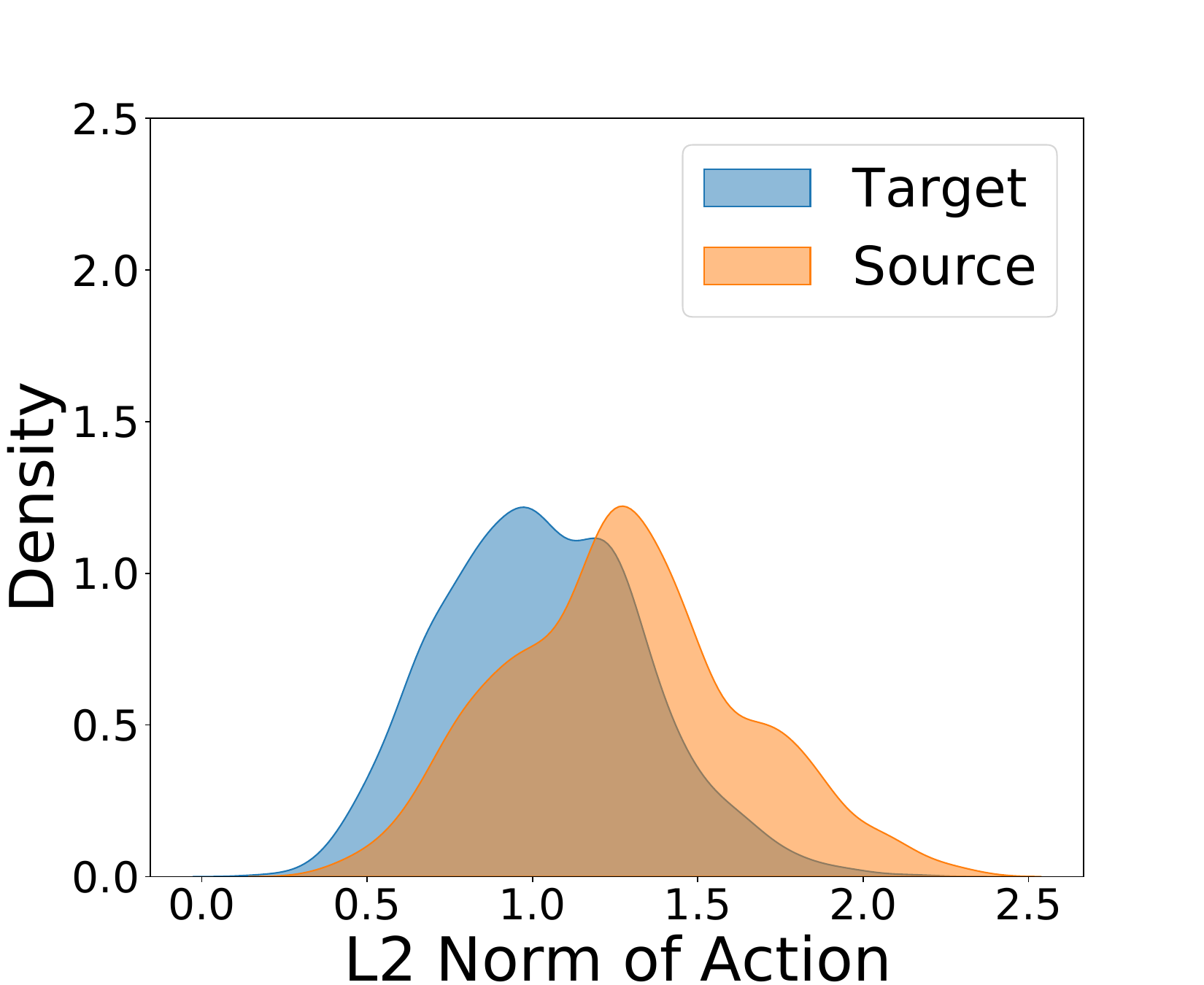}
    \includegraphics[width=0.35\linewidth]{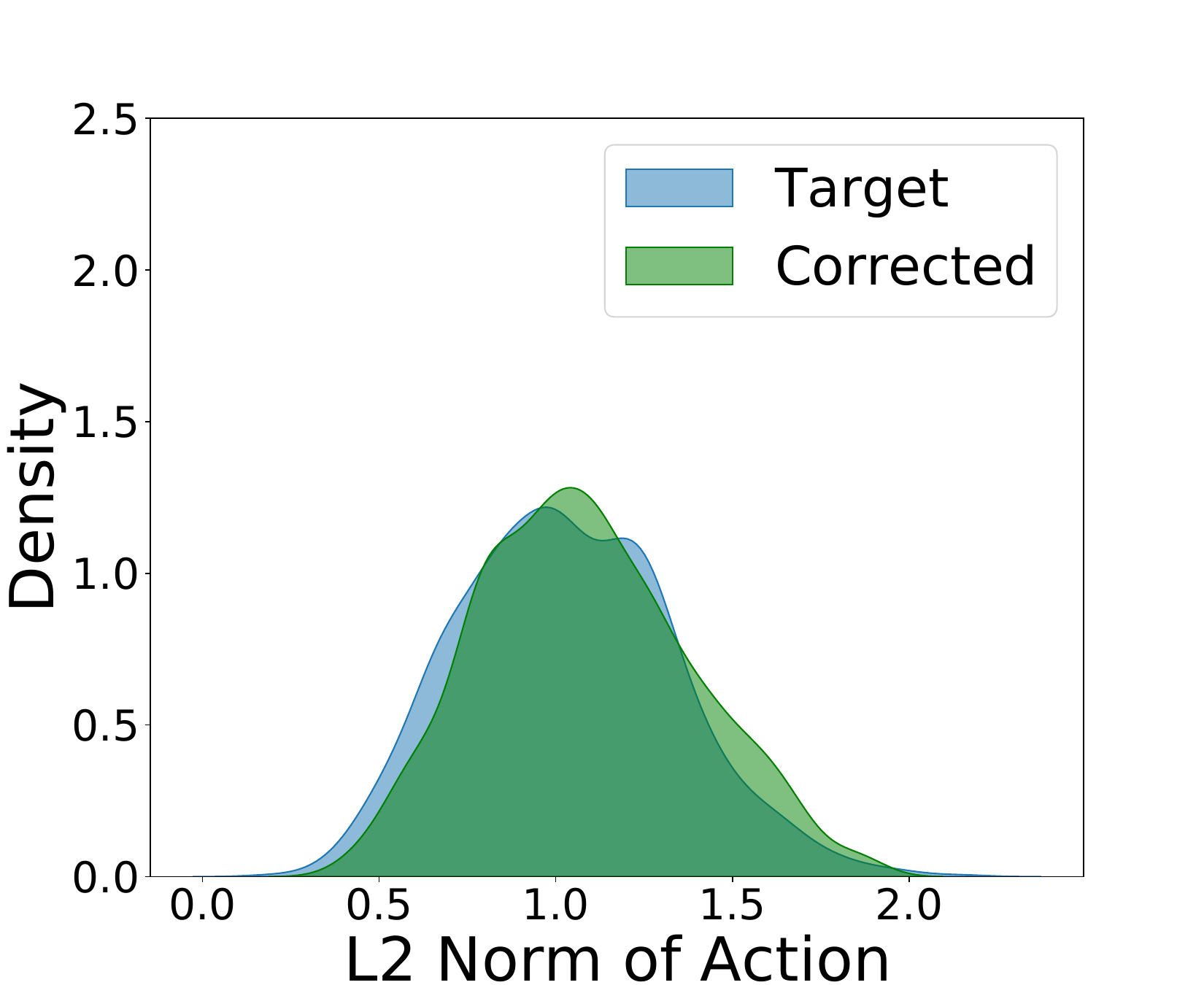}
 \caption{\textbf{Action distribution comparison on ant environment with gravity shift.} The left panel shows KDE curves comparing source domain actions and target domain actions, the right panel shows KDE curves comparing STC-corrected source actions with target actions.}
  \label{fig:app_visual}
\end{figure}

This section provides additional visualizations illustrating how STC improves the alignment between transition distributions in the source and target domains. Specifically, we include further visualizations on ant environment with gravity shift in Figure~\ref{fig:app_visual}. We first apply STC to the original source transitions to obtain corrected transitions. For each target transition, we find the nearest neighbor in the original source dataset based on the state pair \((s, s')\), and extract the corresponding action \(a_{\rm src}\). The matching corrected action \(\hat{a}_{\rm src}\) is then retrieved from the STC-processed dataset. We plot kernel density estimation (KDE) curves for both \(a_{\rm src}\) and \(\hat{a}_{\rm src}\), and compare them with the target domain action distribution. As shown in Figure~\ref{fig:app_visual}, we observe that the corrected distribution (green curves) aligns more closely with the target distribution (blue curves) than the original one (orange curves), demonstrating STC’s effectiveness in reducing the distribution gap.

\subsection{Ablation Study on Q-weighted Loss Coefficient}
The coefficient $\beta$ balances Q-value maximization and behavior cloning. We evaluate $\beta \in \{0.5, 5.0, 10.0\}$, as shown in Figure~\ref{fig:beta}. Some environments are sensitive to $\beta$, while others are not. We use $\beta = 0.5$ or $5.0$ across all tasks for good overall performance, with $5.0$ being the most common choice.
\begin{figure}[!ht]
    \centering
    \includegraphics[width=0.4\linewidth]{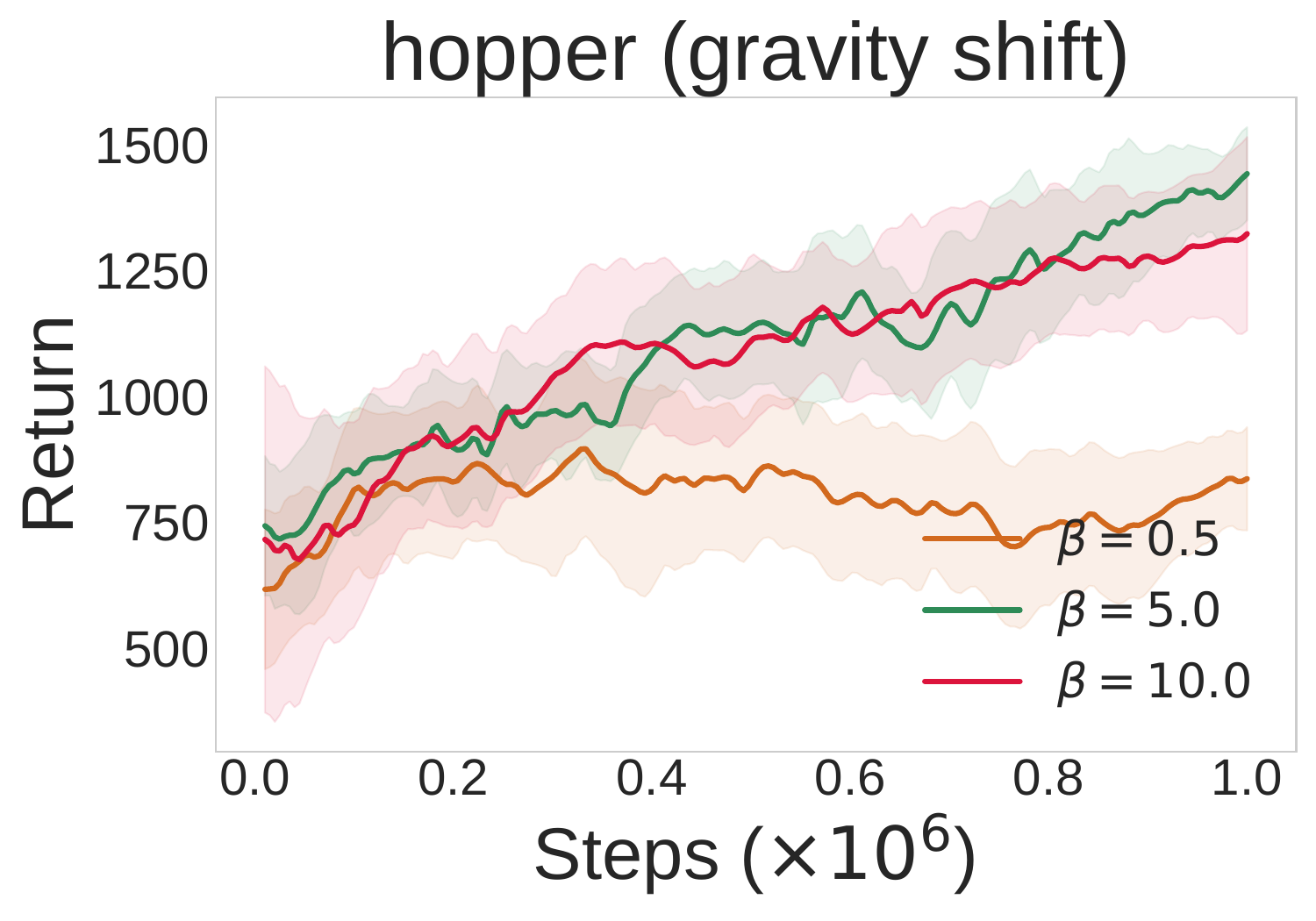}
    \includegraphics[width=0.4 \linewidth]{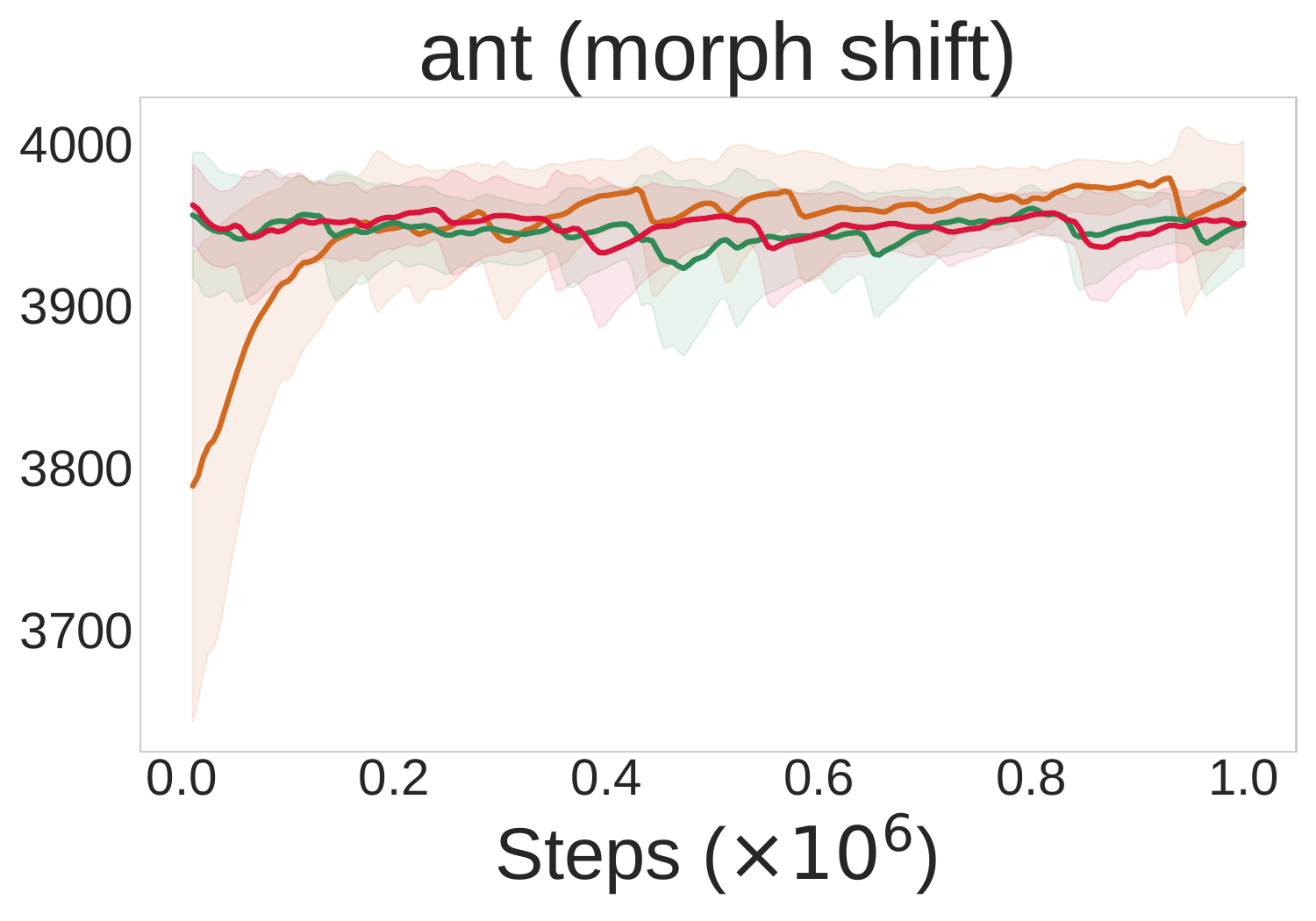}
  \caption{Q-weighted loss coefficient $\beta$.  We report target domain return results in two shift tasks with different $\beta$. The shaded region captures the standard deviation.}
   \label{fig:beta}
\end{figure}

 \begin{table*}[t]
    \centering
    \caption{Performance comparison on online settings.The normalized average scores in the target domain across 5
seeds are reported and ± captures the standard deviation. We highlight the best cell.}
    \label{tab:online_results}
    \setlength\tabcolsep{3pt}
    \begin{tabular}{ll|lllll}
        \toprule
        Type & Task & SAC & DARC & PAR & VGDF & STC \\
        \midrule
        \multirow{3}{*}{Gravity}& halfcheetah-gravity & 24.57$\pm$16.64 & \cellcolor{lightblue}\textbf{62.54}$\pm$3.78 & 59.09$\pm$3.97 & 43.16$\pm$25.53 & 57.45$\pm$5.47 \\
        &walker2d-gravity & \cellcolor{lightblue}\textbf{68.73}$\pm$6.50 & 6.19$\pm$0.70 & 67.35$\pm$9.97 & 62.00$\pm$5.08 & 65.58$\pm$6.07 \\
        &ant-gravity & \cellcolor{lightblue}\textbf{85.85}$\pm$1.50 & 40.63$\pm$4.45 & 82.85$\pm$4.49 & 51.79$\pm$20.77 & 83.36$\pm$5.12 \\
        &hopper-gravity & 96.79$\pm$2.69 & 11.16$\pm$1.70 & 91.52$\pm$17.01 & 90.83$\pm$17.11 & \cellcolor{lightblue}\textbf{98.44}$\pm$13.96 \\
        \midrule

        \multirow{8}{*}{Kinematic}&halfcheetah-footjnt-medium & 33.14$\pm$26.10 & 33.65$\pm$27.41 & \cellcolor{lightblue}\textbf{88.09}$\pm$3.13 & 39.67$\pm$17.70 & 58.37$\pm$23.43 \\
        &halfcheetah-footjnt-hard & 46.82$\pm$22.95 & 37.90$\pm$24.50 & 54.51$\pm$31.71 & 24.77$\pm$11.49 & \cellcolor{lightblue}\textbf{61.40}$\pm$24.30 \\
        &walker2d-footjnt-medium & 67.10$\pm$22.61 & 90.71$\pm$10.97 & 72.16$\pm$15.95 & 38.44$\pm$23.49 & \cellcolor{lightblue}\textbf{81.85}$\pm$6.50 \\
        &walker2d-footjnt-hard & 41.33$\pm$13.83 & 13.32$\pm$1.48 & 37.75$\pm$15.40 & 45.27$\pm$15.82 & \cellcolor{lightblue}\textbf{51.13}$\pm$11.66 \\
        &ant-anklejnt-medium & 110.10$\pm$3.89 & 84.36$\pm$6.14 & 115.00$\pm$5.60 & 55.34$\pm$11.97 & \cellcolor{lightblue}\textbf{115.51}$\pm$4.43 \\
        &ant-anklejnt-hard & \cellcolor{lightblue}\textbf{117.14}$\pm$2.85 & 66.54$\pm$21.62 & 114.33$\pm$2.94 & 59.58$\pm$16.91 & 108.40$\pm$3.39 \\
        &hopper-footjnt-medium & 84.38$\pm$30.13 & 14.47$\pm$0.43 & 98.28$\pm$2.08 & \cellcolor{lightblue}\textbf{105.72}$\pm$2.65 & 101.48$\pm$3.78 \\
        &hopper-footjnt-hard & 36.83$\pm$9.14 & 26.54$\pm$11.07 & 29.56$\pm$3.75 & \cellcolor{lightblue}\textbf{47.34}$\pm$23.57 & 29.62$\pm$6.27 \\
        \midrule
        \multicolumn{2}{c|}{Total Score} & 812.81 & 488.01 & 910.47 & 663.90 & \textbf{912.58} \\
        \bottomrule
    \end{tabular}
\end{table*}

\subsection{Extension to Online Learning Settings}
Though our method is designed for the offline cross-domain setting, the core idea of Selective Transition Correction is transferable to online scenarios where both the source and target domains are online. To evaluate our method in online settings, we develop an online variant of STC using SAC \cite{haarnoja2018softactorcritic} as the base algorithm. At each training step, we additionally collect an additional mini-batch of fresh interactions from the target environment and use this data to update our dynamics and reward models in real-time. The updated models are applied to correct the source domain data for the current iteration. We compare our method against several state-of-the-art online off-dynamics RL algorithms, including SAC \cite{haarnoja2018softactorcritic}, DARC \cite{eysenbach2021offdynamics}, PAR \cite{lyu2024cross} and VGDF \cite{Xu2023CrossDomainPA}, across multiple challenging environments from the ODRL \cite{lyu2024odrl} benchmark. We report the performance of the methods on representative gravity-shift and kinematic-shift tasks in Table \ref{tab:online_results}. The results show that STC maintains strong performance on online setting, achieving significant improvements over baseline methods across numerous tasks.



\section{Clarifications on Algorithm Design}
\label{app:Clarification on Algorithm Design}
In this section, we provide additional clarifications on two key design choices in our algorithm and discuss the motivation behind them.

\paragraph{Introducing forward dynamics model for selective correction mechanism.} Theoretically, a perfectly accurate inverse policy would be sufficient to correct source transitions and reduce dynamics mismatch. In practice, however, the limited target dataset (5,000 transitions) makes it difficult for the inverse policy model to achieve ideal performance, and inaccurate action corrections can introduce additional errors. To address this issue, rather than fully trusting all corrections produced by the inverse policy model, we introduce a forward dynamics model to enforce a more conservative selective correction mechanism. Specifically, the forward model evaluates the reliability of corrected transitions by comparing prediction errors and selectively accepts or rejects corrections accordingly, ensuring that only dynamically consistent corrections are applied. By mutually constraining each other, the inverse and forward models help reduce the impact of errors from any single model. This selective correction mechanism stabilizes the correction process and yields more reliable data for offline learning.

\paragraph{Using first-order Taylor approximation for reward correction.} We use a first-order Taylor expansion to approximate the reward of corrected actions instead of directly predicting rewards with a reward model. Direct reward prediction in off-dynamics offline RL is sensitive to model bias and distribution shift, which is more likely to cause instability and uncontrollable error accumulation. We directly use the inverse policy model for action correction because we can assess its reliability through the forward dynamics model via our selective correction mechanism. In contrast, we lack a comparable and reliable way to evaluate the bias of predicted rewards. 
The Taylor-based approximation provides a stable and conservative reward adjustment by leveraging local reward gradients around well-supported source actions. We further normalize and clip the gradients and introduce a scaling hyperparameter $\alpha$ to control the magnitude of reward correction, resulting in a bias-controlled and robust design for cross-domain offline RL.

\section{Compute Infrastructure}
\label{appsec:Compute Infrastructure}
We list the compute infrastructure that we use to run all algorithms adopted in this paper in Table \ref{tab:compute_infrastructure}.

\begin{table}[htbp]
\centering
\caption{Computing infrastructure used to run all algorithms evaluated in this paper.}
\label{tab:compute_infrastructure}
\begin{tabular}{ll}
\toprule
Component        & Specification                         \\
\midrule
CPU             & AMD EPYC 7452\\
GPU             & RTX3090×8           \\
Memory             &288GB                          \\

\bottomrule
\end{tabular}
\end{table}

\section{Computational Overhead}
\label{app:Computational Overhead}
In this section, we compare the computational overhead of different cross-domain offline RL algorithms. STC requires training the forward dynamics model and the inverse policy model, which introduces additional computational cost.  Many existing cross-domain offline RL methods share this issue. For example, DARA \cite{liu2022dara} requires training domain classifiers and constructing reward penalties, while OTDF \cite{lyu2025crossdomain} needs to solve optimal transport problem. To quantify the computational overhead, we measure the runtime of STC and these baseline methods (DARA \cite{liu2022dara}, OTDF \cite{lyu2025crossdomain}, SRPO \cite{xue2024state}) across two representative tasks under identical compute infrastructure (Appendix \ref{appsec:Compute Infrastructure}), all trained for 1M gradient steps. As summarized in Table \ref{tab:time_comparison}, the total runtime of STC remains competitive with the baselines. Since our algorithm achieves substantially better performance than the baselines, we consider this computational overhead acceptable.

\begin{table}[htbp]
    \centering
    \caption{Comparison of wall-clock training time (HH:MM:SS) for different methods.}
    \label{tab:time_comparison}
    \begin{tabular}{lcccc}
        \toprule
        Task & DARA & OTDF & SRPO & STC (Total) \\
        \midrule
        Hopper-gravity-shift & 1:59:04 & 2:37:23 & 1:46:05 & 1:49:39 \\
        HalfCheetah-gravity-shift & 2:08:29 & 2:50:07 & 2:02:36 & 2:02:51 \\
        \bottomrule
    \end{tabular}
\end{table}


\end{document}